\documentclass[10pt]{article} 
\usepackage[preprint]{tmlr}

\usepackage{amsmath,amsfonts,bm}

\def\eqref#1{equation~\ref{#1}}

\def\1{\bm{1}}

\DeclareMathAlphabet{\mathsfit}{\encodingdefault}{\sfdefault}{m}{sl}
\SetMathAlphabet{\mathsfit}{bold}{\encodingdefault}{\sfdefault}{bx}{n}

\usepackage{hyperref}
\usepackage{url}
\usepackage{amsthm}
\usepackage{graphicx}
\theoremstyle{definition}

\newtheorem{theorem}{Theorem}[section]

\usepackage{multirow}
\usepackage{makecell}
\usepackage{pifont}
\usepackage{subcaption} 
\usepackage{wrapfig}
\usepackage{tabularx}
\usepackage{arydshln}
\usepackage{booktabs} 
\usepackage{lipsum} 
\title{Uncertainty-quantified Pulse Signal Recovery from Facial Video using Regularized Stochastic Interpolants}

\newcommand{\VS}[1]{\textcolor{black}{#1}}

\author{\name Vineet R. Shenoy \email vshenoy4@jhu.edu \\
      \addr Department of Electrical and Computer Engineering\\
      Johns Hopkins University
      \AND
      \name Cheng Peng \email  xuz7wn@virginia.edu \\
      \addr School of Data Science\\
      University of Virginia
      \AND
      \name Rama Chellappa \email rchella4@jhu.edu \\
      \addr Department of Electrical and Computer Engineering\\
      Johns Hopkins University
      \AND
      \name Yu Sun \email ysun214@jhu.edu\\
      \addr Department of Electrical and Computer Engineering\\
      Johns Hopkins University
      }

\newcommand\norm[1]{\left\lVert#1\right\rVert}

\newcommand{\cmark}{\ding{51}}
\newcommand{\xmark}{\ding{55}}%
\def\month{MM}  
\def\year{YYYY} 
\def\openreview{\url{https://openreview.net/forum?id=XXXX}} 

\begin{document}

\maketitle

\begin{abstract}
Imaging Photoplethysmography (iPPG), an optical procedure which recovers a human's blood volume pulse (BVP) waveform using pixel readout from a camera, is an exciting research field with many researchers performing clinical studies of iPPG algorithms. While current algorithms to solve the iPPG task have shown outstanding performance on benchmark datasets, no state-of-the art algorithms, to the best of our knowledge, performs test-time sampling of solution space, precluding an uncertainty analysis that is critical for clinical applications. We address this deficiency though a new paradigm named \textit{Regularized Interpolants with Stochasticity for iPPG (RIS-iPPG)}. Modeling iPPG recovery as an inverse problem, we build probability paths that evolve the camera pixel distribution to the ground-truth signal distribution by predicting the instantaneous flow and score vectors of a time-dependent stochastic process; and at test-time, we sample the posterior distribution of the correct BVP waveform given the camera pixel intensity measurements by solving a stochastic differential equation. Given that physiological changes are slowly varying, we show that iPPG recovery can be improved through regularization that maximizes the correlation between the residual flow vector predictions of two adjacent time windows. Experimental results on three datasets show that RIS-iPPG provides superior reconstruction quality and uncertainty estimates of the reconstruction, a critical tool for the widespread adoption of iPPG algorithms in clinical and consumer settings. 
\end{abstract}

\section{Introduction}\label{sec:introduction}
\begin{figure*}
    \centering
    \includegraphics[width=0.79\textwidth]{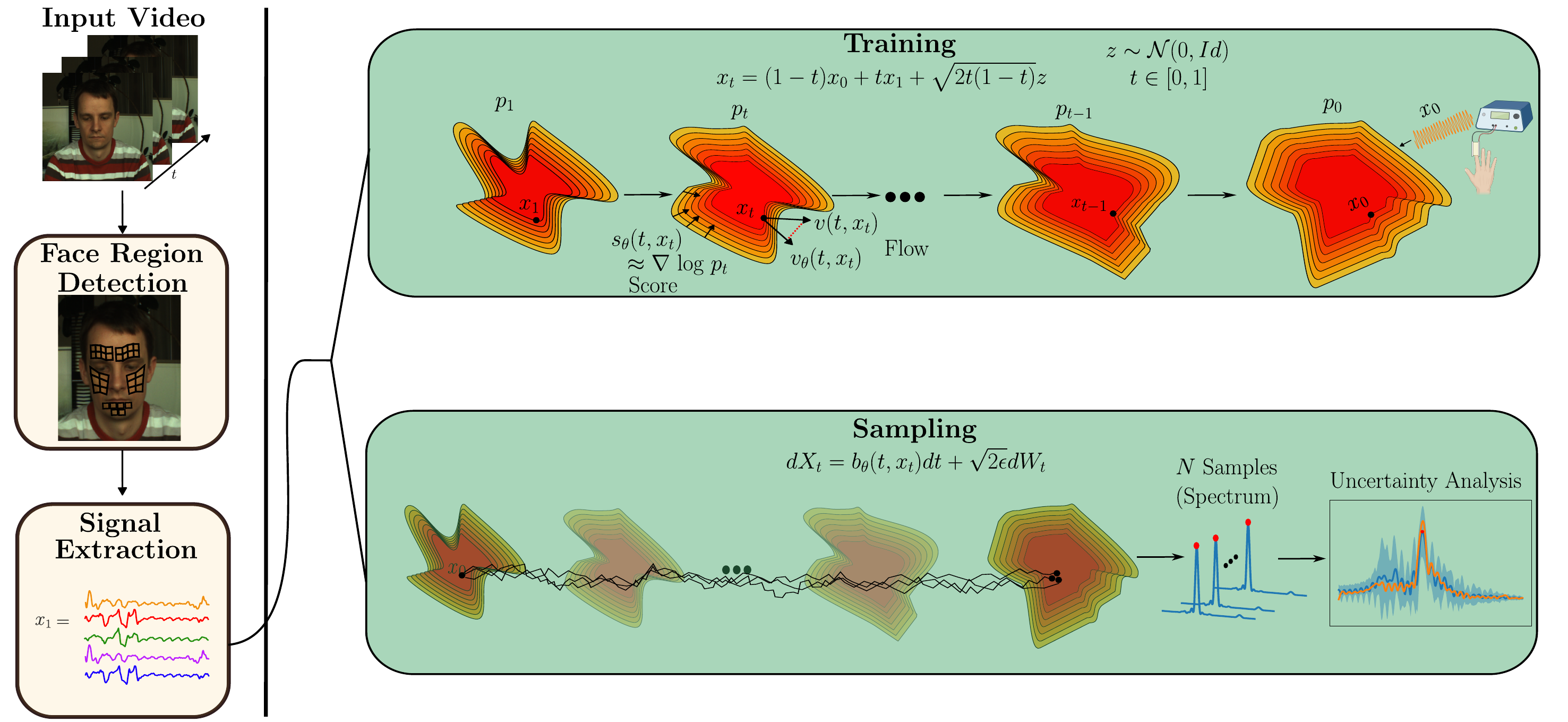}
    \caption{We first preprocess the video to extract a signal estimate from various facial regions. During training time, we learn the score and flow  between the measurement distribution and the ground-truth distribution, regularized based on the temporal characteristics of the signal. During test time, we solve an SDE with the measurement as the initial condition, and sample solution space. We then perform an uncertainty analysis.}
    \label{fig:main_fig}
\end{figure*}

Vital signs estimation using cameras has recently received strong interest in the research community. Extending photoplethysmography (PPG)---the technique in which a light is shined transdermally through the skin, the reflections of which capture volumetric changes due to blood flow---imaging Photoplethysmography (iPPG) seeks to observe the same volumetric changes using a non-contact imager of the skin, typically an RGB camera. The pixel intensity of the camera, under mild assumptions and noise,
captures the skin pigmentation changes due to blood flow. Current iPPG algorithms that denoise the camera signal to estimate the BVP signal are based on traditional signal processing or deep learning methods. Traditional signal processing methods~\cite{de2013robust,de2014improved,poh2010non,lewandowska2012measuring,nowara2021benefit} recover signals in training-free paradigms by assuming inherent signal structure and priors---whether it be statistical independence~\cite{poh2010non}, uncorrelated signals~\cite{lewandowska2012measuring}, color demixing~\cite{de2013robust,de2014improved}, or Fourier-based sparsity~\cite{nowara2020near}.
Deep learning methods generally perform better, but require training: supervised learning methods use synchronized facial video and contact PPG, and assume that the pulse signal structure is best learned though a model that maps from video to PPG data. Advances in self-supervised~\cite{gideon2021way,sun2024contrast,yue2023facial,speth2023non,liu2024rppgmae} algorithms learn using facial video data only, obviating contact PPG, and achieve competitive performance with fully supervised methods. These algorithmic advances on lab data have led to  clinical validation studies of iPPG algorithms~\cite{wangClinicalinfant} in neonatal care units, or in emergency departments for acute trauma injuries~\cite{00006534-990000000-02657}.

However, \VS{previous iPPG algorithms are deficient at rationalizing the results at test time for end users like clinicians.  A previous study~\cite{tonekaboni2019clinicians} interviewed clinicians regarding their \textit{trust} of machine learning models, which noted that ``metrics such as reliability, specificity, and sensitivity were important for the initial uptake of an AI tool, [but] a critical factor for continued usage was whether the tool was repeatedly successful in prognosticating patient's condition in [the doctor's] personal experience"}. Each of the aforementioned algorithms achieved state-of-the-art performance on population level metrics, but to the best of our knowledge, \VS{no previous iPPG algorithm} samples the solution space of potential iPPG signals at test time \textit{for each test sample}. Sampling the solution space would allow for \textit{uncertainty quantification} for each individual test-time sample, a crucial tool for eventual clinical adoption of such algorithms~\cite{begoli2019need, tonekaboni2019clinicians}. \VS{This type of analysis is especially important on protected attributes like skin tone and gender}. 

To address uncertainty quantification, we propose \textit{Regularized Interpolants with Stochasticity for imaging Photoplethysmography (RIS-iPPG)}, a flow-based diffusion model framework that learns a probability path from the distribution of camera pixel intensity signals to blood volume pulse signals, and allows for posterior sampling and \textit{uncertainty estimation} of the recovered pulse signals. We achieve such advances by formulating pulse signal recovery as an inverse problem with coupled camera measurements and ground-truth signals. We then learn the drift coefficient of a Stochastic Differential Equation (SDE)~\cite{albergo2023stochasticinterpolantsunifyingframework} that maps the distribution of camera measurements to the distribution of ground-truth signals, implemented in practice by learning flow and score vectors for the training data. However,  unregularized flow models do not typically yield best predictions. Given that physiological changes in blood volume are usually slowly varying~\cite{nichols2022mcdonald}, we propose to regularize the predicted flow by maximizing the correlation between the residual flow vectors (i.e. ground-truth flow minus predicted flow) from two adjacent time windows. After training our regularized model, at inference time we extract a pulse signal estimate from facial video and repeat it $N$ times as our initial condition, where $N$ is user specified,  and solve for the pulse signal via the aforementioned SDE. Given the $N$ signal estimates, we perform an uncertainty analysis of pulse signal recovery from a facial video.

In summary, our contributions are as follows:
\begin{itemize}
    \item We formulate pulse signal recovery from video as a posterior sampling method using flow-based diffusion models. Using the framework of stochastic interpolants, we learn the flow and score vectors that, when integrated into the drift coefficient of an SDE, transform the camera pixel signal to the blood volume pulse signal.
    \item We propose a Residual Correlation Loss (RCL) that maximizes the correlation between the residuals of predicted flow vectors from two overlapping, adjacent time windows. We show that this regularization can lead to better recovery results.
    \item We evaluate our algorithm using three datasets and perform test-time sampling of solution space. We show that even when our final prediction is incorrect, our sampling procedure highlights other possible solutions. We are able to capture the modes of the distribution more effectively, while also minimizing the uncertainty around frequency bins that are not of interest. \VS{Finally, we evaluate the quality of our uncertainty estimates on protected attributes of the data, establishing the first baselines for uncertainty calibration for the iPPG task.}
\end{itemize}

\section{Related Works}\label{sec:related_works}
\subsection{Imaging Photoplethysmography}\label{subsec:iPPG_related_works}

After preliminary investigations~\cite{wu2000photoplethysmography} showed that peripheral blood volume could be measured using an RGB imager, signal processing algorithms and later deep learning algorithms have been proposed to recover the blood volume signal in noise. Early signal processing methods, modeling the camera signal as a mixture of signals one of which was the pulse, demixed the signals by assuming the pulse signal to be statistically independent of the mixture (Independent Component Analysis)~\cite{lewandowska2012measuring} or assumed signals could be demixed along the directions of maximum variance (Principal Component Analysis)~\cite{poh2010non}. Recognizing skin reflection properties were critical for pulse signal recovery, both \cite{de2013robust} and \cite{wang2016algorithmic} performed skin tone corrections before projecting the signal onto an optimal plane for recovery. Other researchers viewed iPPG recovery as an optimization problem and imposed explicit sparsity~\cite{nowara2020near} and low-rank~\cite{tulyakov2016self} constraints on the recovered signal. 

Recent iPPG research is dominated by deep learning methods, which have shown improved performance compared to model-based methods. Underlying all these methods is the assumption that the non-linear noise and imaging processes that modulate the pulse signal from the body to the camera can be learned through sophisticated neural networks. Using paired video and ground-truth data, previous works developed techniques such as frame differencing~\cite{chen2018deepphys}, temporal shift modules~\cite{liu2020multi}, spatiotemporal CNN~\cite{yu2019remote}, and transformers~\cite{yu2022physformer} to extract the rPPG signals; these architectures were further adapted by others to learn noise profiles~\cite{nowara2021benefit, liu2023robust} for better signal denoising. Recent algorithmic advances have demonstrated that deep learning-based iPPG extractors can be learned without a supervisory PPG signal~\cite{gideon2021way, yue2023facial, sun2024contrast, liu2024rppgmae} and have achieved competitive performance with fully supervised methods. Researchers have also explored newer problem domains for iPPG such as federated learning~\cite{liu2022federated}, few-shot learning~\cite{liu2021metaphys}, and on-device iPPG recovery~\cite{liu2023efficientphys}. 


\subsection{Flow-based Diffusion Models}\label{subsec:flow_related_works}
The inspiration for density estimation and transport-based sampling was founded on Gaussianizing data through some transformation, and undoing that transformation to recover the distribution~\cite{tabak2010density, chen2000gaussianization}. A few works obtained the transformation as the solution of an ODE~\cite{chen2018neural, grathwohl2019scalable}, of which the drift coefficient could be learned through neural networks. However, learning the drift is intractable at large scale due to the simulation of the ODE for learning. While some other works proposed to regularize the path~\cite{finlay2020train, onken2021ot, tong2020trajectorynet}, many problems persisted.

Others took a stochastic view of the problem, modeling the transformation of a data distribution to a Gaussian as the evolution of of an Ornstein-Ulhenbeck (OU) process. Traversing this process forward in time simply involves adding Gaussian noise, while reversing this process can be done if the gradient of the log of the time-dependent data density is available~\cite{hyvarinen2005estimation, vincent2011connection}: this quantity, called the score function, could be estimated via least-squares regression~\cite{song2019generative}. The major drawback of this method was its reliance on the OU process and Gaussians; while some works used bridges to map between arbitrary distributions, these formulations were complex and inexact.

Recent work has introduced simulation-free methods for mapping between two arbitrary probability distributions. The key idea is that this mapping can happen gradually over time as one distribution transforms to another~\cite{lipman2022flow, tong2023conditional}. This can be generalized to stochastic dynamics as well~\cite{albergo2023stochasticinterpolantsunifyingframework}, which defines a stochastic process that maps from one point to another. In either the deterministic or stochastic versions, the goal is to learn the instantaneous change of the time-dependent distribution towards the target, which can be predicted via a neural network and is known as the "flow". In the stochastic case, the score is learned as well. After both are learned, the drift coefficient of the corresponding ODE/SDE can be learned, after which off-the-shelf solvers can solve the differential equation given an initial condition, mapping the point from one distribution to another. These techniques can learn the time-dependent vital sign trajectories of patients in the ICU~\cite{zhang2025trajectoryflowmatchingapplications}, generate new types of inorganic crystalline materials~\cite{hoellmer2025openmaterialsgenerationstochastic}, and learn the manifold of cellular dynamics~\cite{tong2023conditional}.

\section{Background: Stochastic Interpolants}\label{sec:background}

Our goal is to link \VS{two related probability distributions, the camera pixel distribution and the ground-truth pulse distribution}, and build a time-dependent probability path between them. The recently proposed work of Stochastic Interpolants~\cite{albergo2023stochasticinterpolantsunifyingframework} achieves this in finite time, and exactly, by defining a stochastic process that \VS{smoothly interpolates from a sample in one distribution to a sample in another distribution}. The key goal is to learn, through neural networks,  the instantaneous flow and score at all points interpolated between our two distributions. If the flow and score can be learned effectively, then we can take a series of steps (i.e. solving an SDE) that maps an initial condition (i.e. camera pixel signal) from one distribution to a point in the other distribution (i.e. blood volume pulse signal). Let our two arbitrary distributions be $p_0$ and $p_1$, let $\mathbf{x_0} \sim p_0$ and $\mathbf{x_1} \sim p_1$. \VS{Consider the following stochastic interpolant:}

\begin{equation}\label{eq:base_interpolant}
    \mathbf{x}_t = I(t, \mathbf{x}_0, \mathbf{x}_1) + \gamma(t)\mathbf{z},  \text{ where } \mathbf{z} \sim \mathcal{N}(0, Id), t\in[0, 1]
\end{equation}

\VS{The function $I(t, \mathbf{x}_0, \mathbf{x}_1)$ smoothly interpolates between the points $\mathbf{x}_0$ and $\mathbf{x}_1$ as a function of $t$ such that $\mathbf{x}_{t=0} = \mathbf{x}_0$ and $\mathbf{x}_{t=1} = \mathbf{x}_1$, while $\gamma(t)\mathbf{z}$ is an added noise term. To bridge between these points, we would like to know the instantaneous change of $\mathbf{x}_t$, which if learned for all $t$, would enable us to transport $\mathbf{x}_1$ to $\mathbf{x}_0$ or vice versa via an SDE.}  The instantaneous change of the interpolant $\mathbf{x}_t$ with respect to time, $\mathbf{b}(t,\mathbf{x})$, is  known as velocity, and the score of the distribution at a time $t$, $\mathbf{s}(t,\mathbf{x})$,  is

\begin{equation}
    \mathbf{b}(t,\mathbf{x}) = \mathbb{E}[\frac{\partial}{\partial t}\mathbf{x_t} | \mathbf{x_t =x}],  \mathbf{s}(t,\mathbf{x}) = \nabla \text{ log } p_t(\mathbf{x}) = -\gamma^{-1}(t) \mathbb{E}(\mathbf{z} | \mathbf{x_t=x})
\end{equation}

It is still to be shown, however, that these quantities can be used to form a valid drift coefficient of an SDE that maps probability mass from one distribution to another. We repeat below the theorem of Albergo which showed that this interpolant, and associated flow and score, satisfy both the continuity equation and a family of Fokker-Planck equations, allowing us to build the drift coefficient of an SDE that solves the mapping between the two distributions.

\begin{theorem}{(Theorem 2.6 of~\cite{albergo2023stochasticinterpolantsunifyingframework})}\label{eq:si_theorem}
The probability distribution of the interpolant $\mathbf{x}_t$ is absolutely continuous with respect to the Lebesgue measure at times $t\in[0,1]$ and solves the transport equation
\begin{equation}
    \frac{\partial}{\partial t} p_t + \nabla \cdot (\mathbf{b}p) = 0 
\end{equation}

In addition, the forward and backward Fokker-Planck equations are satisfied

\begin{equation}
    \frac{\partial}{\partial t} p_t + \nabla \cdot (\mathbf{b}_Fp) = \epsilon(t)\Delta p, \mathbf{b}_F = \mathbf{b}(t,\mathbf{x}) + \epsilon(t)\mathbf{s}(t,\mathbf{x}) 
\end{equation}
\begin{equation}
    \frac{\partial}{\partial t} p_t + \nabla \cdot (\mathbf{b}_Bp) = -\epsilon(t)\Delta p, \mathbf{b}_B = \mathbf{b}(t,\mathbf{x}) - \epsilon(t)\mathbf{s}(t,\mathbf{x}) 
\end{equation}

where $\epsilon(t)$ is some noise schedule.
\end{theorem}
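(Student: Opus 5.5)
We prove the transport equation through the characteristic function of $\mathbf{x}_t$, and then obtain both Fokker--Planck equations from it by adding and subtracting a Laplacian term.

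\textbf{Absolute continuity.} For $t\in(0,1)$ with $\gamma(t)>0$, $\mathbf{x}_t$ is the sum of $I(t,\mathbf{x}_0,\mathbf{x}_1)$ and an independent Gaussian $\gamma(t)\mathbf{z}$. Its law is therefore the convolution of the law of $I$ with a nondegenerate Gaussian, and so has a smooth, strictly positive density
\begin{equation*}
p_t(\mathbf{x})=\mathbb{E}\big[\phi_{\gamma(t)}(\mathbf{x}-I(t,\mathbf{x}_0,\mathbf{x}_1))\big],
\end{equation*}
where $\phi_\gamma$ is the Gaussian density with covariance $\gamma^2 Id$. At the endpoints $t=0,1$ we have $p_t=p_0$ or $p_1$, so there we assume $p_0,p_1$ have densities, as in the cited work. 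This explicit formula also gives the score: differentiating under the expectation,
\begin{equation*}
\nabla p_t(\mathbf{x})=-\gamma^{-1}(t)\,\mathbb{E}\big[\mathbf{z}\,\phi_{\gamma(t)}(\cdots)\big]/\gamma(t)\cdot\gamma(t),
\end{equation*}
which after Bayes' rule becomes $\nabla\log p_t=-\gamma^{-1}\mathbb{E}[\mathbf{z}\mid\mathbf{x}_t=\mathbf{x}]$. This confirms the second identity for $\mathbf{s}$ stated before the theorem.

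\textbf{Transport equation.} We work weakly, via the characteristic function $g(t,\mathbf{k})=\mathbb{E}[e^{i\mathbf{k}\cdot\mathbf{x}_t}]$. Assuming $I$ and $\gamma$ are $C^1$ and $\mathbb{E}|\partial_t I|<\infty$, we differentiate under the expectation:
\begin{equation*}
\partial_t g=i\mathbf{k}\cdot\mathbb{E}\big[\partial_t\mathbf{x}_t\,e^{i\mathbf{k}\cdot\mathbf{x}_t}\big]
= i\mathbf{k}\cdot\mathbb{E}\big[\mathbf{b}(t,\mathbf{x}_t)e^{i\mathbf{k}\cdot\mathbf{x}_t}\big].
\end{equation*}
The second equality is the tower property, conditioning on $\mathbf{x}_t$ and using the definition of $\mathbf{b}$. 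The right-hand side is the Fourier transform of $-\nabla\cdot(\mathbf{b}p_t)$, so inverting gives $\partial_t p_t+\nabla\cdot(\mathbf{b}p_t)=0$ in the sense of distributions. Positivity and smoothness of $p_t$ then make $\mathbf{b}p_t$ regular enough to read the equation classically. The same argument can be run with test functions: compute $\frac{d}{dt}\mathbb{E}[\varphi(\mathbf{x}_t)]$ and integrate by parts.

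\textbf{Fokker--Planck equations.} These follow algebraically from the transport equation and the identity $p\,\mathbf{s}=p\nabla\log p=\nabla p$. It gives
\begin{equation*}
\nabla\cdot(\epsilon\mathbf{s}p)=\epsilon\Delta p .
\end{equation*}
Adding $\epsilon\Delta p$ to both sides of the transport equation produces $\partial_t p+\nabla\cdot((\mathbf{b}+\epsilon\mathbf{s})p)=\epsilon\Delta p$, which is the forward equation with drift $\mathbf{b}_F$. Subtracting instead produces $\partial_t p+\nabla\cdot((\mathbf{b}-\epsilon\mathbf{s})p)=-\epsilon\Delta p$, the backward equation with drift $\mathbf{b}_B$. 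Here $\epsilon(t)\ge0$ is arbitrary, and we take it to vanish wherever $\gamma(t)=0$, so that $\mathbf{s}$ is only needed where it is defined.

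\textbf{Main obstacle.} The algebra above is routine; the difficulty is regularity. We must justify differentiating under the expectation, which needs moment and integrability assumptions on $\partial_t I$ and $\dot\gamma\mathbf{z}$. We must also handle the endpoints, where $\gamma(t)=0$ and the score can blow up like $\gamma^{-1}$. I would first prove everything on $[\delta,1-\delta]$, where the Gaussian smoothing controls all terms. I would then pass $\delta\to0$ in the weak formulation, using continuity of $t\mapsto p_t$ in the narrow topology together with the integrability of $\mathbf{b}$ and of $\epsilon\mathbf{s}$.
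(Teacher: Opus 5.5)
Your argument is correct, and it takes a different and more complete route than the paper. The paper's appendix proof uses only the identity $\epsilon\Delta p=\epsilon\nabla\cdot(p\nabla\log p)=\epsilon\nabla\cdot(\mathbf{s}p)$, substituted into the forward Fokker--Planck equation to recover the transport equation. That shows the three equations are equivalent, but because it starts from the Fokker--Planck equation it establishes none of them. Absolute continuity and the transport equation itself are simply taken from Albergo et al.

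Your Fokker--Planck step is the same identity, used in the right direction: transport $\Rightarrow$ Fokker--Planck, by adding or subtracting $\epsilon\Delta p$. You also supply the content the paper omits:
\begin{itemize}
\item positivity, smoothness and absolute continuity of $p_t$ on $(0,1)$, from the Gaussian convolution;
\item a derivation of $\partial_t p_t+\nabla\cdot(\mathbf{b}p_t)=0$ directly from $\mathbf{x}_t$ and the definition $\mathbf{b}=\mathbb{E}[\partial_t\mathbf{x}_t\mid\mathbf{x}_t]$, via the characteristic function and the tower property. This is essentially how Albergo et al.\ prove it.
\end{itemize}
The paper's version buys only brevity; yours buys an actual proof and an explicit list of hypotheses.

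You are right that absolute continuity at $t=0,1$ cannot be proved and must be assumed. This is not a formality here. In the paper's pipeline $\mathbf{x}_0$ is the finger BVP copied into all five columns of $\mathbb{R}^{T\times 5}$, so $p_0$ is concentrated on a $T$-dimensional subspace and has no Lebesgue density. Only the interior statement survives, thanks to the Gaussian term.

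Three small repairs:
\begin{itemize}
\item \textbf{Score display.} Your formula for $\nabla p_t$ is garbled: since $\mathbf{z}$ is independent of $I$, $\mathbb{E}[\mathbf{z}\,\phi_{\gamma}(\mathbf{x}-I)]=0$. What you mean is $\nabla p_t(\mathbf{x})=-\gamma^{-2}\mathbb{E}[(\mathbf{x}-I)\phi_\gamma(\mathbf{x}-I)]$, together with $p_t(\mathbf{x})\,\mathbb{E}[\mathbf{z}\mid\mathbf{x}_t=\mathbf{x}]=\gamma^{-1}\mathbb{E}[(\mathbf{x}-I)\phi_\gamma(\mathbf{x}-I)]$.
\item \textbf{Regularity of $\gamma$.} The hypothesis $\gamma\in C^1$ fails for the paper's $\gamma(t)=\sqrt{2t(1-t)}$. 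Its derivative blows up like $t^{-1/2}$ near $t=0$ and like $(1-t)^{-1/2}$ near $t=1$, so the domination needed to differentiate under the expectation fails near the endpoints. Your $[\delta,1-\delta]$ plan copes with this. You can avoid the limit altogether, as Albergo et al.\ do, by factoring $g(t,\mathbf{k})=\mathbb{E}[e^{i\mathbf{k}\cdot I}]\,e^{-\gamma^2|\mathbf{k}|^2/2}$. Then only $\frac{d}{dt}\gamma^2=2(1-2t)$ appears, and the Gaussian part contributes the term $-\gamma\dot\gamma\,\mathbf{s}$ to $\mathbf{b}$ with a bounded coefficient.
\item \textbf{Noise schedule.} Forcing $\epsilon$ to vanish where $\gamma=0$ is unnecessary, and it would exclude the constant $\epsilon$ the paper actually uses. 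On $(0,1)$ the Fokker--Planck equations hold for every $\epsilon$. The endpoints are invisible in the time-integrated weak form; alternatively, if $p_0,p_1$ are assumed to be positive $C^1$ densities, they are covered by $\mathbf{s}=\nabla\log p_0$ and $\nabla\log p_1$.
\end{itemize}
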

\qed

As a consequence of this theorem, if we can learn the flow and score of the interpolant bridging our two data distributions, we can build a drift coefficient and solve a SDE that smoothly transforms a data point from one distribution to another.

\section{Problem Formulation and Approach}\label{sec:approach}

\subsection{The iPPG signal model}\label{subsec:signal_model}

The arterial tree can be modeled as a branching system of elastic tubes that carry blood to the body~\cite{nichols2022mcdonald}. Pressure differences at the ends of the tubes, induced by pump a called the heart, generate the flow of the liquid through the tubes~\cite{nichols2022mcdonald}. The flow can be measured using optical sensors which shine light transdermally and record reflection of the light corresponding to the blood volume; this technique is called photoplethsmography~\cite{alian2014photoplethysmography} and is common in consumer smartwatches. Imaging Photoplethysmography or remote Photoplethysmography aims to replicate PPG but replaces the contact-based optical sensor with a non-contact camera~\cite{mcduff2023camera}.

This imaging setup can be modeled as two processes. Given the volumetric flow signal $\mathbf{x}_0$, the first process generates an analog signal on the skin via reflections of incident illumination on $\mathbf{x}_0$ modulated by physical structures in the dermis/epidermis (known as the physiological forward process) and physiological noise. The second process converts the analog skin signal to a digital signal; in non-contact iPPG, the analog signal is modulated by digital camera hardware (known as the forward imaging process) and imaging noise. To simplify the model, we unify both processes to represent a digital camera signal $\mathbf{x}_1$ as:
\begin{equation}\label{eq:sig_model}
    \mathbf{x}_1 = A(\mathbf{x}_0) + \mathbf{n}
\end{equation}
where the unknown $A(\cdot)$ models the unified forward processes composed of the imaging forward processes acting on the result of the physiological forward process and noise, and $\mathbf{n}$ is the sum of the physiological and imaging noise. \VS{Previous work models the recovery of $\mathbf{x}_0$ as a regularized optimization problem with an approximate forward operator. In the next section, we show the deficiency of this approach, and the need for a new paradigm.}


\subsection{Preliminary Investigation}\label{subsec:preliminary_investigation}

\VS{We begin by generating $(\mathbf{x}_0, \mathbf{x}_1)$ pairs from the data. Given a video signal of time $T$, we extract time-domain pixel intensity signals from five regions of the face to generate $\mathbf{x}_1  \in \mathbb{R}^{T\times5}$ as in Figure~\ref{fig:main_fig}; the corresponding ground-truth BVP signal measured from the finger is repeated five times to generate $\mathbf{x}_0  \in \mathbb{R}^{T\times5}$, resulting in paired data $(\mathbf{x}_0, \mathbf{x}_1)$}. Our goal, given the camera pixel intensity signal $\mathbf{x}_1$, is to recover the signal $\mathbf{x}_0$ \textit{and} sample the space of possible solutions. \VS{Starting from Equation~\ref{eq:sig_model}, we decided to follow a regularized optimization scheme. Using $A = \mathbf{F}^{-1}$~\cite{shenoy2023unrolled, nowara2020near}, where $\mathbf{F}^{-1}$ is the inverse Fourier Transform, our initial investigation modeled signal recovery as a regularized optimization problem}, $\min_{\mathbf{x}} \frac{1}{2}\norm{\mathbf{x}_1 - A(\mathbf{x})}_2^2 + \lambda\cdot h(\mathbf{x})$, where $h(\mathbf{x}) = -\text{log } p(\mathbf{x})$ is the log of the data distribution. This is commonly referred to as the score function~\cite{song2019generative}, and is learned through neural networks. The optimization problem can then be solved with posterior sampling via the Plug-and-Play Monte-Carlo Approach of~\cite{sun2024provable}.  Our preliminary results, labeled as \textit{PMC-iPPG}, are in Table~\ref{tab:pmc_ippg_testing_main}.

\begin{table}[h!]
    \centering
    \caption{Applying PMC~\cite{sun2024provable} to the iPPG task, and comparing against regularized optimization methods with sparse priors~\cite{nowara2020near} and learned priors~\cite{shenoy2023unrolled}. Formulating the iPPG task as regularized optimization problem with a plugged-in prior is ineffective.}
    \resizebox{0.60\textwidth}{!}{\begin{tabular}{c|c|c|c}
         \toprule
         Method & Sampling?  &  MAE (bpm) $\downarrow$ & RMSE (bpm) $\downarrow$ \\
         \hline
         AutoSparsePPG~\cite{nowara2020near} & \xmark &  4.55 & 14.42 \\
         Unrolled-iPPG~\cite{shenoy2023unrolled} & \xmark & 1.11 & 2.97 \\
         PMC-iPPG & \cmark & 12.42 & 23.98 \\
         \bottomrule
    \end{tabular}}
    \label{tab:pmc_ippg_testing_main}
\end{table}

While this approach allows for test-time sampling, the performance is unsatisfactory. The key drawback is that the score only learns the signal prior i.e. the distribution of ground-truth volumetric signals, not the mapping $A(\cdot)$ from pulse to camera signals. \VS{The key failing is the approximation $A(\cdot) \approx \mathbf{F}^{-1}$, which is equivalent to a signal denoising problem in additive gaussian noise; this was a valid assumption in ~\cite{nowara2020near} because they assumed Gaussian denoising and  used orthogonal projection to reduce noise \textit{before} implementing their signal recovery algorithm. While the approximation of the unknown $A(\cdot) \approx \mathbf{F}^{-1}$ was sufficient for~\cite{shenoy2023unrolled}, their unrolling method \textit{implicitly} corrected the approximate forward model through end-to-end training. Fundamentally, however, PMC-iPPG reveals that $A=\mathbf{F}^{-1}$ is inadequate without extra noise reduction and implicit model correction.}

\VS{Naturally, the next step would be to build a more accurate forward model. Initial experiments to learn the forward model followed the approach of previous literature~\cite{Lunz2020OnLO, arridge2023inverse}; however, we noticed that models would not converge because the forward model is a time-dependent and facial-region-specific function of the pulse signal, specular and diffuse reflections, skin tone, and motion. See Appendix~\ref{sec:appendix_inadequacy} for a full discussion. Instead of learning networks to model both the forward model (which did not converge) as well as the signal prior, a single posterior sampling framework that encapsulates both of the aforementioned models and the many-to-one mapping between face regions and pulse signals may be more effective. Stochastic interpolants provides such a framework that allows for test-time sampling.}

\subsection{Unregularized Stochastic Interpolants for iPPG}\label{sec:unreg_si}

Without an explicit forward model, we seek to learn an implicit mapping of BVP signals to camera pixel signals. We assume that there exists a distinct BVP and camera pixel distribution, and that there exists a mapping \textit{in distribution} between them. 

Given these two paired data distributions, \VS{we seek to define a a stochastic process that smoothly interpolates between samples from two distributions, namely the camera intensity signals $\mathbf{x}_1\sim p_1$ and its ground-truth pulse signal $\mathbf{x}_0 \sim p_0$.} 


\vspace{-1.0em}
\begin{equation}\label{eq:interpolant}
    \mathbf{x}_t = (1-t)\mathbf{x}_0 + t\mathbf{x}_1 + \sqrt{2t(1-t)} \mathbf{z}
\end{equation}

\VS{This choice of $I(t, \mathbf{x}_0, \mathbf{x}_1) = (1-t)\mathbf{x}_0 + t\mathbf{x}_1$ and $\gamma(t) = \sqrt{2t(1-t)}$ as in Equation~\ref{eq:base_interpolant} ensures that at $\mathbf{x}_{t=0} = \mathbf{x}_0$ and $\mathbf{x}_{t=1} = \mathbf{x}_1$. Using this interpolant, our goal is to build a drift coefficient that satisfies Theorem~\ref{eq:si_theorem}. We are looking for the instantaneous change of Equation~\ref{eq:interpolant}: therefore, we are seeking
$\mathbf{b}(t,\mathbf{x}) = \mathbb{E}[\frac{\partial}{\partial t }\big((1-t)\mathbf{x}_0 + t\mathbf{x}_1 + \sqrt{2t(1-t)} \mathbf{z}\big)]$}, learned through neural networks, as described in Section~\ref{sec:background}. In practice, however, we can decompose $\mathbf{b}(t,\mathbf{x}) = \mathbf{v}(t,\mathbf{x}) - \gamma(t) \cdot \frac{\partial}{\partial t} \big(\sqrt{2t(1-t)}\big) \cdot \mathbf{s}(t,\mathbf{x})$, and further decompose the score using Tweedie's formula as $\mathbf{s}(t,\mathbf{x}) = -\mathbf{n_z}(t,\mathbf{x}) / \gamma(t)$.\VS{The decomposition of the drift and denoiser can be found in Section~\ref{subsec:appendix_deriving_sde} and Section~\ref{subsec:denoiser_deriving} and follows the derivation presented in~\cite{albergo2023stochasticinterpolantsunifyingframework}}. This simplifies the practical implementation to learning two independent networks, one to learn the interpolant flow and another to learn the denoiser. 
\begin{equation}\label{eq:flow_loss_august}
    \mathbf{v}_\theta(t, \mathbf{x}) \approx \frac{\partial}{\partial t} \big((1-t)\mathbf{x}_0  + t\mathbf{x}_1\big)
\end{equation}
\begin{equation}\label{eq:score_loss_august}
     \mathbf{n}_\theta(t,x) \approx \mathbf{z}
 \end{equation}
 
 \VS{This is preferable to learning the drift $\mathbf{b}(t, \mathbf{x})$ directly as we can avoid evaluating $\gamma^{-1}(t)$ near $t\approx 0$ and $t\approx 1$}. We can learn both of these networks by minimizing the MSE ~\cite{albergo2022building}: 

\begin{equation}\label{eq:flow_loss}
    \mathcal{L}_{\text{flow}} = \mathcal{L}_{\text{MSE}}(\mathbf{v}_\theta(t, \mathbf{x}), \mathbf{v}(t,\mathbf{x}))
\end{equation}
\vspace{-1.0em}
\begin{equation}\label{eq:score_loss}
    \mathcal{L}_{\text{score}} = \mathcal{L}_{\text{MSE}}(\mathbf{n}_\theta(t, \mathbf{x}), \mathbf{z})
\end{equation}

 After these networks are learned, we can build a drift coefficient (as shown in Section~\ref{subsec:test_time_sampling}) and use off-the-shelf solvers to obtain a pulse estimate given the camera measurements (i.e. initial condition).

\begin{wrapfigure}{r}{0.30\textwidth}
\includegraphics[width=0.30\textwidth]{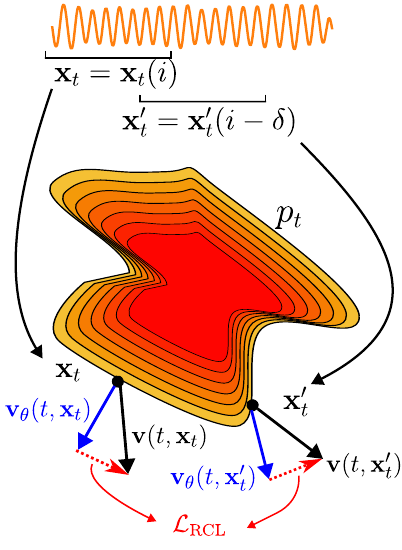} 
\caption{We sample a time-window and its time-shifted version, and predict the flow for both. For two adjacent and overlapping time-windows, the residual vector between predicted and ground-truth flows should point in the same direction, which is promoted by by minimizing the Residual Correlation Loss. }
\label{fig:correlation_loss}
\vspace{-2.0em} 
\end{wrapfigure}

While this worked in practice, we noticed many failure cases. \VS{Facial data is often corrupted by out-of-distribution and unconstrained motion noise. To make our models more robust, we attempted to condition our flow and score networks on guidance signals based on the noise in the video; however, our investigation yielded negative results (see Appendix~\ref{subsec:appendix_guidance}). We took a different approach when noticing that physiological changes are slowly varying. Medical research has discovered that physiological changes under normal conditions are often slowly varying~\cite{nichols2022mcdonald} and changes in physiological state are often time-delayed. This implies that in adjacent and overlapping time windows, physiological signals should be similar.  A robust flow model should ensure such temporal consistency, which we learned through a regularization scheme below.}


\subsection{Residual Correlation Loss (RCL) for temporal consistency in adjacent time windows}\label{subsec:rcl}

We learn such temporal consistency by correlating the residual vectors between the predicted and ground-truth flows of two adjacent time windows as shown in Figure~\ref{fig:correlation_loss}. Assume we are given two pairs of data,
\begin{equation}\label{eq:signal_stride}
    (\mathbf{x}_0(i), \mathbf{x}_1(i)) \text{ and } (\mathbf{x}_0' = \mathbf{x}_0(i - \delta), \mathbf{x}_1' = \mathbf{x}_1(i - \delta))
\end{equation}
where the latter pair is an overlapping, time-shifted version of the first pair. During training, we generate $\mathbf{x}_t$ and $\mathbf{x}_t'$ according to Equation~\ref{eq:interpolant}, after which we predict the flow at each of these points $\mathbf{v}_\theta(t, \mathbf{x}_t)$ and $\mathbf{v}_\theta(t, \mathbf{x}_t')$ as described in Section~\ref{sec:unreg_si}. We note the predicted flows should be regressed to their corresponding targets $\mathbf{v}(t, \mathbf{x}_t)$ and $\mathbf{v}(t, \mathbf{x}'_t)$; however, given that adjacent and overlapping time windows should have consistent physiological behavior, the error vector between the predicted and ground-truth flows (i.e. the residual) should be correlated. More precisely, we would like the residual vectors to point in the same direction.

To encourage vectors to point in the same direction, we aim to maximize the normalized dot product between the vectors. This is achieved by minimizing the proposed Residual Correlation Loss, which is equivalent to minimizing one-minus the Pearson Correlation Coefficient~\cite{cohen2009pearson}. Let $\mathbf{p} = \mathbf{v}(t, \mathbf{x}_t) - \mathbf{v}_\theta(t, \mathbf{x}_t)$ and $\mathbf{q} = \mathbf{v}(t, \mathbf{x}_t') - \mathbf{v}_\theta(t, \mathbf{x}_t')$. Then, the RCL loss is defined as 
\vspace{-0.5em}
\begin{equation}\label{eq:rcl_loss}
    \mathcal{L}_{\text{RCL}}(\mathbf{p}, \mathbf{q}) = 1- \frac{T \cdot \mathbf{p}^\top\mathbf{q} - \mu_p\mu_q}{\sqrt{(T\cdot \mathbf{p^\top p} - \mu_p^2)(T \cdot \mathbf{q}^\top \mathbf{q} - \mu_q^2)}}
\end{equation}

where $\mu_p$ and $\mu_q$ are the means of the signals, and $T$ is the length of the signal.  We will show in Section~\ref{sec:experiments} that minimizing this loss leads to improved iPPG recovery.

\VS{The correlation loss leads to a regularized network that promotes temporal consistency. This loss can simply be added at the training stage of our network, without any changes to the model architecture. Therefore, during training, we can add the RCL loss to the flow and score losses from Equation~\ref{eq:flow_loss} and Equation~\ref{eq:score_loss}.} The final loss is given by
\vspace{-0.15em}
\begin{equation}\label{eq:full_loss}
    \mathcal{L} = \underbrace{\mathcal{L}_{\text{MSE}}(\mathbf{v}_\theta(t, \mathbf{x}), \mathbf{v}(t,\mathbf{x}))}_{\text{flow}} + \underbrace{\mathcal{L}_{\text{MSE}}(\mathbf{n}_\theta(t, \mathbf{x}), \mathbf{z})}_{\text{denoiser}} + \lambda_{\text{RCL}} \mathcal{L}_{\text{{RCL}}}(\mathbf{p}, \mathbf{q})
\end{equation}

\subsection{Test-time Sampling}\label{subsec:test_time_sampling}

Our proposed method, as compared to all previous iPPG methods, is able to sample solution space at test time and generate multiple realizations of the solution. \VS{As compared to other posterior sampling methods, the benefit of using an SDE-based method is that we can visualize solutions at all time points, an example of which is shown in Appendix Figure~\ref{fig:time-layout}.} Given that our iPPG interpolant satisfies both the continuity equation and the Fokker-Planck equation from Theorem~\ref{eq:si_theorem}, we write a reverse SDE that, when solved, produces an estimate of the pulse signal given the measurements i.e. initial condition. First, define the sampling SDE as 
\vspace{-0.5em}
\begin{equation}\label{eq:sampling_sde}
    d\mathbf{x}_t = \mathbf{b}_F(t, \mathbf{x}_t)dt + \sigma_tdW_t, \text{  } \sigma_t = \sqrt{2\epsilon(t)}
\end{equation}

where $W_t$ is a Wiener process. The drift $\mathbf{b}(t, \mathbf{x})$ coefficient then becomes: 
\vspace{-0.5em}
\begin{equation}\label{eq:sampling_drift}
    \mathbf{b}_F(t, \mathbf{x}_t) = \Big[\mathbf{v}_\theta(t, \mathbf{x}_t) - \gamma(t)\cdot \big(\frac{d}{dt} \gamma(t)\big) \cdot \mathbf{s}_\theta(t, \mathbf{x}_t) \Big] + \epsilon(t) \mathbf{s}_\theta(t, \mathbf{x}_t)
\end{equation}

\VS{See Appendix~\ref{subsec:appendix_deriving_sde} for the derivation of the drift.} While any standard solver can be used to solve Equation~\ref{eq:sampling_sde}, we chose to use the implementations of~\cite{li2020scalable, kidger2021neuralsde}, and as recommended by ~\cite{albergo2023stochasticinterpolantsunifyingframework}, we set $\epsilon(t)$ to a constant for all $t$. In the next section, we present the results of our approach, and demonstrate the efficacy of both our uncertainty quantification as well as the RCL loss for iPPG recovery.

\section{Implementation Details and Experimental Results}\label{sec:experiments}

\begin{table*}[]
    \caption{Heart rate estimation results on MMSE-HR, PURE and UBFC-rPPG datasets. Best results in each column are \textbf{bold}; second-best are \underline{underlined}. Our method is the only one that addresses uncertainty quantification (UQ)}
    \centering
    \resizebox{0.75\textwidth}{!}{
    \begin{tabular}{c|c|ccc|ccc|ccc}
    \toprule
    \multirow{3}{*}{\textbf{Type}} & \multirow{3}{*}{\textbf{Method}}  & \multicolumn{3}{c}{MMSE-HR} &  \multicolumn{3}{c}{PURE} & \multicolumn{3}{c}{UBFC-rPPG} \\
    & & \textbf{MAE}$\downarrow$ & \textbf{RMSE}$\downarrow$& $\rho$ $\uparrow$ &  \textbf{MAE}$\downarrow$ & \textbf{RMSE}$\downarrow$& $\rho$ $\uparrow$ & \textbf{MAE}$\downarrow$ & \textbf{RMSE}$\downarrow$& $\rho$ $\uparrow$ \\
    \hline
\multirow{4}{*}{\makecell{Model-Based \\ Unsupervised}} & ICA~\cite{poh2010non}  & 5.44 & 12.00 & - &  - & - & - & 7.50 & 14.41 & 0.62 \\
    & CHROM~\cite{de2013robust} & 3.74 & 8.11 & 0.55 & 2.07 & 9.92 & - & 2.37 & 4.91 & 0.89 \\
    & POS \cite{wang2016algorithmic} & 3.90 & 9.61 & - & 5.44 & 12.00 & - & 4.05 & 8.75 & 0.78 \\
    & AutoSparsePPG\cite{nowara2020near}  & 4.55 & 14.42 & - & - & - & - & - & - & - \\
    \hline
    \multirow{12}{*}{\makecell{Model-Based \\ Unsupervised}} & HR-CNN~\cite{Spetlik2018VisualHR} & - & - & - & 1.84 & 2.37 & - & - & - & - \\
    & SynRhythm~\cite{synrhythm} & - & - & - & - & - & - & 5.59 & 6.82 & 0.72 \\  
    & CAN~\cite{chen2018deepphys} & 4.06 & 9.51 & - & - & - & - & - & - & - \\
    & CVD~\cite{niu2020video} & - & 6.04 & 0.84 & 1.29 & 2.01 & 0.98 & 2.19 & 3.12 & 0.99 \\
    & PulseGAN~\cite{SongPulseGAN} & - & - & -  & - & - & - & 1.19 & 2.19 & 0.98 \\
    & InverseCAN~\cite{nowara2021benefit} & 2.27 & 4.90 & - & - & - & - & - & - & - \\
/    
    & DualGAN~\cite{dualganLu2021} & - & - & - & 0.82 & 1.31 & 0.99 & \textbf{0.44} & \textbf{0.67} & \textbf{0.99} \\
    & Physformer~\cite{yu2022physformer} & 2.84 & 5.36 & - & - & - & - & - & - & - \\
    & Federated~\cite{liu2022federated} & 2.99 & - & 0.79 & - & - & - & 2.00 & 4.38 & 0.93 \\
    & EfficientPhys-C~\cite{liu2023efficientphys} & 2.91 & 5.43 & 0.86 & - & - & - & - & - & - \\
    & ContrastPhys-100 (PAMI'24)~\cite{sun2024contrast} & \textbf{1.11} & \textbf{3.83} & 0.96 & \underline{0.48} & \textbf{0.98} & 0.99 & 0.50 & 0.84 & 0.99 \\
    \hline
    \multirow{3}{*}{\makecell{Data-Driven \\ Unsupervised}} 
    &  Gideon~\cite{gideon2021way} &  3.98 & 9.65 & 0.85 & 2.3 & 2.9 & 0.99 & 3.60 & 4.60 & 0.95 \\
    & Yue ~\cite{self_supervised_rppg} & - & - & - & 1.23 & 2.01 & 0.99 & - & - & - \\
    &  ContrastPhys-0~\cite{sun2024contrast} & \underline{1.82} & 6.69 & 0.96 & 1.00 & 1.40 & 0.99 & - & - & - \\
    \hline
    \multirow{2}{*}{\makecell{\textbf{Ours} \\ Data-Driver, Supervised}}  & \multirow{2}{*}{RIS-iPPG}  & \multirow{2}{*}{1.97} & \multirow{2}{*}{\underline{3.73}} & \multirow{2}{*}{0.97} & \multirow{2}{*}{\textbf{0.38}} & \multirow{2}{*}{\underline{1.28}} & \multirow{2}{*}{0.99}  & \multirow{2}{*}{\underline{0.47}} & \multirow{2}{*}{\underline{0.80}} & \multirow{2}{*}{0.98}  \\
    & & & & & & & & & &  \\
    \bottomrule
    \end{tabular}}
    \label{tab:allresults}
\end{table*}

\subsection{Datasets}
We evaluate our algorithm using three datasets, which are described below

\begin{itemize}

    \item \textbf{MMSE-HR}~\cite{zhang2016multimodal,ertugrul2019cross}: The MMSE-HR dataset recorded facial video at 1040$\times$1392 pixels and 25 FPS while capturing synchronized blood pressure waveforms using a Biopac NIBP100D recording at 1000Hz (which we downsampled to 25 Hz to align with the video). Seventeen male and twenty-three female subjects were asked to perform a variety of tasks that induced motion as well as a change in heart rate, which resulted in 102 videos. We train and test on 10 second time windows, and evaluate using the leave-one-subject-out evaluation protocol of~\cite{nowara2021benefit}.

    \item \textbf{PURE}~\cite{Stricker2014NoncontactVP}: Recorded at 30 FPS and a resolution of 640$\times$480 pixels, the PURE dataset contains 10 subjects each of whom perform six task to induce facial motion. Corresponding pulse oximetry data were captured at 60Hz, which was downsampled to 30 Hz to align with the video data. The models were trained on 10-second time windows of pulse oximeter data, and were evaluated on 30-second windows according to the splits of~\cite{vspetlik2018visual}.

    \item \textbf{UBFC-rPPG}~\cite{ubfc-rppg}. The UBFC-rPPG dataset contains 43 subjects, each recording one video captured at $640\times480$ px and 30 FPS while playing a game to induce pulse rate changes. Simultaneous pulse waves were captured using a pulse oximeter recording at 30Hz. We train on 10-second windows with a frame stride of 2.4 seconds, and evaluate according to the protocol in~\cite{dualganLu2021}: we evaluate on 10-second time windows for each video, and average all heart rates in a video for a single heart rate estimate.
   
\end{itemize}

\subsection{Evaluation Metrics}

\textbf{Pulse Rate Estimation Metrics}: We follow the protocol of previous work~\cite{shenoy2023unrolled} and measure the predicted pulse rate versus the ground-truth pulse rate in the windows of interest. We compute the pulse rates by first multiplying the signal by a Hanning window, followed by taking an $L = 10 \times \text{signal length}$ FFT, after which we compute the power by squaring the magnitude of the FFT. We sum the power spectra across all facial regions, and all samples from the SDE solutions, after which we chose the frequency bin with the greatest power as the pulse rate. We then compute the mean absolute error (MAE) and root mean squared error (RMSE) between the predicted and ground-truth pulse rates, as well as the Pearson Correlation Coefficient between predicted and ground-truth pulse rates \VS{as in \cite{liu2022toolbox}}:

\textbf{Spectrum Estimation Accuracy Metrics:} \VS{We measure the accuracy of spectrum estimation by comparing the predicted spectrum against that of the ground-truth spectrum. We measure the spectrum MAE and RMSE, as well as other standard regression metrics such as the coefficient of determine ($R^2$) and the Pearson Correlation Coefficient (PCC).}

\begin{figure}
    \centering
    \includegraphics[width=0.79\textwidth]{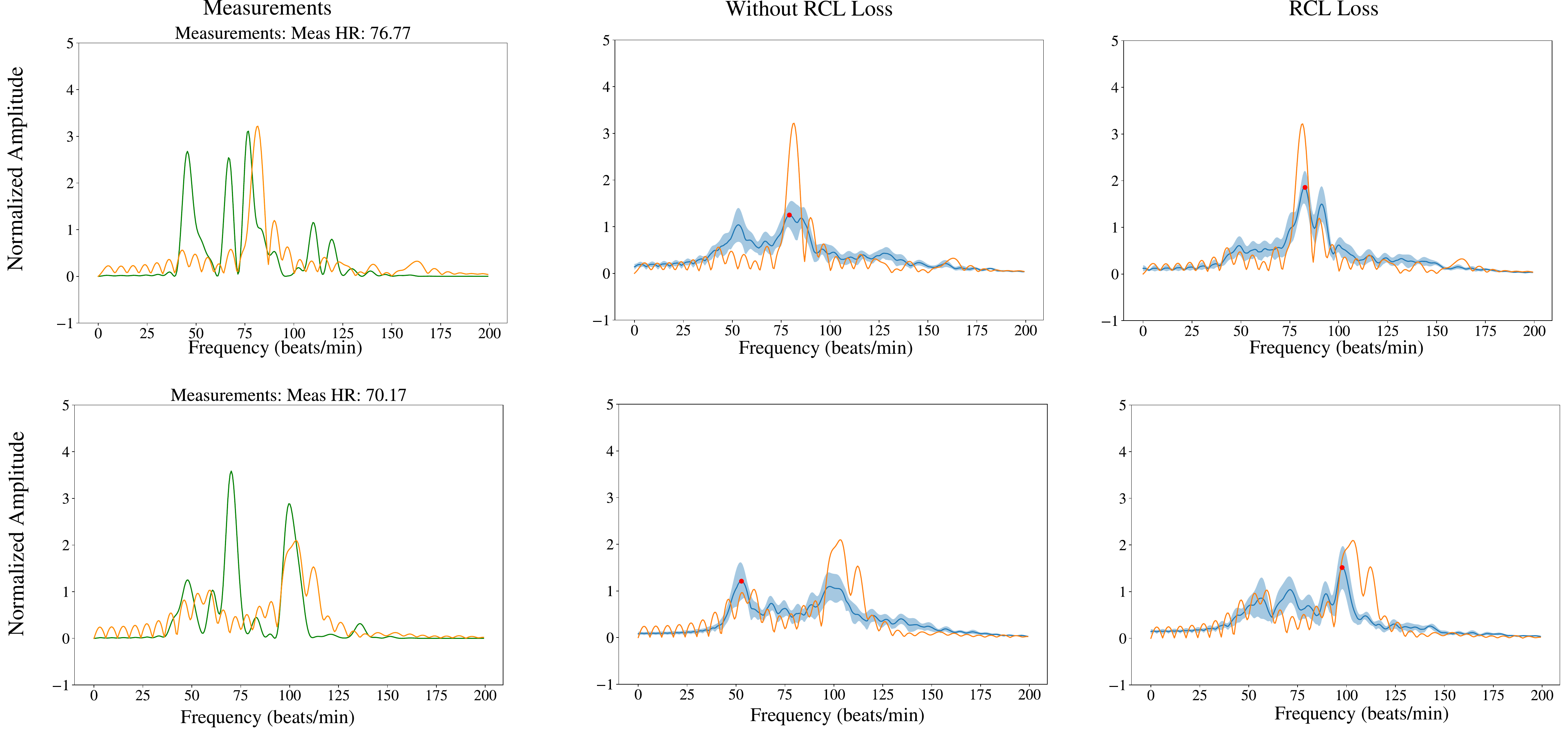}
    \caption{Qualitative results with and without the RCL loss. We plot the camera pixel measurements (green), ground-truth PPG (orange), the mean of 100 realizations of sampling (bolded blue), and  95\% confidence interval of the power in each bin (light blue). Our algorithm with the RCL loss is better at predicting the true heart rate. In row 2, we see that the RCL loss also captures the modes of the measurements more effectively (i.e. the correct peak around $\sim$100 bpm as well as the peak of the measurements $\sim$75 bpm).}
    \label{fig:multimodes}
\end{figure}
\textbf{Predictive Uncertainty Quantification Metrics}: \VS{The use of stochastic sampling allows us to measure predictive uncertainty. Using the implementation of~\cite{chung2021uncertainty}, we compare our predicted spectral magnitude against the ground-truth, and calculate the Negative Log Likelihood (NLL), the Sharpness, Continuous Ranked Probability Score, the Check Score, and the Interval Score of our predictive uncertainty. In addition, we plot the calibration curves of our method over the UBFC-rPPG and PURE test sets, as well as over protected attributes of the MMSE-HR dataset. A full description of these metrics are reserved for Appendix~\ref{sec:appendix_eval_mets} due to space constraints.}

\subsection{Implementation Details}\label{sec:implementation}

All computation, including preprocessing, was implemented on an A5000 NVIDIA GPU. To generate the pre-processed time-series, we first pass the input video through the OpenFace~\cite{amos2016openface} face detector, followed by the landmark detection using LDEQ~\cite{Micaelli_2023_CVPR}. These landmarks are interpolated across the face to delineate the right and left forehead, right and left cheek, and chin. In each region, we perform per-channel averaging of all pixels, and as in~\cite{shenoy2023unrolled}, we take the ratio of the red channel to the green channel to obtain our signal estimate as in Figure~\ref{fig:main_fig}. 

Identical U-Nets, adapted from guided diffusion~\cite{dhariwal2021diffusion}, learn both the flow and score in our framework. We train these networks using the Adam optimizer~\cite{kingma2014adam} with an initial learning rate of 1$e$-3. The number of training epochs varied for each dataset. Please refer to the Appendix Section~\ref{sec:appendix_hyperparameters} for more information.

\subsection{Results}\label{subsec:results}

\textbf{Pulse Rate Estimation}: \VS{Before we present UQ metrics, we must determine whether our method is accurate.} We present the results of RIS-iPPG with the RCL loss in Table~\ref{tab:allresults}. The results for the MMSE-HR dataset are aggregated over subject-independent (i.e. 40-fold) cross-validation, while the results for PURE and UBFC-rPPG are evaluated on their test sets in accordance with previous literature. On all three datasets, we achieve very competitive performance with previous benchmarks, and achieve a new state-of-the-art on the PURE dataset. For the benchmarks in which we do not perform best, we still achieve $<$ 1bpm error on UBFC-rPPG and $<$ 2 bpm on MMSE-HR.

Qualitative results of the power spectrum are shown in Figure~\ref{fig:multimodes}, where the orange signals are the ground-truth in all cases, the green signals are the measurements, and the bolded blue signals are the means from 100 realizations of our algorithm. The light blue shading represents the 95\% confidence interval of the power over each frequency bin of our spectrum, and the red dot shows the peak of the spectrum. In both cases, we notice that the model with the RCL loss produces a more accurate pulse rate estimation and spectrum. The second row shows a particularly interesting example: with the RCL loss, we see greater uncertainty at the three modes of the distribution (the ground-truth frequency, the peak frequency in the measurements, and the frequency at $\sim$55bpm) as compared to the prediction without the RCL loss, which not only incorrectly predicts the pulse rate, but also smooths the spectrum and fails to capture the uncertainty around the peak measurement frequency. More examples on MMSE-HR, UBFC-rPPG, and PURE are included in Appendix~\ref{sec:appendix_qualitative}.

\begin{figure}
    \centering
    \includegraphics[width=0.85\textwidth]{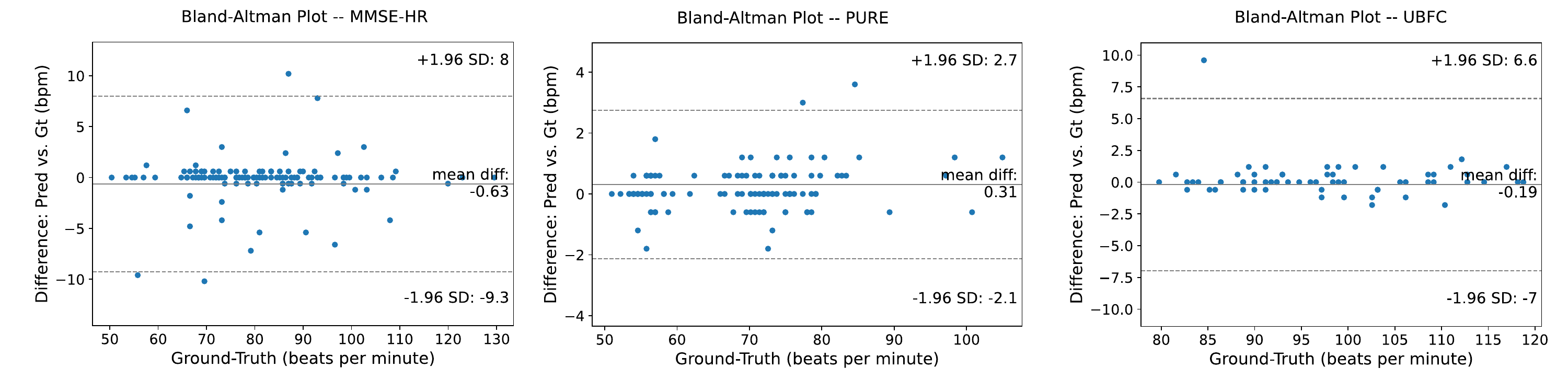}
    \caption{Bland-Altman Plots for the predicted heart rate against the ground-truth for all time windows on the test sets. We plot the ground-truth heart rate against the predicted heart rate, as well as the 95\% confidence intervals. We see that the mean difference is close to zero for all three datasets, with reasonable confidence intervals.}
    \label{fig:three_ba_plots}
\end{figure}


\textbf{Statistical Analysis}: In addition to a comparison against the state-of-the-art, we perform a Bland-Altman analysis to understand the performance of our algorithm. We plot the ground-truth heart rate in a window against the prediction from RIS-iPPG, and plot the 95\% confidence interval for the heart rate predictions. The results from the MMSE-HR dataset show a mean difference of approximately -0.63 bpm, indicating that our method overestimates the actual heart rate by an average of over half a beat. Our heart standard deviation is relatively large; however, this is metric is dominated by the few large outliers produced when the input measurements are too noisy. We see similar trends on both the UBFC-rPPG dataset and the PURE dataset. Models on both datasets achieve a mean difference close to zero, while maintaining single-digit standard deviations. 

\textbf{Effect of overlap and weight}: We further explore the inclusion of the RCL loss by performing a grid-search over the weight parameter $\lambda_{\text{RCL}} \in [0.0, 1.0]$ and the time-shift $\delta \in [1, 10]$ seconds, with the results as shown in Table~\ref{tab:rcl_ablation1}. Note that we train our model using 10-second windows; therefore, a stride of $\delta = 10$ corresponds to the ``no-overlap'' scenario. On the MMSE-HR dataset, we achieve best performance with a stride of $\delta = 9$ seconds and a weight of $\lambda=0.1$. We also note that we can get significant improvements over traditional stochastic interpolants by including the RCL loss. This experiment was conducted on a small validation set of the MMSE-HR dataset; after selecting $\delta = 9$ seconds and a weight of $\lambda=0.1$, we perform 40-fold cross-validation and report the final results in Table~\ref{tab:allresults}. Ablations on the UBFC-rPPG dataset are included in Appendix Table~\ref{tab:rcl_ablation2}.

\textbf{\VS{Spectrum Estimation Performance}}: \VS{In Table~\ref{tab:waveform_estimation} we measure the spectrum estimation accuracy average \textit{over all frequency bins} (i.e. as compared to a single heart rate) calculated for the pulse rate. As shown in Table~\ref{tab:waveform_estimation}, our model with the RCL loss does better than the model without the RCL loss on both the PURE and UBFC-rPPG datasets. Our RMSE, indicating the magnitude of the outliers, is lower for the model with the RCL loss, and the PCC between the predicted and ground-truth spectrum magnitude is higher. These results indicate that the RCL loss is beneficial to RIS-iPPG.  }

\begin{table}[]
    \caption{Varying the stride $\delta$ of Equation~\ref{eq:signal_stride} and weight $\lambda_{\text{RCL}}$ of Equation~\ref{eq:full_loss} for the RCL loss on the MMSE-HR Dataset. We notice best performance, on average, when using $\delta=9$ and $\lambda_{\text{RCL}} = 0.1$}
    \centering
    \setlength\tabcolsep{6pt}
    \resizebox{0.65\textwidth}{!}{
    \begin{tabular}{cc|cccccccccc|c}
    \toprule
    & & \multicolumn{10}{c|}{Window Shift $\delta$ (seconds)} &   \\
    & & 1 & 2 & 3 & 4 & 5 & 6 & 7 & 8 & 9 & 10  & Average\\
    \hline
    \parbox[t]{2mm}{\multirow{10}{*}{\rotatebox[origin=c]{90}{RCL weight $\lambda_{\text{RCL}}$}}} & 0.0 & 3.72 & 3.72 & 3.72 & 3.72 & 3.72 & 3.72 & 3.72 & 3.72 & 3.72 & 3.72 & 3.72 \\
    & 0.1 & 2.67 & 2.54 & 2.47 & 3.44 & 2.79 & 2.54 & 3.51 & 2.52 & 1.39 & 2.89 & \textbf{2.67} \\
    & 0.2 & 2.64 & 3.77 & 2.42 & 2.04 & 2.77 & 4.07 & 2.82 & 1.94 & 2.52 & 2.59 & 2.74\\
    & 0.3 & 2.54 & 4.67 & 3.87 & 2.39 & 4.74 & 2.92 & 3.09 & 2.42 & 2.34 & 2.24 & 3.12 \\
    & 0.4 & 2.59 & 2.42 & 5.97 & 3.39 & 4.19 & 2.67 & 2.69 & 3.42 & 2.12 & 2.87 & 3.23\\
    & 0.5 & 3.02 & 2.67 & 2.51 & 2.57 & 1.92 & 3.19 & 2.74 & 2.85 & 2.84 & 2.67 & 2.73\\
    & 0.6 & 3.02 & 3.01 & 2.89 & 3.07 & 2.12 & 2.77 & 2.72 & 2.47 & 3.72 & 2.34 & 2.83\\
    & 0.7 & 2.77 & 4.07 & 2.19 & 2.69 & 2.92 & 2.44 & 2.44 & 3.09 & 1.82 & 3.57 & 2.8\\
    & 0.8 & 2.67 & 2.87 & 2.84 & 2.09 & 4.57 & 3.69 & 3.22 & 2.37 & 2.14 & 2.49 & 2.89\\
    & 0.9 & 1.77 & 2.57 & 2.25 & 3.32 & 1.49 & 3.34 & 2.94 & 3.82 & 2.72 & 2.89 & 2.71\\
    & 1.0 & 2.69 & 4.17 & 3.22 & 2.34 & 2.79 & 6.54 & 3.27 & 1.62 & 1.59 & 2.19 & 3.04\\
    \hline
    Average & & 2.73 & 3.34 & 3.12 & 2.82 & 3.09 & 3.44 & 3.01 & 2.74 & \textbf{2.44} & 2.79 &  \\
    \bottomrule

    \end{tabular}
    }
    \label{tab:rcl_ablation1}
\end{table}

\begin{table}[]
    \caption{\VS{Comparing the spectrum estimation performance and uncertainty quantification metrics with and without the RCL loss}}
    \centering
    \resizebox{0.65\textwidth}{!}{
    \begin{tabular}{c|ccc|ccccc}
    \toprule
    & \multicolumn{3}{c}{Waveform Accuracy} & \multicolumn{5}{c}{Uncertainty Metrics} \\
    Method & \textbf{MAE}$\downarrow$ & \textbf{RMSE}$\downarrow$& \ \textbf{PCC}$\uparrow$ & \textbf{NLL}$\downarrow$ & \textbf{Sharpness}$\uparrow$& \textbf{CRPS} $\downarrow$ &  \textbf{Check Score}$\downarrow$ & \textbf{Interval Score}$\downarrow$\\
    \hline
    PURE w/out RCL & 0.057 & 0.224  & 0.681 & \textbf{-3.825} &  0.201 & 0.041 & 0.021 & 0.217 \\
    PURE w/RCL & \textbf{0.056} & \textbf{0.218} & \textbf{0.700} & -3.820 & \textbf{0.202} & \textbf{0.040} & \textbf{0.020} & \textbf{0.208} \\
    \hline
    UBFC-rPPG w/out RCL & 0.073 & 0.237 & 0.651 & -3.475 & \textbf{0.177} & 0.042 & 0.021 & 0.203 \\
    UBFC-rPPG w/RCL & \textbf{0.049} & \textbf{0.185}  & \textbf{0.795} & \textbf{-3.516} & 0.168 & \textbf{0.034} & \textbf{0.017} & \textbf{0.162} \\
    \bottomrule
    \end{tabular}}
    \label{tab:waveform_estimation}
\end{table}


\subsection{\VS{Predictive Uncertainty Quantification}}

\VS{To the best of our knowledge, RIS-iPPG is the first iPPG method to develop an stochastic sampling method for iPPG estimation, which allows us to analyze predictive uncertainty and establish new baselines. We enumerate the UQ metrics averaged over the PURE and UBFC-rPPG test sets, respectively, as shown in Table~\ref{tab:waveform_estimation}. The metrics were measured by computing the Fourier spectrum magnitude across all bins of the predicted signal against that of the ground-truth. We notice that over most metrics, our model with residual correlation regularization performs better than the model without it, and over the metrics on which we do not as well, our results are still comparable.
The NLL values indicate that the ground-truth spectral magnitude has high likelihood given our model's estimated distribution, while the CPRS indicates that our model's predicted CDF is similar to that of the ground-truth. The check score (also known as the pinball loss), which measures quantile prediction performance, is lower for the RCL-regularized model, while the interval score, which balances the sharpness of the distribution versus the calibration, is lower for the model with the RCL loss.}

\begin{wrapfigure}{r}{0.3\textwidth}
    \centering
    \begin{subfigure}[b]{0.35\textwidth}
        \includegraphics[width=0.9\textwidth]{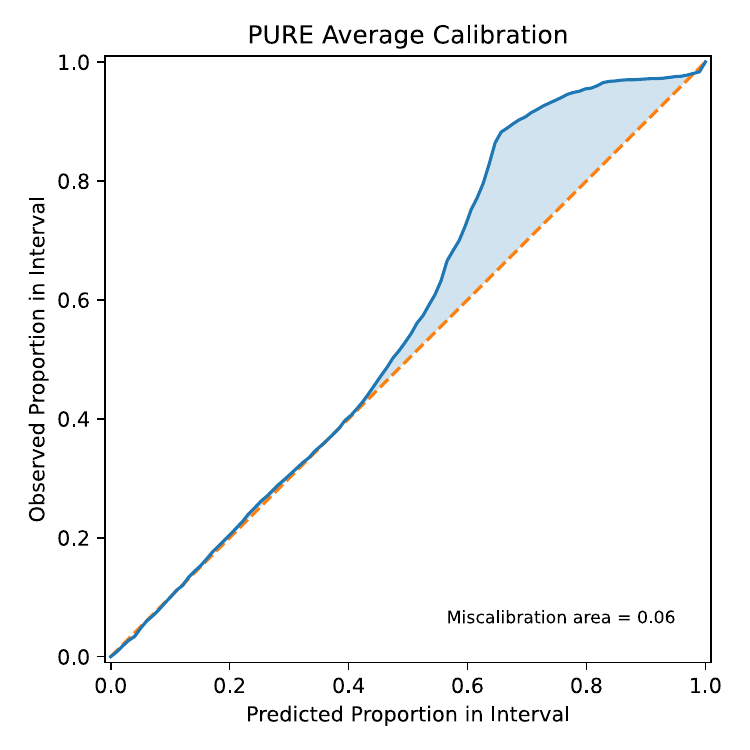}
        \label{fig:pure_calib}
    \end{subfigure}
    \\[1em] 
    \begin{subfigure}[b]{0.35\textwidth}
        \includegraphics[width=0.9\textwidth]{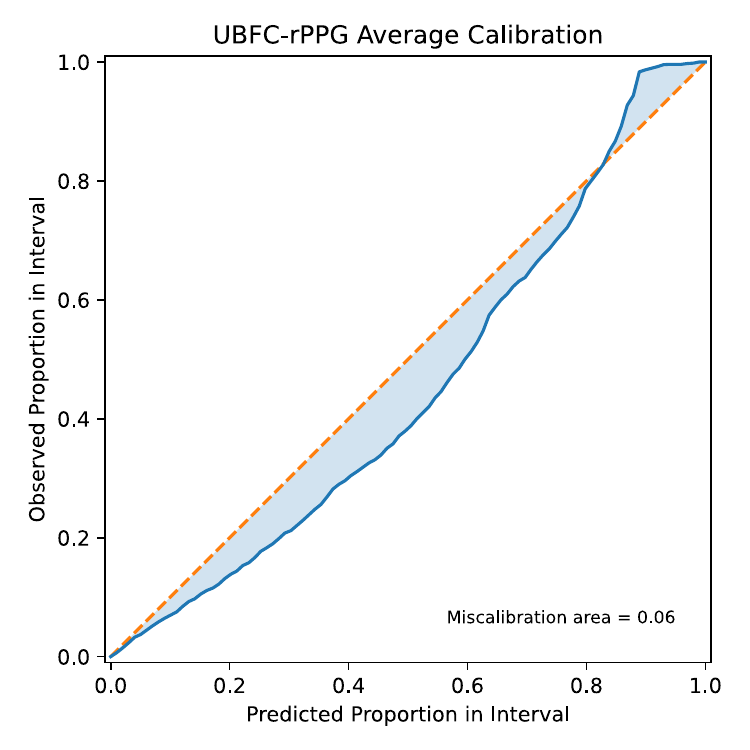}
        \label{fig:ubfc_calib}
    \end{subfigure}
    \caption{\VS{Calibration curves for PURE and UBFC-rPPG datasets.}}
    \label{fig:calibration_curves}
    \vspace{-2.0em}
\end{wrapfigure}

\VS{Next, we measure uncertainty calibration~\cite{kuleshov2018accurate}, which intuitively means that when a model assigns a probability $p$ to an event, then that event should happen $100*p$ percent empirically. We plot the calibration curves in Figure~\ref{fig:calibration_curves}, which plots the predicted proportion of samples in an interval against the actual observed proportion: perfect calibration corresponds to the diagonal. For the PURE dataset, we observe that we achieve good calibration, but our model under-predicts the observed proportion, leading to a miscalibration error of 0.06. On the UBFC-rPPG dataset, we also observe good calibration performance; our model is slightly overconfident at lower observed proportions, while it is slightly overconfident at higher observed proportions. On our average, however, the models on both datasets do well.}

\subsubsection{Uncertainty Quantification on Protected Attributes}

\begin{table}[]
    \caption{\VS{Proper Scoring Rule metrics on Protected Attributes of MMSE-HR dataset}}
    \centering
    \resizebox{0.65\textwidth}{!}{
    \begin{tabular}{c|cc|ccccc}
    \toprule
     & \multicolumn{2}{c}{HR Accuracy} & \multicolumn{5}{c}{Uncertainty Metrics} \\
    Protected Attributed & \textbf{HR MAE}$\downarrow$ & \textbf{HR RMSE}$\downarrow$ & \textbf{NLL}$\downarrow$ & \textbf{Sharpness}$\uparrow$& \textbf{CRPS} $\downarrow$ &  \textbf{Check Score}$\downarrow$ & \textbf{Interval Score}$\downarrow$\\
    \hline
    Light Skin Tone & \textbf{2.17} & \textbf{3.74} & \textbf{-3.430} &  \textbf{0.203} & 0.042 & 0.021 & 0.209 \\
    Dark Skin Tone & 4.79 & 6.84 & -2.828 & 0.191 & \textbf{0.035} & \textbf{0.018} & \textbf{0.188} \\
    \hline
    Men & \textbf{2.07} & 3.83 & \textbf{-3.435} & 0.196 & \textbf{0.032} & \textbf{0.016} & \textbf{0.167} \\
    Women & 2.23 & \textbf{3.69} & -3.415 & \textbf{0.203} & 0.039 & 0.020 & 0.195 \\
    \bottomrule
    \end{tabular}}
    \label{tab:score_rules_protected}
\end{table}

\VS{In addition to calibration metrics over the entire dataset, we measure calibration metrics over  protected attributes i.e. skin color and gender. We establish the first baselines for uncertainty quantification and calibration on protected attributes on the MMSE-HR dataset in Table~\ref{tab:score_rules_protected}. Before we establish the UQ metrics, we enumerate the pulse rate estimation metrics in Table~\ref{tab:score_rules_protected}. Our results on people with darker skin tones is consistent with that of previous work: pulse rate estimation is superior for those people who are better represented in the dataset. We notice better UQ performance for men, while the results between light and dark skin tones seem to be mixed; we notice that three UQ metrics indicate better results for darker skinned people that lighter skinned people. We plot calibration curves for all four subgroups in Appendix Figure~\ref{fig:four_figures}; we notice that our miscalibration error for all four groups is 0.10 or better, showing the effectiveness of our algorithm.}


\section{Conclusion}

To be truly accepted by clinicians and medical personnel, machine learning algorithms for healthcare should be ``repeatedly successful in prognosticating patient’s condition in [the doctor’s] personal experience''~\cite{tonekaboni2019clinicians}. While previous iPPG algorithms output point estimates of the pulse signal, we introduce the first posterior sampling method for iPPG that repeatedly samples likely pulse signal estimates given camera measurements, permitting an uncertainty analysis that can help doctors make better decisions. We achieve this by modeling a stochastic process between camera measurements and pulse signals, and learn the flow and score of this process to build the drift coefficient of an SDE. We improved results by temporally regularizing the flow, and show that this helps us capture the modes of the signal distribution. While we achieve strong results on intra-dataset evaluation, future work should address domain shifts between training and testing datasets, as well as addressing performance gaps on protected subpopulations.


\bibliography{main}

@book{nichols2022mcdonald,
  title={McDonald’s blood flow in arteries: theoretical, experimental and clinical principles},
  author={Nichols, Wilmer W and O'Rourke, Michael and Edelman, Elazer R and Vlachopoulos, Charalambos},
  year={2022},
  publisher={CRC press}
}

@article{begoli2019need,
  title={The need for uncertainty quantification in machine-assisted medical decision making},
  author={Begoli, Edmon and Bhattacharya, Tanmoy and Kusnezov, Dimitri},
  journal={Nature Machine Intelligence},
  volume={1},
  number={1},
  pages={20--23},
  year={2019},
  publisher={Nature Publishing Group UK London}
}

@inproceedings{tonekaboni2019clinicians,
  title={What clinicians want: contextualizing explainable machine learning for clinical end use},
  author={Tonekaboni, Sana and Joshi, Shalmali and McCradden, Melissa D and Goldenberg, Anna},
  booktitle={Machine learning for healthcare conference},
  pages={359--380},
  year={2019},
  organization={PMLR}
}

@article{gneiting2007strictly,
  title={Strictly proper scoring rules, prediction, and estimation},
  author={Gneiting, Tilmann and Raftery, Adrian E},
  journal={Journal of the American statistical Association},
  volume={102},
  number={477},
  pages={359--378},
  year={2007},
  publisher={Taylor \& Francis}
}

@article{chung2021uncertainty,
  title={Uncertainty Toolbox: an Open-Source Library for Assessing, Visualizing, and Improving Uncertainty Quantification},
  author={Chung, Youngseog and Char, Ian and Guo, Han and Schneider, Jeff and Neiswanger, Willie},
  journal={arXiv preprint arXiv:2109.10254},
  year={2021}
}

@article{tran2020methods,
  title={Methods for comparing uncertainty quantifications for material property predictions},
  author={Tran, Kevin and Neiswanger, Willie and Yoon, Junwoong and Zhang, Qingyang and Xing, Eric and Ulissi, Zachary W},
  journal={Machine Learning: Science and Technology},
  volume={1},
  number={2},
  pages={025006},
  year={2020},
  publisher={IOP Publishing}
}

@article{steinwart2011estimating,
  title={Estimating conditional quantiles with the help of the pinball loss},
  author={Steinwart, Ingo and Christmann, Andreas},
  year={2011}
}

@ARTICLE{wangClinicalinfant,

  author={Huang, Dongmin and Zeng, Yongshen and Zhu, Yingen and Song, Xiaoyan and Pan, Liping and Yang, Jie and Wang, Yanrong and Lu, Hongzhou and Wang, Wenjin},

  journal={IEEE Journal of Biomedical and Health Informatics}, 

  title={Camera-Based Respiratory Imaging System for Monitoring Infant Thoracoabdominal Patterns of Respiration}, 

  year={2024},

  volume={},

  number={},

  pages={1-14},

  doi={10.1109/JBHI.2024.3482569}}

@inproceedings{shenoy2023unrolled,
  title={Unrolled ippg: Video heart rate estimation via unrolling proximal gradient descent},
  author={Shenoy, Vineet R and Marks, Tim K and Mansour, Hassan and Lohit, Suhas},
  booktitle={2023 IEEE International Conference on Image Processing (ICIP)},
  pages={2715--2719},
  year={2023},
  organization={IEEE}
}

@article{nowara2020near,
  title={Near-infrared imaging photoplethysmography during driving},
  author={Nowara, Ewa M and Marks, Tim K and Mansour, Hassan and Veeraraghavan, Ashok},
  journal={IEEE transactions on intelligent transportation systems},
  volume={23},
  number={4},
  pages={3589--3600},
  year={2020},
  publisher={IEEE}
}

@article{sun2024provable,
  title={Provable probabilistic imaging using score-based generative priors},
  author={Sun, Yu and Wu, Zihui and Chen, Yifan and Feng, Berthy T and Bouman, Katherine L},
  journal={IEEE Transactions on Computational Imaging},
  year={2024},
  publisher={IEEE}
}

@inproceedings{kuleshov2018accurate,
  title={Accurate uncertainties for deep learning using calibrated regression},
  author={Kuleshov, Volodymyr and Fenner, Nathan and Ermon, Stefano},
  booktitle={International conference on machine learning},
  pages={2796--2804},
  year={2018},
  organization={PMLR}
}

@article{Stricker2014NoncontactVP,
  title={Non-contact video-based pulse rate measurement on a mobile service robot},
  author={Ronny Stricker and Steffen M{\"u}ller and Horst-Michael Gro{\ss}},
  journal={The 23rd IEEE Inter. Symposium on Robot and Human Interactive Communication},
  year={2014},
  url={https://api.semanticscholar.org/CorpusID:8529212}
}

@inproceedings{vspetlik2018visual,
  title={Visual heart rate estimation with convolutional neural network},
  author={{\v{S}}petl{\'\i}k, Radim and Franc, Vojtech and Matas, Jir{\'\i}},
  booktitle={Proceedings of the british machine vision conference, Newcastle, UK},
  pages={3--6},
  year={2018}
}

@inproceedings{zhang2016multimodal,
  title={Multimodal spontaneous emotion corpus for human behavior analysis},
  author={Zhang, Zheng and Girard, Jeff M and Wu, Yue and Zhang, Xing and Liu, Peng and Ciftci, Umur and Canavan, Shaun and Reale, Michael and Horowitz, Andy and Yang, Huiyuan and others},
  booktitle={Proceedings of the IEEE conference on computer vision and pattern recognition},
  pages={3438--3446},
  year={2016}
}

@article{ubfc-rppg,
title = {Unsupervised skin tissue segmentation for remote photoplethysmography},
journal = {Pattern Recognition Letters},
volume = {124},
year = {2019},
note = {Award Winning Papers from the 23rd Inter. Conf. on Pattern Recognition (ICPR)},
issn = {0167-8655},
doi = {https://doi.org/10.1016/j.patrec.2017.10.017},
url = {https://www.sciencedirect.com/science/article/pii/S0167865517303860},
author = {Serge Bobbia and Richard Macwan and Yannick Benezeth and Alamin Mansouri and Julien Dubois}
}

@inproceedings{ertugrul2019cross,
  title={Cross-domain au detection: Domains, learning approaches, and measures},
  author={Ertugrul, Itir Onal and Cohn, Jeffrey F and Jeni, L{\'a}szl{\'o} A and Zhang, Zheng and Yin, Lijun and Ji, Qiang},
  booktitle={2019 14th IEEE international conference on automatic face \& gesture recognition (FG 2019)},
  pages={1--8},
  year={2019},
  organization={IEEE}
}

@inproceedings{nowara2021benefit,
  title={The benefit of distraction: Denoising camera-based physiological measurements using inverse attention},
  author={Nowara, Ewa M and McDuff, Daniel and Veeraraghavan, Ashok},
  booktitle={Proceedings of the IEEE/CVF international conference on computer vision},
  pages={4955--4964},
  year={2021}
}

@inproceedings{gideon2021way,
  title={The way to my heart is through contrastive learning: Remote photoplethysmography from unlabelled video},
  author={Gideon, John and Stent, Simon},
  booktitle={Proceedings of the IEEE/CVF international conference on computer vision},
  pages={3995--4004},
  year={2021}
}

@article{yue2023facial,
  title={Facial video-based remote physiological measurement via self-supervised learning},
  author={Yue, Zijie and Shi, Miaojing and Ding, Shuai},
  journal={IEEE Transactions on Pattern Analysis and Machine Intelligence},
  year={2023},
  publisher={IEEE}
}

@article{sun2024contrast,
  title={Contrast-phys+: Unsupervised and weakly-supervised video-based remote physiological measurement via spatiotemporal contrast},
  author={Sun, Zhaodong and Li, Xiaobai},
  journal={IEEE Transactions on Pattern Analysis and Machine Intelligence},
  year={2024},
  publisher={IEEE}
}

@inproceedings{speth2023non,
  title={Non-contrastive unsupervised learning of physiological signals from video},
  author={Speth, Jeremy and Vance, Nathan and Flynn, Patrick and Czajka, Adam},
  booktitle={Proceedings of the IEEE/CVF Conference on Computer Vision and Pattern Recognition},
  pages={14464--14474},
  year={2023}
}

@article{liu2024rppgmae,
  title={rPPG-MAE: Self-supervised pretraining with masked autoencoders for remote physiological measurements},
  author={Liu, Xin and Zhang, Yuting and Yu, Zitong and Lu, Hao and Yue, Huanjing and Yang, inJgyu},
  journal={IEEE Transactions on Multimedia},
  year={2024},
  publisher={IEEE}
}

@article{poh2010non,
  title={Non-contact, automated cardiac pulse measurements using video imaging and blind source separation.},
  author={Poh, Ming-Zher and McDuff, Daniel J and Picard, Rosalind W},
  journal={Optics express},
  volume={18},
  number={10},
  pages={10762--10774},
  year={2010},
  publisher={Optica Publishing Group}
}

@article{lewandowska2012measuring,
  title={Measuring pulse rate with a webcam},
  author={Lewandowska, Magdalena and Nowak, J{\k{e}}drzej},
  journal={Journal of Medical Imaging and Health Informatics},
  volume={2},
  number={1},
  pages={87--92},
  year={2012},
  publisher={American Scientific Publishers}
}

@article{de2013robust,
  title={Robust pulse rate from chrominance-based rPPG},
  author={De Haan, Gerard and Jeanne, Vincent},
  journal={IEEE transactions on biomedical engineering},
  volume={60},
  number={10},
  pages={2878--2886},
  year={2013},
  publisher={IEEE}
}

@article{de2014improved,
  title={Improved motion robustness of remote-PPG by using the blood volume pulse signature},
  author={De Haan, Gerard and Van Leest, Arno},
  journal={Physiological measurement},
  volume={35},
  number={9},
  pages={1913},
  year={2014},
  publisher={IOP Publishing}
}

@misc{albergo2023stochasticinterpolantsunifyingframework,
      title={Stochastic Interpolants: A Unifying Framework for Flows and Diffusions}, 
      author={Michael S. Albergo and Nicholas M. Boffi and Eric Vanden-Eijnden},
      year={2023},
      eprint={2303.08797},
      archivePrefix={arXiv},
      primaryClass={cs.LG},
      url={https://arxiv.org/abs/2303.08797}, 
}

@misc{zhang2025trajectoryflowmatchingapplications,
      title={Trajectory Flow Matching with Applications to Clinical Time Series Modeling}, 
      author={Xi Zhang and Yuan Pu and Yuki Kawamura and Andrew Loza and Yoshua Bengio and Dennis L. Shung and Alexander Tong},
      year={2025},
      eprint={2410.21154},
      archivePrefix={arXiv},
      primaryClass={cs.LG},
      url={https://arxiv.org/abs/2410.21154}, 
}

@misc{hoellmer2025openmaterialsgenerationstochastic,
      title={Open Materials Generation with Stochastic Interpolants}, 
      author={Philipp Hoellmer and Thomas Egg and Maya M. Martirossyan and Eric Fuemmeler and Amit Gupta and Zeren Shui and Pawan Prakash and Adrian Roitberg and Mingjie Liu and George Karypis and Mark Transtrum and Richard G. Hennig and Ellad B. Tadmor and Stefano Martiniani},
      year={2025},
      eprint={2502.02582},
      archivePrefix={arXiv},
      primaryClass={cs.LG},
      url={https://arxiv.org/abs/2502.02582}, 
}

@InProceedings{dualganLu2021,
    author    = {Lu, Hao and Han, Hu and Zhou, S. Kevin},
    title     = {Dual-GAN: Joint BVP and Noise Modeling for Remote Physiological Measurement},
    booktitle = {Proceedings of the IEEE/CVF Conference on Computer Vision and Pattern Recognition (CVPR)},
    month     = {June},
    year      = {2021},
    pages     = {12404-12413}
}

@ARTICLE{self_supervised_rppg,
  author={Yue, Zijie and Shi, Miaojing and Ding, Shuai},
  journal={IEEE Transactions on Pattern Analysis and Machine Intelligence}, 
  title={Facial Video-Based Remote Physiological Measurement via Self-Supervised Learning}, 
  year={2023},
  volume={45},
  number={11},
  doi={10.1109/TPAMI.2023.3298650}}

@InProceedings{chen2018deepphys,
author = {Chen, Weixuan and McDuff, Daniel},
title = {DeepPhys: Video-Based Physiological Measurement Using Convolutional Attention Networks},
booktitle = {Proceedings of the European Conference on Computer Vision (ECCV)},
month = {September},
year = {2018}
}

@inproceedings{Spetlik2018VisualHR,
  title={Visual Heart Rate Estimation with Convolutional Neural Network},
  author={Radim Spetlik and Vojtech Franc and Jan Cech and Jiri Matas},
  booktitle={British Machine Vision Conference},
  year={2018},
  url={https://api.semanticscholar.org/CorpusID:52219725}
}

@INPROCEEDINGS{synrhythm,
  author={Niu, Xuesong and Han, Hu and Shan, Shiguang and Chen, Xilin},
  booktitle={2018 24th International Conference on Pattern Recognition (ICPR)}, 
  title={SynRhythm: Learning a Deep Heart Rate Estimator from General to Specific}, 
  year={2018},
  volume={},
  number={},
  doi={10.1109/ICPR.2018.8546321}}

@article{wang2016algorithmic,
  title={Algorithmic principles of remote PPG},
  author={Wang, Wenjin and Den Brinker, Albertus C and Stuijk, Sander and De Haan, Gerard},
  journal={IEEE Transactions on Biomedical Engineering},
  volume={64},
  number={7},
  year={2016},
  publisher={IEEE}
}

@inproceedings{niu2020video,
  title={Video-based remote physiological measurement via cross-verified feature disentangling},
  author={Niu, Xuesong and Yu, Zitong and Han, Hu and Li, Xiaobai and Shan, Shiguang and Zhao, Guoying},
  booktitle={Computer Vision--ECCV 2020: 16th European Conference, Glasgow, UK, August 23--28, 2020, Proceedings, Part II 16},
  year={2020},
  organization={Springer}
}

@ARTICLE{SongPulseGAN,
  author={Song, Rencheng and Chen, Huan and Cheng, Juan and Li, Chang and Liu, Yu and Chen, Xun},
  journal={IEEE Journal of Biomedical and Health Informatics}, 
  title={PulseGAN: Learning to Generate Realistic Pulse Waveforms in Remote Photoplethysmography}, 
  year={2021},
  volume={25},
  number={5},
  doi={10.1109/JBHI.2021.3051176}}

@InProceedings{yu2022physformer,
    author    = {Yu, Zitong and Shen, Yuming and Shi, Jingang and Zhao, Hengshuang and Torr, Philip H.S. and Zhao, Guoying},
    title     = {PhysFormer: Facial Video-Based Physiological Measurement With Temporal Difference Transformer},
    booktitle = {Proceedings of the IEEE/CVF Conference on Computer Vision and Pattern Recognition (CVPR)},
    month     = {June},
    year      = {2022},
    pages     = {4186-4196}
}

@InProceedings{liu2022federated,
    author    = {Liu, Xin and Zhang, Mingchuan and Jiang, Ziheng and Patel, Shwetak and McDuff, Daniel},
    title     = {Federated Remote Physiological Measurement With Imperfect Data},
    booktitle = {Proceedings of the IEEE/CVF Conference on Computer Vision and Pattern Recognition (CVPR) Workshops},
    month     = {June},
    year      = {2022},
    pages     = {2155-2164}
}

@InProceedings{liu2023efficientphys,
    author    = {Liu, Xin and Hill, Brian and Jiang, Ziheng and Patel, Shwetak and McDuff, Daniel},
    title     = {EfficientPhys: Enabling Simple, Fast and Accurate Camera-Based Cardiac Measurement},
    booktitle = {Proceedings of the IEEE/CVF Winter Conference on Applications of Computer Vision (WACV)},
    month     = {January},
    year      = {2023},
    pages     = {5008-5017}
}

@inproceedings{wu2000photoplethysmography,
  title={Photoplethysmography imaging: a new noninvasive and noncontact method for mapping of the dermal perfusion changes},
  author={Wu, Ting and Blazek, Vladimir and Schmitt, Hans Juergen},
  booktitle={Optical techniques and instrumentation for the measurement of blood composition, structure, and dynamics},
  volume={4163},
  pages={62--70},
  year={2000},
  organization={SPIE}
}

@InProceedings{tulyakov2016self,
author = {Tulyakov, Sergey and Alameda-Pineda, Xavier and Ricci, Elisa and Yin, Lijun and Cohn, Jeffrey F. and Sebe, Nicu},
title = {Self-Adaptive Matrix Completion for Heart Rate Estimation From Face Videos Under Realistic Conditions},
booktitle = {Proceedings of the IEEE Conference on Computer Vision and Pattern Recognition (CVPR)},
month = {June},
year = {2016}
}

@article{liu2020multi,
  title={Multi-task temporal shift attention networks for on-device contactless vitals measurement},
  author={Liu, Xin and Fromm, Josh and Patel, Shwetak and McDuff, Daniel},
  journal={Advances in Neural Information Processing Systems},
  volume={33},
  pages={19400--19411},
  year={2020}
}

@article{yu2019remote,
  title={Remote photoplethysmograph signal measurement from facial videos using spatio-temporal networks},
  author={Yu, Zitong and Li, Xiaobai and Zhao, Guoying},
  journal={arXiv preprint arXiv:1905.02419},
  year={2019}
}

@inproceedings{liu2021metaphys,
  title={MetaPhys: few-shot adaptation for non-contact physiological measurement},
  author={Liu, Xin and Jiang, Ziheng and Fromm, Josh and Xu, Xuhai and Patel, Shwetak and McDuff, Daniel},
  booktitle={Proceedings of the conference on health, inference, and learning},
  pages={154--163},
  year={2021}
}

@article{liu2023robust,
  title={Robust remote photoplethysmography estimation with environmental noise disentanglement},
  author={Liu, Si-Qi and Yuen, Pong C},
  journal={IEEE Transactions on Image Processing},
  volume={33},
  pages={27--41},
  year={2023},
  publisher={IEEE}
}

@article{chen2000gaussianization,
  title={Gaussianization},
  author={Chen, Scott and Gopinath, Ramesh},
  journal={Advances in neural information processing systems},
  volume={13},
  year={2000}
}

@article{tabak2010density,
author = {Esteban G. Tabak and Eric Vanden-Eijnden},
title = {{Density estimation by dual ascent of the log-likelihood}},
volume = {8},
journal = {Communications in Mathematical Sciences},
number = {1},
publisher = {International Press of Boston},
pages = {217 -- 233},
year = {2010},
}

@article{chen2018neural,
  title={Neural ordinary differential equations},
  author={Chen, Ricky TQ and Rubanova, Yulia and Bettencourt, Jesse and Duvenaud, David K},
  journal={Advances in neural information processing systems},
  volume={31},
  year={2018}
}

@inproceedings{grathwohl2019scalable,
  title={Scalable reversible generative models with free-form continuous dynamics},
  author={Grathwohl, Will and Chen, Ricky TQ and Bettencourt, Jesse and Duvenaud, David},
  booktitle={International Conference on Learning Representations},
  volume={3},
  year={2019}
}

@inproceedings{finlay2020train,
  title={How to train your neural ODE: the world of Jacobian and kinetic regularization},
  author={Finlay, Chris and Jacobsen, J{\"o}rn-Henrik and Nurbekyan, Levon and Oberman, Adam},
  booktitle={International conference on machine learning},
  pages={3154--3164},
  year={2020},
  organization={PMLR}
}

@inproceedings{onken2021ot,
  title={Ot-flow: Fast and accurate continuous normalizing flows via optimal transport},
  author={Onken, Derek and Fung, Samy Wu and Li, Xingjian and Ruthotto, Lars},
  booktitle={Proceedings of the AAAI Conference on Artificial Intelligence},
  volume={35},
  number={10},
  pages={9223--9232},
  year={2021}
}

@inproceedings{tong2020trajectorynet,
  title={Trajectorynet: A dynamic optimal transport network for modeling cellular dynamics},
  author={Tong, Alexander and Huang, Jessie and Wolf, Guy and Van Dijk, David and Krishnaswamy, Smita},
  booktitle={International conference on machine learning},
  pages={9526--9536},
  year={2020},
  organization={PMLR}
}

@article{hyvarinen2005estimation,
  title={Estimation of non-normalized statistical models by score matching.},
  author={Hyv{\"a}rinen, Aapo and Dayan, Peter},
  journal={Journal of Machine Learning Research},
  volume={6},
  number={4},
  year={2005}
}

@article{vincent2011connection,
  title={A connection between score matching and denoising autoencoders},
  author={Vincent, Pascal},
  journal={Neural computation},
  volume={23},
  number={7},
  pages={1661--1674},
  year={2011},
  publisher={MIT Press}
}

@article{song2019generative,
  title={Generative modeling by estimating gradients of the data distribution},
  author={Song, Yang and Ermon, Stefano},
  journal={Advances in neural information processing systems},
  volume={32},
  year={2019}
}

@article{albergo2022building,
  title={Building normalizing flows with stochastic interpolants},
  author={Albergo, Michael S and Vanden-Eijnden, Eric},
  journal={arXiv preprint arXiv:2209.15571},
  year={2022}
}

@article{lipman2022flow,
  title={Flow matching for generative modeling},
  author={Lipman, Yaron and Chen, Ricky TQ and Ben-Hamu, Heli and Nickel, Maximilian and Le, Matt},
  journal={arXiv preprint arXiv:2210.02747},
  year={2022}
}

@article{tong2023conditional,
  title={Conditional flow matching: Simulation-free dynamic optimal transport},
  author={Tong, Alexander and Malkin, Nikolay and Huguet, Guillaume and Zhang, Yanlei and Rector-Brooks, Jarrid and Fatras, Kilian and Wolf, Guy and Bengio, Yoshua},
  journal={arXiv preprint arXiv:2302.00482},
  volume={2},
  number={3},
  year={2023}
}

@techreport{amos2016openface,
  title={OpenFace: A general-purpose face recognition
    library with mobile applications},
  author={Amos, Brandon and Bartosz Ludwiczuk and Satyanarayanan, Mahadev},
  year={2016},
  institution={CMU-CS-16-118, CMU School of Computer Science},
}

@InProceedings{Micaelli_2023_CVPR,
    author    = {Micaelli, Paul and Vahdat, Arash and Yin, Hongxu and Kautz, Jan and Molchanov, Pavlo},
    title     = {Recurrence Without Recurrence: Stable Video Landmark Detection With Deep Equilibrium Models},
    booktitle = {Proceedings of the IEEE/CVF Conference on Computer Vision and Pattern Recognition (CVPR)},
    month     = {June},
    year      = {2023},
    pages     = {22814-22825}
}

@article{dhariwal2021diffusion,
  title={Diffusion models beat gans on image synthesis},
  author={Dhariwal, Prafulla and Nichol, Alexander},
  journal={Advances in neural information processing systems},
  volume={34},
  pages={8780--8794},
  year={2021}
}

@article{kingma2014adam,
  title={Adam: A method for stochastic optimization},
  author={Kingma, Diederik P and Ba, Jimmy},
  journal={arXiv preprint arXiv:1412.6980},
  year={2014}
}

@article{li2020scalable,
  title={Scalable gradients for stochastic differential equations},
  author={Li, Xuechen and Wong, Ting-Kam Leonard and Chen, Ricky T. Q. and Duvenaud, David},
  journal={International Conference on Artificial Intelligence and Statistics},
  year={2020}
}

@article{kidger2021neuralsde,
  title={Neural {SDE}s as {I}nfinite-{D}imensional {GAN}s},
  author={Kidger, Patrick and Foster, James and Li, Xuechen and Oberhauser, Harald and Lyons, Terry},
  journal={International Conference on Machine Learning},
  year={2021}
}

@Article{00006534-990000000-02657,
author={Shenoy, Vineet R.
and Kingston, Carly Q.
and Singh, Mantej
and Fleming, Ike C.
and Durr, Nicholas J.
and Chellappa, Rama
and Giladi, Aviram M.},
title={``Perfusion Assessment of Healthy and Injured Hands Using Video-Based Deep Learning Models''},
journal={Plastic and Reconstructive Surgery},
year={2025},
issn={0032-1052},
url={https://journals.lww.com/plasreconsurg/fulltext/9900/_perfusion_assessment_of_healthy_and_injured_hands.2657.aspx}
}

@software{Falcon_PyTorch_Lightning_2019,
author = {Falcon, William and {The PyTorch Lightning team}},
doi = {10.5281/zenodo.3828935},
license = {Apache-2.0},
month = mar,
title = {{PyTorch Lightning}},
url = {https://github.com/Lightning-AI/lightning},
version = {1.4},
year = {2019}
}

@article{alian2014photoplethysmography,
  title={Photoplethysmography},
  author={Alian, Aymen A and Shelley, Kirk H},
  journal={Best Practice \& Research Clinical Anaesthesiology},
  volume={28},
  number={4},
  pages={395--406},
  year={2014},
  publisher={Elsevier}
}

@article{liu2022toolbox,
  title={rPPG-Toolbox: Deep Remote PPG Toolbox},
  author={Liu, Xin and Narayanswamy, Girish and Paruchuri, Akshay and Zhang, Xiaoyu and Tang, Jiankai and Zhang, Yuzhe and Wang, Yuntao and Sengupta, Soumyadip and Patel, Shwetak and McDuff, Daniel},
  journal={arXiv preprint arXiv:2210.00716},
  year={2022}
}

@article{mcduff2023camera,
  title={Camera measurement of physiological vital signs},
  author={McDuff, Daniel},
  journal={ACM Computing Surveys},
  volume={55},
  number={9},
  pages={1--40},
  year={2023},
  publisher={ACM New York, NY}
}

@article{cohen2009pearson,
  title={Pearson correlation coefficient},
  author={Cohen, Israel and Huang, Yiteng and Chen, Jingdong and Benesty, Jacob and Benesty, Jacob and Chen, Jingdong and Huang, Yiteng and Cohen, Israel},
  journal={Noise reduction in speech processing},
  pages={1--4},
  year={2009},
  publisher={Springer}
}

@article{arridge2023inverse,
  title={Inverse problems with learned forward operators},
  author={Arridge, Simon and Hauptmann, Andreas and Korolev, Yury},
  journal={arXiv preprint arXiv:2311.12528},
  year={2023}
}

@article{Lunz2020OnLO,
  title={On Learned Operator Correction in Inverse Problems},
  author={Sebastian Lunz and Andreas Hauptmann and Tanja Tarvainen and Carola-Bibiane Sch{\"o}nlieb and Simon Robert Arridge},
  journal={SIAM journal on imaging sciences},
  year={2020},
  volume={14},
  pages={92 - 127},
  url={https://api.semanticscholar.org/CorpusID:224899525}
}

@article{chen2016variational,
  title={Variational lossy autoencoder},
  author={Chen, Xi and Kingma, Diederik P and Salimans, Tim and Duan, Yan and Dhariwal, Prafulla and Schulman, John and Sutskever, Ilya and Abbeel, Pieter},
  journal={arXiv preprint arXiv:1611.02731},
  year={2016}
}

@inproceedings{bowman2016generating,
  title={Generating sentences from a continuous space},
  author={Bowman, Samuel and Vilnis, Luke and Vinyals, Oriol and Dai, Andrew and Jozefowicz, Rafal and Bengio, Samy},
  booktitle={Proceedings of the 20th SIGNLL conference on computational natural language learning},
  pages={10--21},
  year={2016}
}

@inproceedings{serban2017hierarchical,
  title={A hierarchical latent variable encoder-decoder model for generating dialogues},
  author={Serban, Iulian and Sordoni, Alessandro and Lowe, Ryan and Charlin, Laurent and Pineau, Joelle and Courville, Aaron and Bengio, Yoshua},
  booktitle={Proceedings of the AAAI conference on artificial intelligence},
  volume={31},
  number={1},
  year={2017}
}

@inproceedings{xu2024assessing,
  title={Assessing sample quality via the latent space of generative models},
  author={Xu, Jingyi and Le, Hieu and Samaras, Dimitris},
  booktitle={European Conference on Computer Vision},
  pages={449--464},
  year={2024},
  organization={Springer}
}

@inproceedings{dai2020usual,
  title={The usual suspects? Reassessing blame for VAE posterior collapse},
  author={Dai, Bin and Wang, Ziyu and Wipf, David},
  booktitle={International conference on machine learning},
  pages={2313--2322},
  year={2020},
  organization={PMLR}
}

@article{lucas2019don,
  title={Don't blame the elbo! a linear vae perspective on posterior collapse},
  author={Lucas, James and Tucker, George and Grosse, Roger B and Norouzi, Mohammad},
  journal={Advances in Neural Information Processing Systems},
  volume={32},
  year={2019}
}

@article{silvestro2020prior,
  title={Prior choice affects ability of Bayesian neural networks to identify unknowns},
  author={Silvestro, Daniele and Andermann, Tobias},
  journal={arXiv preprint arXiv:2005.04987},
  year={2020}
}

@phdthesis{rasmussen1997evaluation,
  title={Evaluation of Gaussian processes and other methods for non-linear regression},
  author={Rasmussen, Carl Edward},
  year={1997},
  school={University of Toronto Toronto, Canada}
}

@inproceedings{sun2018differentiable,
  title={Differentiable compositional kernel learning for Gaussian processes},
  author={Sun, Shengyang and Zhang, Guodong and Wang, Chaoqi and Zeng, Wenyuan and Li, Jiaman and Grosse, Roger},
  booktitle={International Conference on Machine Learning},
  pages={4828--4837},
  year={2018},
  organization={PMLR}
}

@article{eriksson2018scaling,
  title={Scaling Gaussian process regression with derivatives},
  author={Eriksson, David and Dong, Kun and Lee, Eric and Bindel, David and Wilson, Andrew G},
  journal={Advances in neural information processing systems},
  volume={31},
  year={2018}
}

@article{sohn2015learning,
  title={Learning structured output representation using deep conditional generative models},
  author={Sohn, Kihyuk and Lee, Honglak and Yan, Xinchen},
  journal={Advances in neural information processing systems},
  volume={28},
  year={2015}
}

@article{jospin2022hands,
  title={Hands-on Bayesian neural networks—A tutorial for deep learning users},
  author={Jospin, Laurent Valentin and Laga, Hamid and Boussaid, Farid and Buntine, Wray and Bennamoun, Mohammed},
  journal={IEEE Computational Intelligence Magazine},
  volume={17},
  number={2},
  pages={29--48},
  year={2022},
  publisher={IEEE}
}
\bibliographystyle{tmlr}

\appendix
\section{Appendix}

\subsection{Implementation Details: Hyperparameters}\label{sec:appendix_hyperparameters}

As mentioned in the main paper, we implement our code in PyTorch using the PyTorch Lightning~\cite{Falcon_PyTorch_Lightning_2019} library. Our models are identical learnable UNets from~\cite{dhariwal2021diffusion}. The hyperparameters used to train our models are in Table~\ref{tab:unet_parameters}.

\begin{table}[]
    \centering
    \caption{Hyperparameters used to train our model}
    \begin{tabular}{lccc}
        \toprule
        Parameter & MMSE-HR & PURE & UBFC-rPPG \\
        \midrule
        Passband Frequency (bpm) & 42 & 42 & 42  \\
        Cutoff Frequency (bpm) & 150 & 150 & 150  \\
        Num Taps & 5 & 5 & 5 \\
        Frame Stride (sec) & 0.4 & 0.4 & 0.4 \\
        Frame Stride Test (sec) & 10 & 10 & 10\\
        FPS & 25 & 30 & 30\\
        Signal Length & 250 & 300 & 300 \\
        Num Res Block & 1 & 1 & 1 \\
        Attention Resolution & [2, 4] & [2, 4] & [2, 4] \\
        \hline
        Learning Rate & 1e-3 & 1e-3 & 1e-3 \\
        Weight Decay & 0 & 0 & 0 \\
        Dropout & 0 & 0 & 0 \\
        Epochs & 10 & 15 & 15 \\
        \bottomrule
    \end{tabular}
    
    \label{tab:unet_parameters}
\end{table}

\subsection{Evaluation Metrics}\label{sec:appendix_eval_mets}

We provide a more complete discussion of the evaluation metrics below. Let $y_{i,j}$ be the ground-truth, $\mu_{i, j}$ be the mean, and $\sigma_{i, j}$ of samples generated from test sample $i$ in frequency bin $j$.

\begin{itemize}
    \item \textbf{Mean Absolute Error (MAE): } \VS{A measure of the difference between a predicted quantity $\hat{R}_i$ and the ground-truth quantity $R_i$. This quantity is less sensitive to outliers than the RMSE. We use this metric to measure the pulse rate estimation error as well as the spectrum estimation error. Lower values indicate better performance, and the lowest possible value is 0.}
    \begin{equation}
    \text{MAE } = \frac{1}{N} \sum_{i=1}^N |R_i - \hat{R}_i |
\end{equation}
    \item \textbf{Root Mean Squared Error (RMSE):} \VS{A measure of the difference between a predicted quantity $\hat{R}_i$ and the ground-truth quantity $R_i$, which weights outliers heavily. We use this metric to measure the pulse rate estimation error as well as the spectrum estimation error. Lower values indicate better performance, and the lowest possible value is 0.}
    \begin{equation}
        \text{ RMSE } = \sqrt{\frac{1}{N} \sum_{i=1}^N (R_i - \hat{R}_i)^2}
    \end{equation}
    \item \textbf{Pearson Correlation Coefficient (PCC)}: \VS{This quantity measure the linear relationship and strength of the linear relationship between two variables. We use this metric to measure the pulse rate estimation error as well as the spectrum estimation error. Higher values indicate better performance, and the highest possible score is 1.}
    \begin{equation}
        \rho = \frac{\sum_{i=1}^N (R_i - \mu_{R_i})(\hat{R}_i - \mu_{\hat{R}_i})}{\sqrt{\sum_{i=1}^N (R_i - \mu_{R_i})^2(\hat{R}_i - \mu_{\hat{R}_i})^2}}
    \end{equation}

    \VS{where $\hat{R}_i$ is the predicted pulse rate, $R_i$ is the ground-truth pulse rate, $N$ is the number of time windows, and $\mu_{R_i}$ and $\mu_{\hat{R}_i}$ are the means of the predicted and ground-truth pulse rates, respectively.}
    
    \item \textbf{Negative Log Likelihood (NLL)}\cite{tran2020methods}: \VS{NLL provides an overall measure of both the predictive accuracy and quality of the predictive uncertainty quantification. Given the mean and variance of a set of predictions, construct a Gaussian distribution. Then measure the likelihood of the ground-truth given this distribution. Lower values are better, and can be negative.}
    \begin{equation}
        \text{NLL}(\mathbf{y}, \mathbf{\mu}, \mathbf{\sigma}) = - \frac{1}{N\cdot L} \sum_{i=1}^N\sum_{j=1}^L \text{ln} P\big(y_{i, j} | \mathcal{N}(\mu_{i,j}, \sigma_{i, j})\big)
    \end{equation}
    \item \textbf{Continuous Ranked Probability Score}~\cite{gneiting2007strictly}: \VS{This quantity measures how good the predicted distribution is compared to the ground-truth value, and is often used in meteorolgy. Lower values are better and the lowest possible value is zero. Under the assumption of the Gaussian,  the CRPS is defined as:}
    \begin{equation}
        \text{CRPS}(\mathbf{y}, \mathbf{\mu}, \mathbf{\sigma}) = \frac{1}{N\cdot L}\sum_{i=1}^N\sum_{j=1}^L \sigma_{i, j} \left[ \frac{1}{\sqrt{\pi}}  - 2\psi\Big(\frac{y_{i, j} - \mu_{i, j}}{\sigma_{i, j}}\Big) - \frac{y_{i, j} - \mu_{i, j}}{\sigma_{i, j}}\Big(2\Phi\big(\frac{y_{i, j} - \mu_{i, j}}{\sigma_{i, j}}\big) - 1\Big) \right]
    \end{equation}
    where $\psi$ and $\Phi$ denote the PDF and CDF of a standard gaussian.
    \item \textbf{Sharpness}\cite{tran2020methods}: \VS{A quantity that measures the concentration of the predicted distribution, independent of the ground-truth distribution. Given the CDF of a sample $i$ in bin $j$ the sharpness is defined as}
    \begin{equation}
        \text{Sharpness} = \frac{1}{N\cdot L} \sum_{i=1}^N\sum_{j=1}^L \text{var}(F_{i,j})
    \end{equation}
    \VS{Higher values indicate better performance.}
    \item \textbf{Check Score/Pinball Loss}\cite{steinwart2011estimating}: \VS{This metric measure the prediction of quantiles. This is a non-symmetric loss that penalizes larger quantiles more. Lower values are better}
    \begin{equation}
        \text{Check Score/Pinball Loss} =  \frac{1}{N\cdot L} \sum_{i=1}^N\sum_{j=1}^Q
        L(y_{i}, \hat{y}_i)=
        \begin{cases}
            (y_ - \hat{y}_i) \cdot \tau_j & \text{if } y \geq \hat{y} \\
            (y_ - \hat{y}_i)(1 - \tau_j) & \text{if } y < \hat{y}
        \end{cases}
    \end{equation}
    \VS{where $y_i$ is the true value, $\hat{y}_i$ is the predicted quantile and $\tau_j$ is the target quantile between 0 and 1. Lower values are better.}
    \item \textbf{Interval Score}~\cite{gneiting2007strictly}: \VS{Given a lower bound $L$ and upper bound $U$ that is intended to cover the true value $y_{i,j}$ with probability $1 - \alpha$, the interval score is defined as}

    \begin{equation}
        S_\alpha(L, U, y_{i,j}) = (U - L) + \frac{2}{\alpha}(L - y) \cdot \mathbf{1}(y < L) + \frac{2}{\alpha}(y - U) \cdot \mathbf{1}(y > U)
    \end{equation}

    \VS{This measures whether the true value falls within the interval, and whether the width of the interval is narrow. Smaller values are better.}
\end{itemize}

\subsection{Preliminary Investigation}\label{subsec:appendix_prelimanary}

\subsubsection{Inadequacy of the $A=F^{-1}$ forward model}\label{sec:appendix_inadequacy}

\VS{In the main manuscript, we implement PMC-iPPG to show the inadequacy of the $A=F^{-1}$ forward model of previous work. The work of ~\cite{nowara2020near} makes the strong assumption that the camera
pixel signal is simply the pulse signal with additive gaussian noise, which makes $A = F^{-1}$ suitable; they achieve good results because of careful signal extraction/noise removal during the time-series extraction phase, not because of the signal recovery algorithm. The work of \cite{shenoy2023unrolled} also uses $A = F^{-1}$ ; however, by “unrolling” proximal gradient descent with deep denoising operators, they effectively correct an inexact forward operator. Our experiments using PMC-iPPG, which only learns the signal prior, shows that $A = F^{-1}$ is suboptimal as described in~\cite{nowara2020near, shenoy2023unrolled}.}

\VS{Our next step was to address this deficiency by learning the forward model~\cite{arridge2023inverse, Lunz2020OnLO}. However, we found this problem to be ill-posed for a variety of reasons as described in Appendix A.3.1 and Figure~\ref{fig:face_change}.}

\begin{itemize}
    \item \VS{\textbf{Stochastiscity:} The mapping between the blood volume pulse signal and the camera pixel signal is stochastic and time-dependent due to unconstrained motion and its associated changes in specular and diffuse reflections. This is reflected in Figure~\ref{fig:face_change}, which shows the significant changes in the signal over four second time intervals.}
    \item \VS{\textbf{One-to-many mapping:} The pixel signal from different regions of the face correspond to one blood volume pulse signal, which is captured at the finger. A deterministic function can not map a single ground-truth pulse to different facial regions. An example of this is shown in Figure~\ref{fig:face_change} when comparing the pixel signals at the left and right cheek.}
\end{itemize}

\VS{All these lead us to the decision to use stochastic interpolants, which \textit{implicitly} learns the optimal transport between the distribution of camera pixel signals and BVP signals. Modeling the signal recovery process as such allows for greater flexibility, especially when accounting for the physics of the problem as well as long-tailed events such as sharp motions. By matching distributions, RIS-iPPG captures non-linear interactions that define the forward process while also allowing for uncertainty quantification, one of our primary goals. }

\begin{figure}
    \centering
    \includegraphics[width=0.8\textwidth]{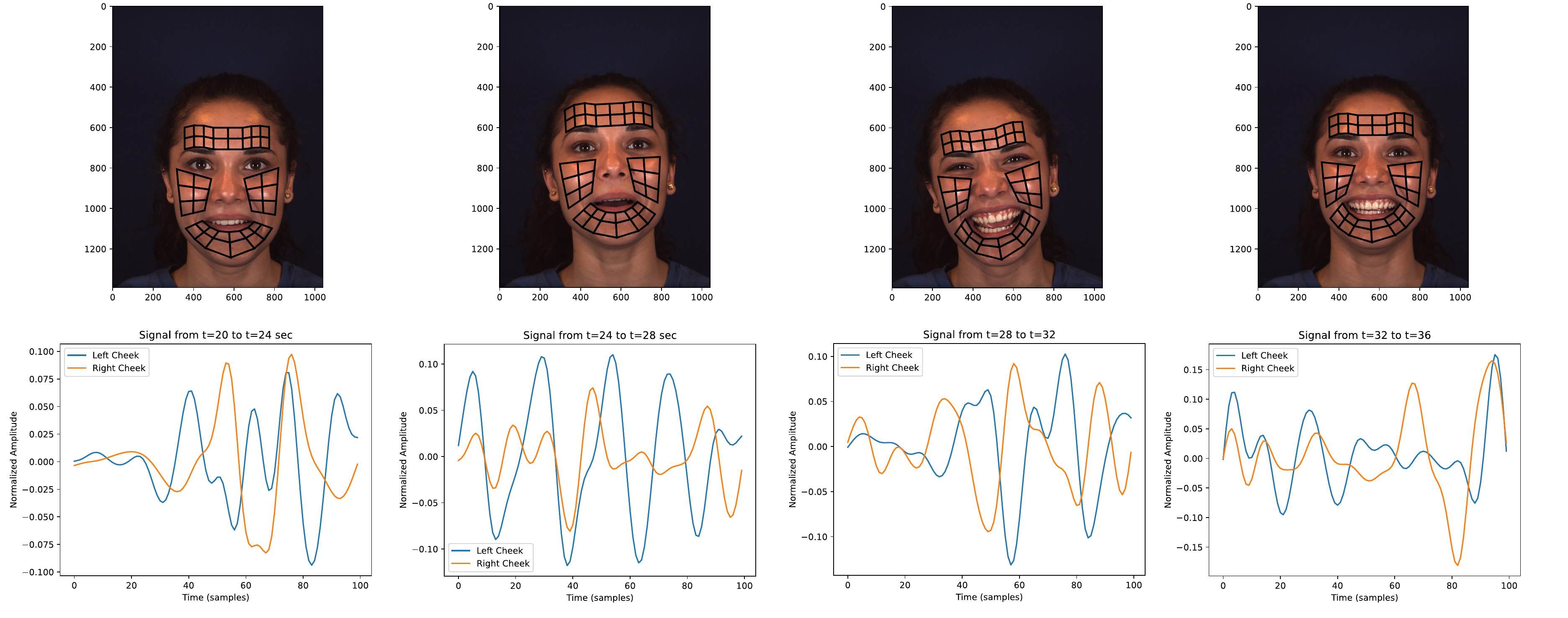}
    \caption{Change in facial position, and region detection over a short time period}
    \label{fig:face_change}
\end{figure}

\subsubsection{Conditioning on a guidance signal}\label{subsec:appendix_guidance}
Many previous works solve traditional imaging inverse problems via guidance. We experimented with guidance in RIS-iPPG. We reasoned that our guidance signal should tell us something about the noise when extracting signal measurements from video; therefore, we use the raw signals from the color channels of the video frames, which we assume capture motion noise via sharp changes in intensity. Then, we used the RIS framework to map our extracted signal from Section~\ref{sec:implementation} to the ground-truth. The learned flow and score networks used the raw color channel signals as guidance when predicting flow and score. We then evaluated our learned models with guidance as in the main paper.

We compare the heart rate estimation performance using each color channel as a guidance signal, as well as using no guidance in Table~\ref{tab:guidance_testing}. Clearly, the guidance signal made performance worse. However, we can not conclude that guidance signals, in general, hurt iPPG performance as we did not perform a thorough analysis of the entire design space of guidance signals. Nevertheless, we argue that a model regularized using the RCL loss is better as it is independent of the noise profile and focuses only on the temporal correlations of the pulse.

\begin{table}[h!]
    \centering
    \caption{Testing various guidance signals on the MMSE-HR dataset}
    \begin{tabular}{c|c|c}
        \toprule
         Guidance Signal & MAE (bpm) $\downarrow$ & RMSE (bpm) $\downarrow$ \\
         \hline
         Blue & 5.79 & 12.71 \\
         Red & 4.17 & 11.78 \\
         Green & 4.05 & 11.18\\
         No Guidance & 3.72 & 6.55 \\
         \bottomrule
    \end{tabular}
    
    \label{tab:guidance_testing}
\end{table}

\begin{table}[h!]
    \centering
    \caption{\VS{Comparing Conditional VAEs and RIS-iPPG on the MMSE-HR dataset}}
    \resizebox{0.60\textwidth}{!}{\begin{tabular}{c|c|c}
         \toprule
         Method &  MAE (bpm) $\downarrow$ & RMSE (bpm) $\downarrow$ \\
         \hline
         Conditional VAEs & 6.17 & 9.96 \\
         Bayesian Neural Networks & 3.12 & 5.63 \\
         RIS-iPPG & 1.97 & 3.73 \\
         \bottomrule
    \end{tabular}}
    \label{tab:pmc_ippg_testing}
\end{table}

\begin{table}[]
    \caption{Varying the RCL loss weight $\lambda_{\text{RCL}}$ and the stride $\delta$ on the UBFC-rPPG Dataset}
    \centering
    \setlength\tabcolsep{6pt}
    
     \resizebox{0.75\textwidth}{!}{\begin{tabular}{cc|cccccccccc|c}
    \toprule
    & & \multicolumn{10}{c|}{Window Shift $\delta$} &   \\
    & & 1 & 2 & 3 & 4 & 5 & 6 & 7 & 8 & 9 & 10  & Average\\
    \hline
    \parbox[t]{2mm}{\multirow{10}{*}{\rotatebox[origin=c]{90}{RCL weight $\lambda_{\text{rcl}}$}}} & 0.0 & 3.72 & 3.72 & 3.72 & 3.72 & 3.72 & 3.72 & 3.72 & 3.72 & 3.72 & 3.72 & 3.72 \\
    & 0.1 & 6.71 & 3.41 & 1.19 & 4.55 & 2.79 & 3.59 & 3.05 & 7.61 & 2.39 & 3.23 & 3.85 \\
    & 0.2 & 3.77 & 3.71 & 5.63 & 2.69 & 2.77 & 2.87 & 3.59 & 3.47 & 3.71 & 0.53 & 3.27\\
    & 0.3 & 1.61 & 4.01 & 2.03 & 1.79 & 4.74 & 2.03 & 1.67 & 5.09 & 5.09 & 2.39 & 3.04\\
    & 0.4 & 2.59 & 3.35 & 2.79 & 1.07 & 4.19 & 1.67 & 2.61 & 2.69 & 2.09 & 1.07 & 2.47\\
    & 0.5 & 3.53 & 3.23 & 3.89 & 1.13 & 1.92 & 0.77 & 3.47 & 6.41 & 1.55 & 1.13 & 2.70\\
    & 0.6 & 3.91 & 2.95 & 2.45 & 0.59 & 2.12 & 3.85 & 3.65 & 2.69 & 0.77 & 0.48 & \textbf{2.44}\\
    & 0.7 & 5.27 & 1.73 & 2.57 & 1.73 & 2.92 & 2.09 & 2.44 & 4.01 & 2.45 & 3.71 & 2.89\\
    & 0.8 & 3.53 & 1.97 & 0.83 & 2.81 & 4.57 & 1.97 & 2.51 & 2.21 & 3.35 & 3.77 & 2.75\\
    & 0.9 & 2.75 & 4.79 & 3.77 & 2.21 & 1.49 & 1.85 & 1.85 & 2.45 & 1.67 & 2.51 & 2.53\\
    & 1.0 & 2.63 & 4.31 & 5.27 & 2.21 & 2.79 & 3.71 & 4.91 & 2.21 & 1.61 & 2.45 & 3.21\\
    \hline
    Average & & 3.63 & 3.37 & 3.10 & 2.31 & 3.09 & 2.55 & 3.04 & 3.86 & 2.58 & \textbf{2.27} &  \\
    \bottomrule

    \end{tabular}}
    \label{tab:rcl_ablation2}
\end{table}

\subsection{Comparison against other posterior sampling methods}

\VS{Below we address a comparison of RIS-iPPG against more standard posterior sampling methods.}

\VS{\textbf{Complexity of the Task}: While our signals are one-dimensional, the \textbf{inverse mapping} between the camera pixel signal and pulse signal is highly complex. In Figure~\ref{fig:face_change}, we show the how the camera pixel signal changes during sharp changes in motion every four seconds, as well as the resulting signals from two different regions of the face. These signals contain high-dimensional, non-Gaussian noise sources such as motion, specular reflections, quantization noise, etc. in the same frequency bands as the pulse. Simple models like Gaussian Processes (GP) or BNNs often assume Gaussian noise or unimodal posteriors, an assumption violated by our data distributions. RIS-iPPG can more effectively model this data.}

\VS{\textbf{Empirical Analysis}: We conduct experiments to measure the performance of Conditional VAEs (cVAE) and Bayesian Neural Networks (BNN) against flow-based models. We evaluate these methods for pulse rate estimation performance in Table~\ref{tab:pmc_ippg_testing}. For Conditional VAEs we adapt the work~\cite{sohn2015learning} and use the camera pixel signal as our conditioning element. For the BNN, we follow the guide of~\cite{jospin2022hands} and set the prior as recommended in the paper. Note that in Table~\ref{tab:pmc_ippg_testing}, we see significant increases in pulse rate estimation error using BNNs and cVAE compared to RIS-iPPG.}

\VS{\textbf{Performance of cVAE}: We notice that pulse rate estimation performance for VAEs is significantly worse than that of the flow-based models. VAEs are known to produce blurry samples due to the ELBO objective; for a task like ours in which we are capturing waveform morphology, VAEs will smoothe features such as the systolic peak resulting in inaccurate pulse rate estimation. Additionally, previous work has noted that VAEs tend to ignore latent variables if the decoder is overly powerful in sequence modeling tasks~\cite{chen2016variational, serban2017hierarchical, bowman2016generating}. It is also well-known that VAEs suffer from posterior collapse~\cite{dai2020usual, lucas2019don}
. Finally, VAEs often show poorer quality of samples as well~\cite{xu2024assessing}.}

\VS{\textbf{Performance of BNNs}: Bayesian neural networks also have their own challenges. One of the biggest challenges is carefully defining the prior distribution over the weights of the BNN~\cite{jospin2022hands}. Previous work has shown that the performance of BNNs are greatly affected by the choice of prior~\cite{silvestro2020prior}, and often the choice of distribution over the weights is non-intuitive for the selected task. It is unclear how the choice of prior over the weights translate to a prior over signal morphology. Furthermore, inference methods in BNNs often have drawbacks: MCMC methods are exact but do not scale well to large models while variational methods are inexact and only focus on a single mode of the posterior~\cite{jospin2022hands}. RIS-iPPG, without explicity setting a prior, effectively samples waveforms while capturing multiple modes of the posterior.}

\VS{\textbf{Performance of GPs}: Gaussian Processes (GPs) models suffer from computational complexity concerns, scaling cubicly with the size of the dataset~\cite{eriksson2018scaling}. Additionally, the choice of kernel in GPs often affects the generalization ability of the model~\cite{rasmussen1997evaluation, sun2018differentiable}; as in BNNs, prior knowledge about the kernel with reference to the task is critical for successful application of BNNs. Often, these kernels fail to capture the complex and non-stationary nature of motion artifacts.} 

\VS{\textbf{Benefit of RIS-iPPG}: Flow-based diffusion models solve many of the problems of VAEs, BNNs, and GPs. We can model complex data distributions, capture distinct waveform morphologies, model highly non-Gaussian and non-stationary noise, and capture multimodal posteriors. Given the recent research showing state-of-the-art performance and widespread adoption of flow-based diffusion models, it is critical that we address research in iPPG with respect to such models.}

\subsection{\VS{Inference Speed of RIS-iPPG versus other posterior sampling methods}}\label{subsec:runtime}

\VS{We measure the runtime performance at inference for three sampling methods in Table~\ref{tab:runtime}. At inference, BNNs are best in terms of speed; cVAEs and RIS-iPPG are about the same. However, the models in RIS-iPPG need to be evaluated for every timsetp; depending on the granularity with which the SDE steps are discretized, the cost of RIS-iPPG can be much higher. Note that we do not claim RIS-iPPG to be superior in terms of runtime; however, fast inference is important for real-world adoption and future work should aim to improve inference speed. }

\begin{table}[h!]
        \centering
        \caption{\VS{Comparing inference speed of different methods. All were tested on a single NVIDIA A5000 GPU}}
        \resizebox{0.40\textwidth}{!}{\begin{tabular}{c|c}
             \toprule
             Method &  Inference Time (ms)\\
             \hline
             Conditional VAE & 845.06 \\
             Bayesian Neural Networks & 473.91 \\
             RIS-iPPG & 850.97 \\
             \bottomrule
        \end{tabular}}
        \label{tab:runtime}
    \end{table}

\subsection{\VS{Deriving the Denoiser}}\label{subsec:denoiser_deriving}

\VS{In Section~\ref{sec:unreg_si}, we defined the score as $\mathbf{s}(t, \mathbf{x}) = -\mathbf{n_z}(t, \mathbf{x})/\gamma(t)$. We show how this is derived below.}

Define our stochastic interpolant as before:

\begin{equation}\label{eq:appendix_interpolant}
    \mathbf{x}_t = \underbrace{I(t, \mathbf{x}_0, \mathbf{x}_1)}_{\mathbf{x}'} + \gamma(t)\mathbf{z},  \text{ where } \mathbf{z} \sim \mathcal{N}(0, Id), t\in[0, 1]
\end{equation}

The distribution of $\mathbf{x}_t$ is $p(\mathbf{x}_t)\sim\mathcal{N}(\mathbf{x}_t|\mathbf{x}', \gamma^2(t))$. Find the gradient of the log of this distribution with respect to $\mathbf{x}_t$ is then given by

\begin{equation}
    \begin{split}
        \nabla_{\mathbf{x}}\text{log } p(\mathbf{x}_t) &=  \nabla_{\mathbf{x}} \text{log } \frac{1}{(\sqrt{2\pi \gamma^2(t)})^d} \text {exp }\Big(\frac{-||\mathbf{x}_t - \mathbf{x}'||^2}{2\gamma^2(t)}\Big) \\
        &= \nabla_{\mathbf{x}}\Big(\frac{-||\mathbf{x}_t - \mathbf{x}'||^2}{2\gamma^2(t)} - \text{log }\big(\sqrt{2\pi \gamma^2(t)}\big)^d\Big) \\
        &= -\frac{\mathbf{x}_t - \mathbf{x}}{\gamma^2(t)} = - \frac{\mathbf{z}}{\gamma(t)}
    \end{split}
\end{equation}

This is the claim from Section~\ref{sec:unreg_si}.

\subsection{\VS{Intuition Behind the RCL loss}}

\VS{We regularize our models by minimizing the RCL loss, or one-minus the normalized correlation coefficient of the error residuals between time-shifted versions of the training signal. The intuition behind this is based on the slowly-varying nature of physiological signals: in short time-windows barring an acute stressor, human physiology remains nearly constant. Therefore, in our model, the error between the predicted flows in overlapping time windows should be nearly identical. We show this through the following theorem:}

\begin{theorem}
    \textit{The RCL loss of Equation~\ref{eq:rcl_loss} is minimized when the error residuals point in the same direction.}
\end{theorem}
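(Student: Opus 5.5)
The plan is to rewrite the RCL loss as one minus the cosine of the angle between two \emph{centered} residual vectors. Cauchy--Schwarz then gives both the lower bound and the equality case.

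First I fix the reading of Equation~\ref{eq:rcl_loss}. For it to be the Pearson coefficient, $\mu_p$ and $\mu_q$ must be the sums $\mu_p=\1^\top\mathbf{p}$ and $\mu_q=\1^\top\mathbf{q}$, i.e.\ $T$ times the sample means. Under this reading the standard identity holds:
\begin{equation*}
T\,\mathbf{p}^\top\mathbf{q}-\mu_p\mu_q = T\,\tilde{\mathbf{p}}^\top\tilde{\mathbf{q}},\qquad T\,\mathbf{p}^\top\mathbf{p}-\mu_p^2=T\norm{\tilde{\mathbf{p}}}_2^2 ,
\end{equation*}
where $\tilde{\mathbf{p}}=\mathbf{p}-\tfrac{1}{T}\mu_p\1$ and $\tilde{\mathbf{q}}$ is defined the same way. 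This is a short expansion of $\sum_i(p_i-\bar p)(q_i-\bar q)$. It follows that
\begin{equation*}
\mathcal{L}_{\text{RCL}}(\mathbf{p},\mathbf{q}) = 1-\frac{\tilde{\mathbf{p}}^\top\tilde{\mathbf{q}}}{\norm{\tilde{\mathbf{p}}}_2\norm{\tilde{\mathbf{q}}}_2}=1-\cos\angle(\tilde{\mathbf{p}},\tilde{\mathbf{q}}).
\end{equation*}
This requires the nondegeneracy assumption $\tilde{\mathbf{p}},\tilde{\mathbf{q}}\neq 0$, meaning neither residual is constant in time. Without it the loss is undefined, so I will state the assumption explicitly.

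Next I apply Cauchy--Schwarz: $|\tilde{\mathbf{p}}^\top\tilde{\mathbf{q}}|\le\norm{\tilde{\mathbf{p}}}_2\norm{\tilde{\mathbf{q}}}_2$. Hence $0\le\mathcal{L}_{\text{RCL}}\le 2$, so the global minimum value is at least $0$. For the equality case, Cauchy--Schwarz is tight exactly when $\tilde{\mathbf{p}}=c\,\tilde{\mathbf{q}}$ for some scalar $c$. The correlation equals $+1$, not $-1$, precisely when $c>0$. Therefore $\mathcal{L}_{\text{RCL}}=0$ if and only if the centered residuals are positive multiples of each other, i.e.\ they point in the same direction. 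Conversely, plugging $\tilde{\mathbf{p}}=c\tilde{\mathbf{q}}$ with $c>0$ into the formula gives $0$, which shows the minimum is attained.

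The main obstacle is interpretive rather than technical. Read literally, with $\mu_p,\mu_q$ as means, Equation~\ref{eq:rcl_loss} is not exactly the Pearson coefficient. Also, ``same direction'' holds only after centering: the loss is invariant to adding constant offsets to $\mathbf{p}$ or $\mathbf{q}$. I will therefore state the theorem for the centered residuals. I will add the remark that when the residuals have zero mean (e.g.\ after the bandpass filtering used in preprocessing), this is literally $\mathbf{p}=c\,\mathbf{q}$ with $c>0$.
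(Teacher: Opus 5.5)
Your proposal is correct and takes essentially the same route as the paper. The paper likewise zero-means and $L_2$-normalizes $\mathbf{p},\mathbf{q}$ so that the correlation becomes $\cos\phi$, which is maximized at $\phi=0$. Your version is tighter in three ways. Reading $\mu_p,\mu_q$ as sums makes the formula exactly the Pearson coefficient. The Cauchy--Schwarz equality case gives both the ``if and only if'' (with $c>0$) and the nondegeneracy condition that the paper leaves implicit. And you avoid the paper's unneeded assumption that the normalized residuals are time-shifts $\mathbf{y}(i),\mathbf{y}(i-\delta)$ of a single signal.
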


\begin{proof}
    As in Section~\ref{subsec:rcl}, define two pairs of training data as 
    \begin{equation}
        (\mathbf{x}_0(i), \mathbf{x}_1(i)) \text{ and } (\mathbf{x}_0' = \mathbf{x}_0(i - \delta), \mathbf{x}_1' = \mathbf{x}_1(i - \delta))
    \end{equation}

    As described in Section~\ref{sec:unreg_si} we construct two flow targets $\mathbf{v}(t, \mathbf{x}_t)$ and $\mathbf{v}(t, \mathbf{x}'_t)$ which are predicted by our flow model. With $i$ as the time index,  let $\mathbf{p} = \mathbf{v}(t, \mathbf{x}_t(i)) - \mathbf{v}_\theta(t, \mathbf{x}_t(i))$ and $\mathbf{q} = \mathbf{v}(t, \mathbf{x}_t(i-\delta)) - \mathbf{v}_\theta(t, \mathbf{x}_t(i - \delta))$: we seek to align these vectors by aligning these vector by minimizing the RCL loss

    \begin{equation}
    \begin{split}
        \min_{\theta} \mathcal{L}_{\text{RCL}}(\mathbf{p}, \mathbf{q}) &= \min_{\theta} \left[1- \frac{T \cdot \mathbf{p}^\top\mathbf{q} - \mu_p\mu_q}{\sqrt{(T\cdot \mathbf{p^\top p} - \mu_p^2)(T \cdot \mathbf{q}^\top \mathbf{q} - \mu_q^2)}}\right] \\
        &=\max_{\theta} \frac{T \cdot \mathbf{p}^\top\mathbf{q} - \mu_p\mu_q}{\sqrt{(T\cdot \mathbf{p^\top p} - \mu_p^2)(T \cdot \mathbf{q}^\top \mathbf{q} - \mu_q^2)}} \\
    \end{split}
    \end{equation}

    We can simplify this optimization problem by first recognizing that $T$ is a scaling factor that is independent of the flow. We can always zero-mean and $L_2$ normalize the signals $\mathbf{p}$ and $\mathbf{q}$ before computing the maximization above. Assume that the normalized error residuals for $\mathbf{p}$ and $\mathbf{q}$ are $\mathbf{y}(i)$ and $\mathbf{y}(i - \delta)$ since we assume a time-shift in the signals results in a time-shift in the residuals. This is achieved through using convolutional operators, which is the basis upon which the flow prediction model is built. Then

    \begin{equation}
    \begin{split}
        \max_{\theta} \mathbf{y}(i)^\top \mathbf{y}(i - \delta) &= \max_{\theta} ||\mathbf{y}(i)|| \cdot
        ||\mathbf{y}(i - \delta)|| \cdot \text{cos}(\phi)\\
        &= \max_{\theta} \text{cos}(\phi)
    \end{split}
    \end{equation}

    where $\phi$ is the angle between the vectors $\mathbf{y}(i)$ and $\mathbf{y}(i - \delta)$. This value is maximized when $\phi = 0$, or the vectors point in the same direction. \qed
    
\end{proof}

\subsection{\VS{Deriving the Sampling SDE}}\label{subsec:appendix_deriving_sde}

\VS{In Theorem~\ref{eq:si_theorem}, we state that the Fokker-Planck equations satisfy the continuity theorem. We show that this is the case (replicated and simplified from~\cite{albergo2023stochasticinterpolantsunifyingframework}:}

\begin{theorem}{(Theorem 2.6 of~\cite{albergo2023stochasticinterpolantsunifyingframework})}\label{eq:si_theorem_appendix}
The probability distribution of the interpolant $x_t$ defined in Equation~\eqref{eq:interpolant} is absolutely continuous with respect to the Lebesgue measure at times $t\in[0,1]$ and solves the transport equation
\begin{equation}\label{eq:continuity}
    \frac{\partial}{\partial t} p_t + \nabla \cdot (\mathbf{b}p) = 0 
\end{equation}

In addition, the forward and backward Fokker-Planck equations are satisfied

\begin{equation}\label{eq:fokker-planck1}
    \frac{\partial}{\partial t} p_t + \nabla \cdot (\mathbf{b}_Fp) = \epsilon(t)\Delta p, \mathbf{b}_F = \mathbf{b}(t,\mathbf{x}) + \epsilon(t)\mathbf{s}(t,\mathbf{x}) 
\end{equation}
\begin{equation}\label{eq:fokker-planck2}
    \frac{\partial}{\partial t} p_t + \nabla \cdot (\mathbf{b}_Bp) = -\epsilon(t)\Delta p, \mathbf{b}_B = \mathbf{b}(t,\mathbf{x}) - \epsilon(t)\mathbf{s}(t,\mathbf{x}) 
\end{equation}

where $\epsilon(t)$ is some noise schedule.
\end{theorem}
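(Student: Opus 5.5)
The plan is to work with the characteristic function of $\mathbf{x}_t$ and then translate back to densities. The interpolant of Equation~\ref{eq:interpolant} is $\mathbf{x}_t = (1-t)\mathbf{x}_0 + t\mathbf{x}_1 + \gamma(t)\mathbf{z}$, with $\gamma(t)=\sqrt{2t(1-t)}$ and $\mathbf{z}$ independent of $(\mathbf{x}_0,\mathbf{x}_1)$.

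\textbf{Absolute continuity.} For $t\in(0,1)$, $\mathbf{x}_t$ is the sum of a random vector and an independent Gaussian $\gamma(t)\mathbf{z}$ with $\gamma(t)>0$. Its law is therefore a convolution with a smooth positive Gaussian kernel, so it has a density $p_t$ that is smooth in $\mathbf{x}$. At the endpoints $t=0,1$ the law is $p_0$ or $p_1$, and I will assume, as in~\cite{albergo2023stochasticinterpolantsunifyingframework}, that these have densities.

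\textbf{Transport equation.} I would fix $\mathbf{k}$ and set $g(t,\mathbf{k})=\mathbb{E}[e^{i\mathbf{k}\cdot\mathbf{x}_t}]$. Differentiating under the expectation (justified by moment bounds on $\mathbf{x}_0,\mathbf{x}_1$) gives
\begin{equation*}
\partial_t g = i\mathbf{k}\cdot\mathbb{E}\big[\partial_t\mathbf{x}_t\, e^{i\mathbf{k}\cdot\mathbf{x}_t}\big].
\end{equation*}
Conditioning on $\mathbf{x}_t=\mathbf{x}$ and using the definition of $\mathbf{b}$ in Section~\ref{sec:background}, the right-hand side becomes
\begin{equation*}
i\mathbf{k}\cdot\int \mathbf{b}(t,\mathbf{x})\,p_t(\mathbf{x})\,e^{i\mathbf{k}\cdot\mathbf{x}}\,d\mathbf{x}.
\end{equation*}
This is exactly the Fourier transform of $-\nabla\cdot(\mathbf{b}p_t)$. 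Inverting the transform yields Equation~\ref{eq:continuity} in the weak sense, and in the strong sense for $t\in(0,1)$ since $p_t$ is smooth there.

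\textbf{Fokker--Planck equations.} These follow from the transport equation by adding and subtracting a diffusion term. The key identity is $\mathbf{s}\,p_t=\nabla p_t$, which gives $\nabla\cdot(\mathbf{s}p_t)=\Delta p_t$. I will check that this $\mathbf{s}$ agrees with the Tweedie form $-\gamma^{-1}\mathbb{E}[\mathbf{z}\mid\mathbf{x}_t=\mathbf{x}]$. To do so, differentiate $p_t(\mathbf{x})=\mathbb{E}\big[\varphi_{\gamma}(\mathbf{x}-I(t,\mathbf{x}_0,\mathbf{x}_1))\big]$ in $\mathbf{x}$, where $\varphi_\gamma$ is the Gaussian density; this is the conditional version of the computation in Section~\ref{subsec:denoiser_deriving}.

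Then for any schedule $\epsilon(t)\ge 0$,
\begin{equation*}
\partial_t p_t+\nabla\cdot\big((\mathbf{b}\pm\epsilon\mathbf{s})p_t\big)=\pm\epsilon(t)\,\Delta p_t.
\end{equation*}
With the plus sign this is Equation~\ref{eq:fokker-planck1}, and with the minus sign it is Equation~\ref{eq:fokker-planck2}.

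\textbf{Main obstacle.} The hard part is regularity near the endpoints $t=0,1$, where $\gamma(t)\to0$. There $\mathbf{s}$ may blow up, since $\gamma^{-1}$ appears, and $\gamma'(t)$ is singular. My first attempt would be to prove everything on $(0,1)$ in strong form and to state the endpoint statements only weakly or distributionally. Throughout I would assume $\mathbb{E}\|\mathbf{x}_0\|^2,\mathbb{E}\|\mathbf{x}_1\|^2<\infty$, which is needed to justify exchanging derivative and expectation. The $\gamma'\mathbf{z}$ contribution to $\partial_t\mathbf{x}_t$ stays integrable because $\gamma\gamma'$ is bounded, which is why the decomposition of $\mathbf{b}$ into $\mathbf{v}$ and $\gamma\gamma'\mathbf{s}$ in Section~\ref{sec:unreg_si} is the form I would use.
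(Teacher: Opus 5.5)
Your proposal is correct, and it is considerably more complete than the paper's own argument.

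\textbf{What the paper does.} The paper's proof consists only of the step you place last. It substitutes $\epsilon(t)\Delta p=\epsilon(t)\nabla\cdot(\mathbf{s}p)$ into the forward Fokker--Planck equation and cancels the $\epsilon$-terms to reach the transport equation, declaring the backward case similar. That runs in the direction Fokker--Planck $\Rightarrow$ transport, so read literally it starts from one of the conclusions. At best it shows the three PDEs are equivalent once $\mathbf{s}=\nabla\log p_t$. It never addresses absolute continuity. Nor does it explain why $p_t$ satisfies the transport equation with $\mathbf{b}(t,\mathbf{x})=\mathbb{E}[\partial_t\mathbf{x}_t\mid\mathbf{x}_t=\mathbf{x}]$ in the first place.

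\textbf{What your route adds.} Your characteristic-function computation supplies exactly that missing content; it is essentially the original argument of Albergo et al. It only needs $\mathbf{z}$ to be independent of the pair $(\mathbf{x}_0,\mathbf{x}_1)$, so it covers the paired data used here. You also run the equivalence in the correct direction, transport $\Rightarrow$ Fokker--Planck. Your check that $\nabla\log p_t$ equals the Tweedie form $-\gamma^{-1}\mathbb{E}[\mathbf{z}\mid\mathbf{x}_t=\mathbf{x}]$ fills another hole. The paper only asserts this, and its appendix derivation computes the score of the conditional Gaussian $\mathcal{N}(\mathbf{x}',\gamma^2)$ rather than of the marginal $p_t$. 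The paper's route buys only brevity.

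\textbf{One refinement.} Your stated reason that the $\gamma'\mathbf{z}$ term is harmless is not quite the right one. The quantity $|\gamma'(t)|\,\mathbb{E}\|\mathbf{z}\|$ diverges as $t\to0,1$, so dominated convergence on the raw integrand only works on compact subsets of $(0,1)$. What controls the term up to the endpoints is Gaussian integration by parts:
\[
\mathbb{E}[\gamma'\mathbf{z}\,e^{i\mathbf{k}\cdot\mathbf{x}_t}]=i\gamma\gamma'\,\mathbf{k}\,g(t,\mathbf{k}), \qquad \gamma\gamma'=1-2t .
\]
That is where the boundedness of $\gamma\gamma'$ actually enters, and it gives the weak form of the transport equation on all of $[0,1]$.
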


\begin{proof}
    \VS{We would like to show that the drift coefficient satisfies the Continuity Equation in Equation~\eqref{eq:continuity}. Consider the diffusion term from Equation~\ref{eq:fokker-planck1}. From SDE theory, we know that we can write the Laplacian of the distribution $p$ as:}

    \begin{equation}
        \epsilon(t)\Delta p = \epsilon(t)\nabla \cdot(p\nabla \text{ log }p) = \epsilon(t)\nabla \cdot (s(t,\mathbf{x})p)
    \end{equation}

    \VS{Now, we would like to show that the drift coefficient satisfies the continuity equation. Substituting the above result into Equation~\eqref{eq:fokker-planck1}, we have}

  \begin{equation}
    \begin{split}
        \frac{\partial}{\partial t} p_t + \nabla \cdot (\mathbf{b}_Fp) &= \epsilon(t)\Delta p \\
        \frac{\partial}{\partial t} p_t + \nabla \cdot ((\mathbf{b}(t,\mathbf{x}) + \epsilon(t)\mathbf{s}(t,\mathbf{x}))p) &= \epsilon(t)\nabla \cdot (ps(t,\mathbf{x})) \\
        \frac{\partial}{\partial t} p_t + \nabla \cdot ((\mathbf{b}(t,\mathbf{x})p) + \epsilon(t)\ \nabla \cdot (\mathbf{s}(t,\mathbf{x})p) &= \epsilon(t) \nabla \cdot (\mathbf{s}(t,\mathbf{x})p) \\
        \frac{\partial}{\partial t} p_t + \nabla \cdot ((\mathbf{b}(t,\mathbf{x})p)  &= 0
    \end{split}
\end{equation}

    \VS{The derivation is similar for Equation~\ref{eq:fokker-planck2}.}
\end{proof}

\VS{We now seek to derive the equation for the drift coefficient from Equation~\ref{eq:sampling_drift}. The sampling SDE is defined as }

\begin{equation}\label{eq:appendix_sampling_sde}
    d\mathbf{x}_t = \mathbf{b}_F(t, \mathbf{x}_t)dt + \sigma_tdW_t, \text{  } \sigma_t = \sqrt{2\epsilon(t)}
\end{equation}

and the drift coefficient $\mathbf{b}_F(t, \mathbf{x}_t)$ is defined as
\begin{equation}
    \begin{split}
        \mathbf{b}_F &= \mathbf{b}(t,\mathbf{x}) + \epsilon(t)\mathbf{s}(t,\mathbf{x}) \\
        \mathbf{b}_F &= \mathbb{E}[\frac{\partial}{\partial t }\big(I(\mathbf{x}_0, \mathbf{x}_1, t) + \gamma(t) \mathbf{z}\big)] + \epsilon(t)\mathbf{s}(t,\mathbf{x}) \\
        \mathbf{b}_F &= \mathbb{E}[\frac{\partial}{\partial t }I(\mathbf{x}_0, \mathbf{x}_1, t) + (\frac{\partial}{\partial t }\gamma(t)) \frac{\mathbf{-n_z}(t, \mathbf{x}\big)}{\gamma(t)}\big] + \epsilon(t)\mathbf{s}(t,\mathbf{x}) \\
        \mathbf{b}_F &= \mathbf{v}_\theta(t, \mathbf{x}) - \big(\frac{\partial}{\partial t }\gamma(t)\big)\gamma(t) \mathbf{n_z}(t, \mathbf{x})\big)+ \epsilon(t)\mathbf{s}(t,\mathbf{x}) \\
    \end{split}
\end{equation}

\VS{The first equality restates the drift equation from Equation~\ref{eq:fokker-planck1}. In the second equality, we replace substitute the definition of our iPPG interpolant for $\mathbf{b}(t, \mathbf{x})$. In the third equality, we take the derivative with respect to each term, and replace $\mathbf{z}$ with our denoiser $\mathbf{s}(t, \mathbf{x}) = \frac{-\mathbf{n_z}(t, \mathbf{x})}{\gamma(t)}$. We simplify terms in the last equation.}

\qed

\subsection{Stability, Data Requirements, and Approximation Quality}
\VS{The work of \cite{albergo2023stochasticinterpolantsunifyingframework} discusses this in detail in Sections 2.4 and 2.5, with proofs of likelihood control, density estimation, and cross-entropy calculations in the appendix. Demonstrating these properties in practice is important. We do so as follows:}

    \begin{itemize}
        \item \textbf{\VS{Stability and Convergence:}} \VS{To show our stability and convergence, in Appendix A.8 we plot the training and validation loss curves for the flow, score, and RCL components in Figure~\ref{fig:loss_curves}. We see that both training and validation curves decrease monotonically, confirming that we are successfully approximating our intended drift coefficient. We also provide a proof of stability in Theorem~\ref{thm:stability_thm}.}
        \item \textbf{\VS{Data Requirements:}} \VS{The datasets we used recorded at least one minute of video per subject at at least 25 FPS, with many subjects recording multiple videos under stimuli. This provides adequate data to sample the manifold of physiological signals under varying levels of noise.}
        \item \textbf{\VS{Approximation Quality:}} \VS{Our Gauge R\&R analysis, shown in Appendix~\ref{sec:gauge_analysis} and Figure~\ref{fig:r&r_analysis}, quantifies the repeatability and precision of our system. The variance due to repeatability is negligible (1.7\%) compared to the variance between different signal frequencies (``Part"). This indicates high-fidelity approximation.}

        \item\textbf{\VS{Robustness and Generalization via RCL:}}: \VS{Regarding generalization, the Residual Correlation Loss (RCL) acts as an inductive bias (or smoothness prior). By constraining the solution manifold to signals that are consistent across time shifts, we theoretically reduce the hypothesis space to physically plausible signals. This improves robustness against high-frequency outliers (e.g., sudden motion artifacts) that violate this correlation structure. We have also added training/validation curves in Figure 7 to show that our model learns the flow and score correctly.}

        \item\textbf{\VS{Error Composition (Approximation vs. Discretization):}} 
        \VS{We model the total sampling error as the sum of the \textit{network approximation error} and the \textit{solver discretization error}. As detailed in Equation~\ref{eq:error_composition}, the error bound is given by:}
        \begin{equation}\label{eq:error_composition}
            \text{Total Error} \leq \mathbb{E}\left[\int_{0}^{1} \|b_{F}(t,x_{t}) - b_{\theta}(t,x_{t})\| dt\right] + \mathcal{O}(\Delta t)
        \end{equation}
        \VS{The first term represents the training quality (how well our neural network $b_{\theta}$ approximates the true vector field $b_F$). The second term, $\mathcal{O}(\Delta t)$, represents the error introduced by the SDE solver steps. This confirms that the error is additive and controllable via training convergence (Figure~\ref{fig:loss_curves}) and step-size selection.}
        \end{itemize}

    We also show the stability of our system through a proof below.

    \begin{theorem}\label{thm:stability_thm}
    \VS{Let $\mathbf{\hat{x}}_1$ be a camera measurement and $\mathbf{\tilde{x}}_1 = \mathbf{\hat{x}}_1 + \delta$ be a perturbed measurement, where $\|\delta\| < \infty$. Let $\mathbf{\hat{x}}_0$ and $\mathbf{\tilde{x}}_0$ be generated by the same Wiener process $W_t$. If the learned drift coefficient $\mathbf{b}_{F, \theta}(t, \mathbf{x})$ is Lipschitz continuous with constant $L$, then the reconstruction error is bounded by:}
    \begin{equation}
        \|\mathbf{\hat{x}}_0 - \mathbf{\tilde{x}}_0 \| \leq e^L \|\delta\|
    \end{equation}
    \VS{The system is bounded-input, bounded-output (BIBO) stable.}
\end{theorem}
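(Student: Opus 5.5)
This is a synchronous-coupling argument followed by Grönwall's inequality. Integrate the sampling SDE of Equation~\ref{eq:sampling_sde}, with the learned drift $\mathbf{b}_{F,\theta}$, over the time interval of length one, from the measurement end $t=1$ to the signal end $t=0$. Let $\hat{\mathbf{x}}_t$ and $\tilde{\mathbf{x}}_t$ be the two solutions started from $\hat{\mathbf{x}}_1$ and $\tilde{\mathbf{x}}_1=\hat{\mathbf{x}}_1+\delta$. Because both are driven by the same Wiener path $W_t$ and share the diffusion coefficient $\sigma_t=\sqrt{2\epsilon(t)}$, which does not depend on the state, the noise cancels in the difference. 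Writing $\mathbf{e}_t=\tilde{\mathbf{x}}_t-\hat{\mathbf{x}}_t$, we get, pathwise, the random ODE
\begin{equation*}
\frac{d}{dt}\mathbf{e}_t = \mathbf{b}_{F,\theta}(t,\tilde{\mathbf{x}}_t)-\mathbf{b}_{F,\theta}(t,\hat{\mathbf{x}}_t).
\end{equation*}
So the first step is to state that, for a fixed Brownian path, both solutions exist and are unique. This is the standard strong well-posedness result under a globally Lipschitz drift and an additive $\sigma_t$. It follows that the difference is absolutely continuous in $t$, with no Itô correction.

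The second step is to reparametrize by reversed time $\tau=1-t\in[0,1]$, so that the dynamics run forward from $\tau=0$, where $\mathbf{e}=\delta$. Apply the Lipschitz hypothesis $\|\mathbf{b}_{F,\theta}(t,\mathbf{x})-\mathbf{b}_{F,\theta}(t,\mathbf{y})\|\le L\|\mathbf{x}-\mathbf{y}\|$, uniformly in $t$. Then
\begin{equation*}
\|\mathbf{e}_\tau\| \le \|\delta\| + \int_0^\tau L\,\|\mathbf{e}_r\|\,dr ,
\end{equation*}
using either the integral form or the differential inequality $\frac{d}{d\tau}\|\mathbf{e}_\tau\|\le L\|\mathbf{e}_\tau\|$ wherever $\mathbf{e}_\tau\neq 0$. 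Grönwall's inequality gives $\|\mathbf{e}_\tau\|\le e^{L\tau}\|\delta\|$. Evaluating at $\tau=1$ yields $\|\hat{\mathbf{x}}_0-\tilde{\mathbf{x}}_0\|\le e^{L}\|\delta\|$. Since $\|\delta\|<\infty$, the output deviation is finite, and that is the BIBO conclusion.

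The main obstacle is making the hypotheses precise rather than the estimate itself. The drift in Equation~\ref{eq:sampling_drift} contains $\mathbf{s}_\theta=-\mathbf{n}_\theta/\gamma(t)$, and $\gamma(t)=\sqrt{2t(1-t)}$ vanishes at the endpoints. The factor $\gamma\dot\gamma=1-2t$ is harmless, but the term $\epsilon(t)\mathbf{s}_\theta$ scales like $1/\gamma(t)$. A uniform Lipschitz constant on all of $[0,1]$ therefore has to be assumed, not derived. There are two ways to handle this. One is to read the hypothesis as a uniform-in-$t$ Lipschitz bound on the implemented drift, which is effectively what happens in practice, since solvers start and stop at $\varepsilon$-offsets from the endpoints. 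The other is to allow a time-dependent constant $L(t)$ and obtain $e^{\int L(t)\,dt}$, which reduces to $e^{L}$ in the uniform case.

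I would also remark that the bound is pathwise for the shared $W_t$. Consequently it holds almost surely, and also in expectation.
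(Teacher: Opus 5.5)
Your proof is correct and follows the paper's argument: the shared Wiener path cancels the additive noise, and then the Lipschitz bound and Gr\"onwall over $[0,1]$ give the estimate. The paper applies a differential Gr\"onwall to $u(t)=\|\hat{\mathbf{x}}_t-\tilde{\mathbf{x}}_t\|^2$ ``backwards from $t=1$,'' which strictly needs the lower bound $\dot u\ge -2Lu$ (available because Cauchy--Schwarz gives $|\dot u|\le 2Lu$), whereas your explicit reversal $\tau=1-t$ with the integral form on $\|\mathbf{e}_\tau\|$ makes that direction transparent; one small slip in your side remark is that in Equation~\ref{eq:sampling_drift} the bounded factor $\gamma\dot\gamma=1-2t$ multiplies $\mathbf{s}_\theta=-\mathbf{n}_\theta/\gamma$, so that term also scales like $1/\gamma(t)$, which only reinforces your point that a uniform Lipschitz constant has to be assumed.
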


\begin{proof}
    Define the reverse time SDE for both processes as:
    \begin{equation}
        \begin{aligned}
            d\mathbf{\hat{x}}_t &= \mathbf{b}_{F, \theta}(t, \mathbf{\hat{x}}_t)\,dt + \sigma_t\,dW_t \\
            d\mathbf{\tilde{x}}_t &= \mathbf{b}_{F, \theta}(t, \mathbf{\tilde{x}}_t)\,dt + \sigma_t\,dW_t
        \end{aligned}
    \end{equation}
    Subtracting these equations, the noise terms cancel out, and we get the ODE for the difference:
    \begin{equation}
        \frac{d}{dt}(\mathbf{\hat{x}}_t - \mathbf{\tilde{x}}_t) = \mathbf{b}_{F, \theta}(t, \mathbf{\hat{x}}_t) - \mathbf{b}_{F, \theta}(t, \mathbf{\tilde{x}}_t)
    \end{equation}
    Now, let the squared error between our two solutions be $u(t) = \|\mathbf{\hat{x}}_t - \mathbf{\tilde{x}}_t\|^2$. By the Chain rule, we have:
    \begin{equation}
        \begin{aligned}
            \frac{d}{dt}u(t) &= \frac{d}{dt} \langle \mathbf{\hat{x}}_t - \mathbf{\tilde{x}}_t, \mathbf{\hat{x}}_t - \mathbf{\tilde{x}}_t \rangle \\
            &= 2\left\langle\mathbf{\hat{x}}_t - \mathbf{\tilde{x}}_t, \frac{d}{dt}(\mathbf{\hat{x}}_t - \mathbf{\tilde{x}}_t) \right\rangle \\
            &= 2\left\langle\mathbf{\hat{x}}_t - \mathbf{\tilde{x}}_t, \mathbf{b}_{F, \theta}(t, \mathbf{\hat{x}}_t) - \mathbf{b}_{F, \theta}(t, \mathbf{\tilde{x}}_t) \right\rangle \\
            &\leq 2 \|\mathbf{\hat{x}}_t - \mathbf{\tilde{x}}_t\|\cdot L \|\mathbf{\hat{x}}_t - \mathbf{\tilde{x}}_t\| \\
            &= 2Lu(t)
        \end{aligned}
    \end{equation}
    In the inequality step, we used the Cauchy-Schwarz inequality and the Lipschitz condition. Now, applying the differential form of Gronwall's inequality backwards from $t=1$ to $t=0$, we get:
    \begin{equation}
        \begin{aligned}
            u(0) &\leq u(1)e^{2L(1-0)} \\
            \|\mathbf{\hat{x}}_0 - \mathbf{\tilde{x}}_0\|^2 &\leq e^{2L} \|\mathbf{\hat{x}}_1 - \mathbf{\tilde{x}}_1\|^2 \\
            \|\mathbf{\hat{x}}_0 - \mathbf{\tilde{x}}_0\| &\leq e^{L} \|\mathbf{\hat{x}}_1 - \mathbf{\tilde{x}}_1\|
        \end{aligned}
    \end{equation}
    This concludes the proof showing BIBO Stability.
\end{proof}
\begin{figure}[h!]
    \centering
    \begin{subfigure}[b]{0.45\textwidth}
        \centering
        \includegraphics[width=\textwidth]{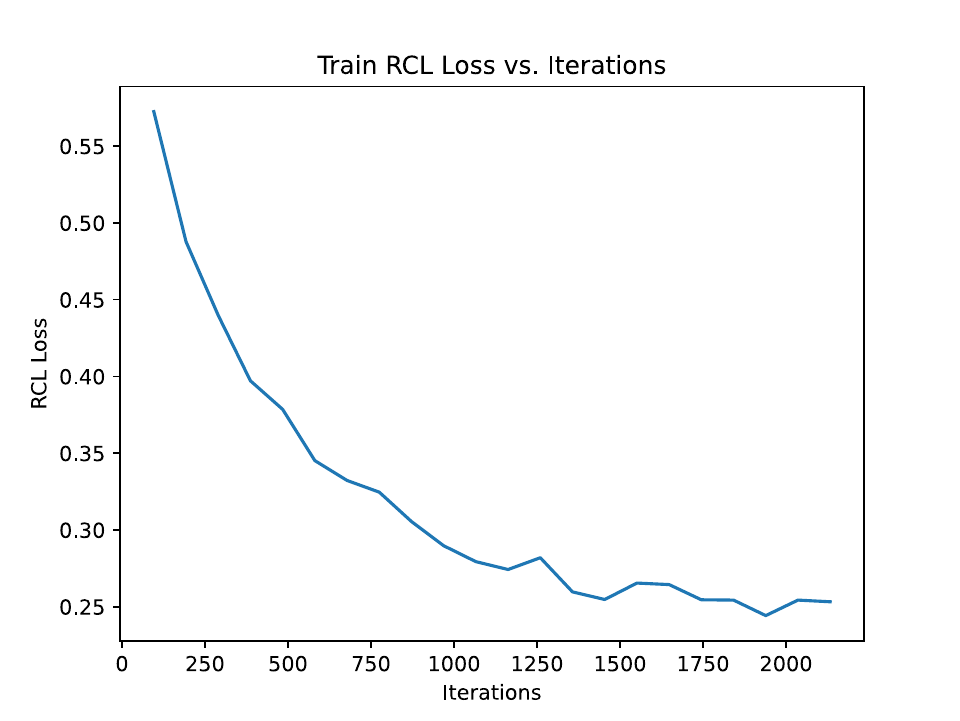}
        \caption{RCL training loss vs training iterations}
        \label{fig:sub1}
    \end{subfigure}
    \hfill
    \begin{subfigure}[b]{0.45\textwidth}
        \centering
        \includegraphics[width=\textwidth]{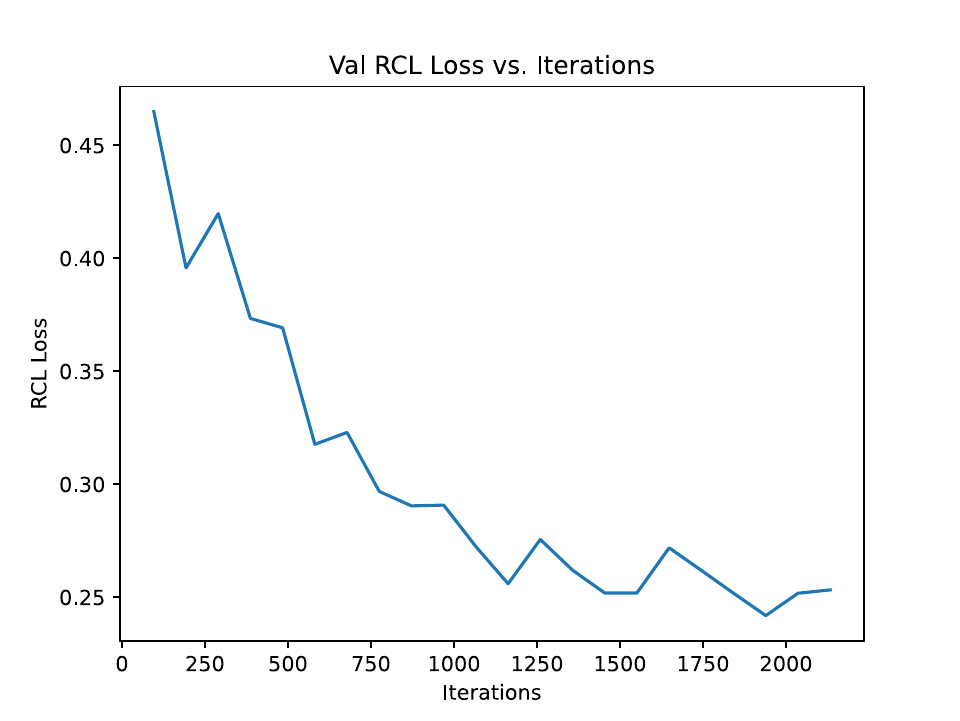}
        \caption{RCL validation loss vs training iterations}
        \label{fig:sub2}
    \end{subfigure}
    \\[1em]
    \begin{subfigure}[b]{0.45\textwidth}
        \centering
        \includegraphics[width=\textwidth]{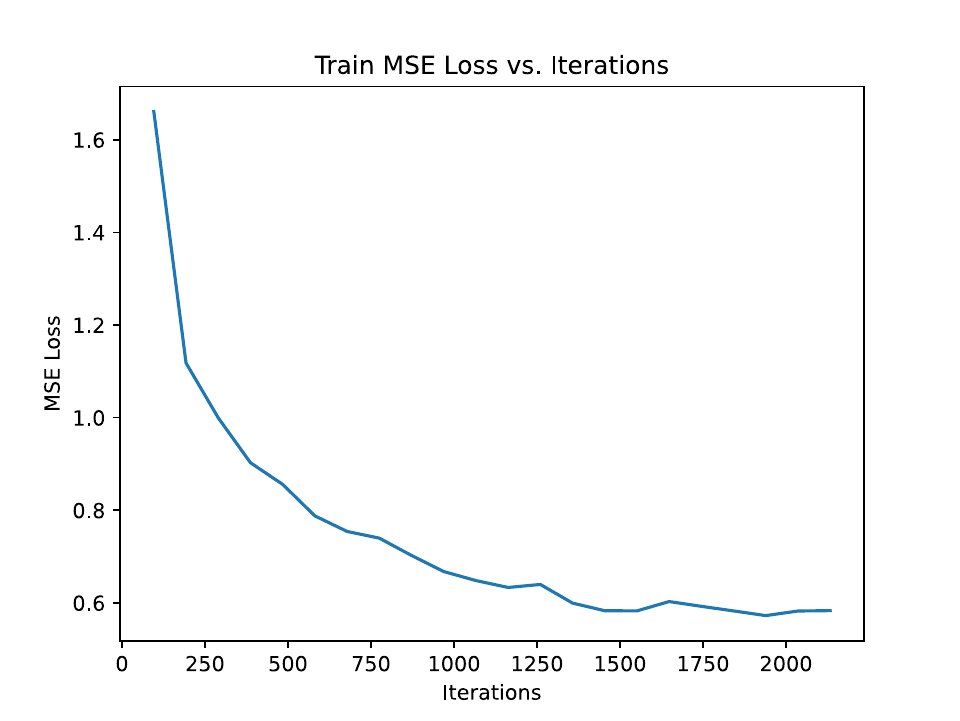}
        \caption{Flow+Score+RCL training loss vs training iterations}
        \label{fig:sub3}
    \end{subfigure}
    \hfill
    \begin{subfigure}[b]{0.45\textwidth}
        \centering
        \includegraphics[width=\textwidth]{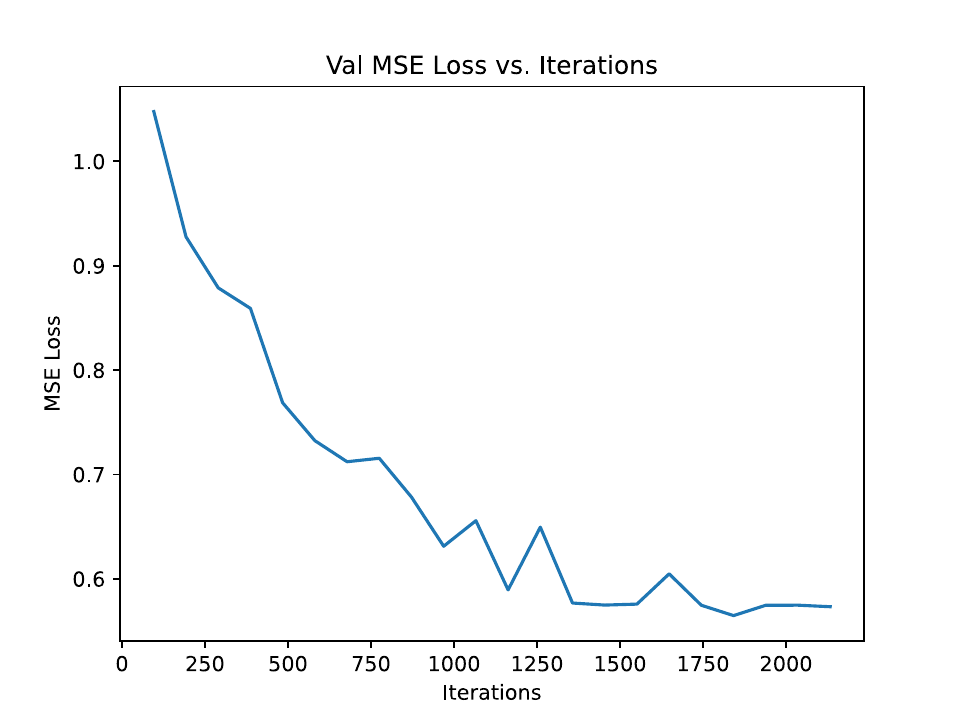}
        \caption{Flow+Score+RCL validation loss vs training iterations}
        \label{fig:sub4}
    \end{subfigure}
    \caption{\VS{The training and validation loss curves for the RCL loss and the entire loss (flow loss + score loss + RCL loss of Equation~\ref{eq:full_loss})}}
    \label{fig:loss_curves}
\end{figure}

\subsection{Additional Quantitative/Qualitative results}


\subsubsection{\VS{RCL loss between predicted flows versus the error residuals}}

\VS{We investigate whether impose temporal regularization on the predicted flows from adjacent time windows as compared to the error residuals from adjacent time windows results in equivalent or better performance. We hypothesize that the RCL on predicted flows would be suboptimal: this would encourage the two time-shifted versions of the signal to align (i.e. resulting in a time-shift $\delta = 0$) despite the fact that we know \textit{a priori} that they should be consistent, but not aligned. Therefore, we decided to align the error residuals instead of the predicted flows themselves.}

\VS{To verify our intuition, we trained our model with RCL regularization between predicted flows instead of error residuals. In Figure~\ref{fig:val_curves_diff_met}, we plot the validation loss of the RCL with flows versus the validation loss with error residuals. It is apparent that the validation error decreases typically for the RCL loss with error residuals; using predicted flows, however, the loss oscillates and the network does not learn. Therefore, we use the RCL regularization with error residuals.}

\begin{figure}
    \centering
    \includegraphics[width=0.5\linewidth]{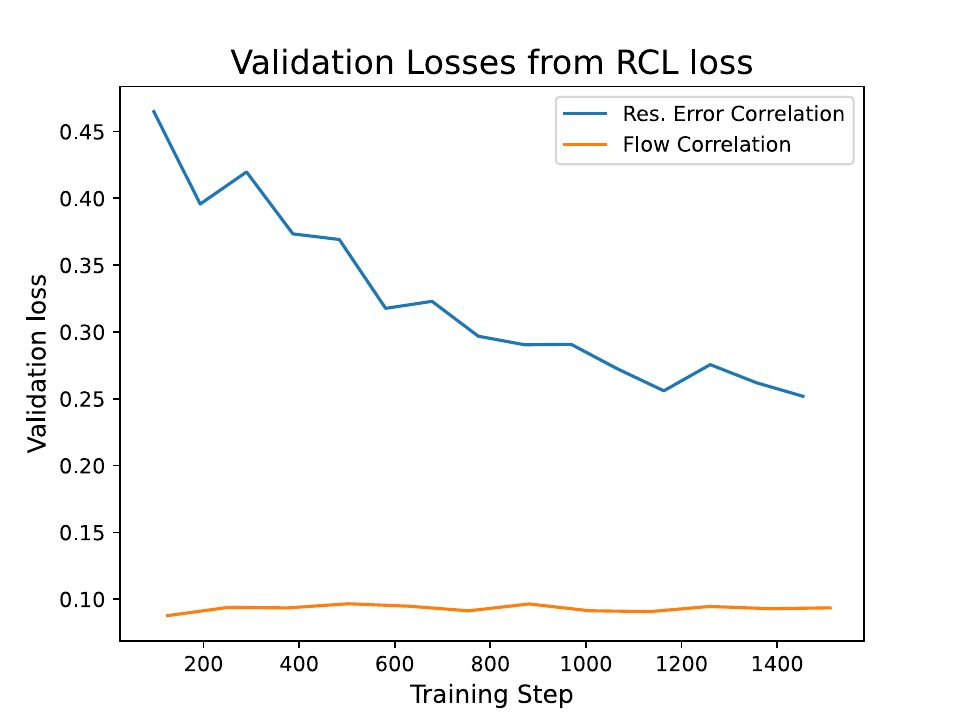}
    \caption{\VS{Validation loss when using the RCL loss with predicted flows versus error residuals. The netork learns when using error residuals, but does not learn when using predicted flows.}}
    \label{fig:val_curves_diff_met}
\end{figure}

\subsubsection{Inclusion of RCL Loss: Effect of overlap and weight}\label{subsec:rcl_overlap_weight}
\VS{Similarly to Table~\ref{tab:rcl_ablation1}, we perform an ablation study on the stride $\delta$ and the the weight $\lambda_{\text{RCL}}$ during model training. Ablations on the UBFC-rPPG dataset are included in Table~\ref{tab:rcl_ablation2}.}

\begin{table}
  \centering
  \caption{\VS{Cross-dataset model evaluation. In some scenarios (e.g. MMSE-HR $\rightarrow$ UBFC-rPPG), we achieve good performance, but in other cross-dataset experiments, our method fails.}}
  \label{tab:cross_dataset_results}
  \setlength\tabcolsep{2pt}
  \resizebox{0.49\textwidth}{!}{
    \begin{tabular}{c|c|cc}
      \toprule
      \textbf{Train} & \textbf{Test} & \textbf{MAE (bpm)} $\downarrow$ & \textbf{RMSE (bpm)} $\downarrow$ \\
      \hline
      \multirow{2}{*}{MMSE-HR} & PURE &  15.52 & 27.26\\
      & UBFC-RPPG & 1.73 & 4.67 \\
      \hline
      \multirow{2}{*}{PURE} & MMSE-HR &  7.93 & 15.19 \\
      & UBFC-rPPG & 21.59 & 33.49 \\
      \hline
      \multirow{2}{*}{UBFC-rPPG} & MMSE-HR &  6.59 & 12.26 \\
      & PURE & 0.26 & 0.53\\
      \bottomrule
    \end{tabular}
  }
\end{table}

\subsubsection{Cross-dataset evaluation}
We evaluate the model's ability to generalize to unseen data. In the experiments shown in Table~\ref{tab:cross_dataset_results}, the columns ``Train'' indicates the dataset on which our flow and score models are trained, while ``Test'' is the dataset on which these models are evaluated. Once again, we report the heart rate estimation metrics from before. We note that in some scenarios, for example training on MMSE-HR and testing on UBFC-rPPG, we perform well with an average error of 1.73 beats per minute. This suggests that the domain of the MMSE-HR dataset (video recording conditions, motion, etc.) is statistically similar to the test set of UBFC-rPPG. However, our experimentation show that this is not always the case; for example training with the PURE dataset but testing on either the MMSE-HR or UBFC-rPPG datasets shows poor performance. A visual inspection of the PURE dataset, as compared to the MMSE-HR and PURE datasets, shows significantly different lighting, as well as limited and controlled motion. Both the MMSE-HR and UBFC-rPPG datasets contain significantly more unconstrained motion---with tasks specifically designed to induce such motion---indicating a significant domain shift in the data.


We surmise that this degradation is due to the learned implicit forward model. We implicitly learn a stochastic function $A(\cdot)$ and noise in Equation~\ref{eq:sig_model}, conditioned on two distributions. When performing cross dataset evaluation, we replace one distribution (source) with another (target), resulting in large errors. One possible solution could be to project the target domain samples into the source domain and re-run the SDE. Addressing the domain shift issues is important, which we leave for follow-up work.

\subsubsection{Calibration Curves on Protected Attributes}\label{sec:appendix_protected_calibration}

\VS{Given the relative accuracy of RIS-iPPG, we now explore UQ metrics on both protected subgroups. We notice that for most metrics, UQ is better for light skin tones as compared to dark skin tones (and for the other metrics the light and dark skin tone metrics are nearly identical. The NLL for light skin tones, however, is significantly better than that of the dark skin tones, indicating that the model fails to learn the distribution of pulse in darker skin tones. The UQ metrics for men are generally better than those for women, though similarly to the pulse rate estimation results, the metrics are quite similar.}

\begin{figure}[h]
    \centering
    \begin{subfigure}[t]{0.24\textwidth}
        \centering
        \includegraphics[width=\textwidth]{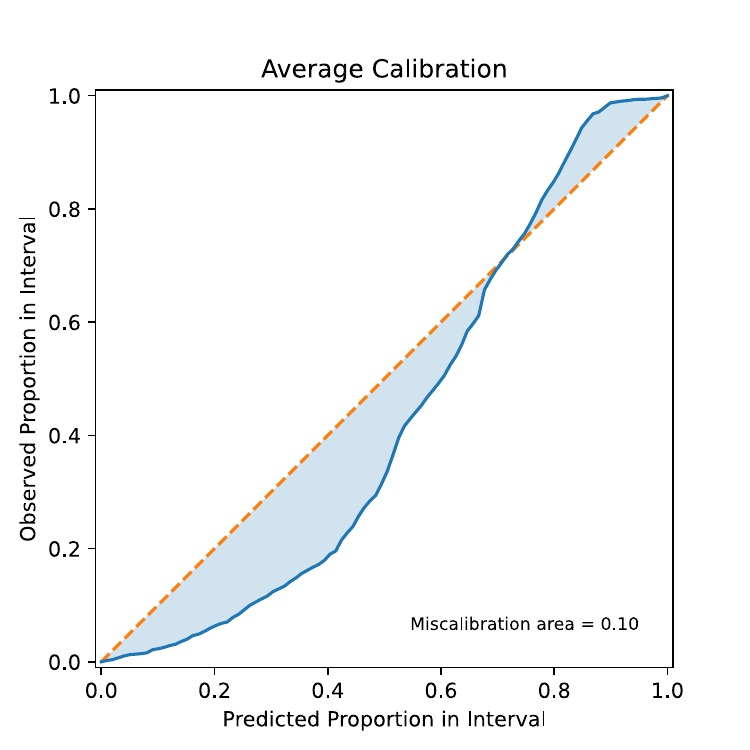}
        \caption{\VS{Dark Skin tones}}
        \label{fig:dark}
    \end{subfigure}
    \hfill
    \begin{subfigure}[t]{0.24\textwidth}
        \centering
        \includegraphics[width=\textwidth]{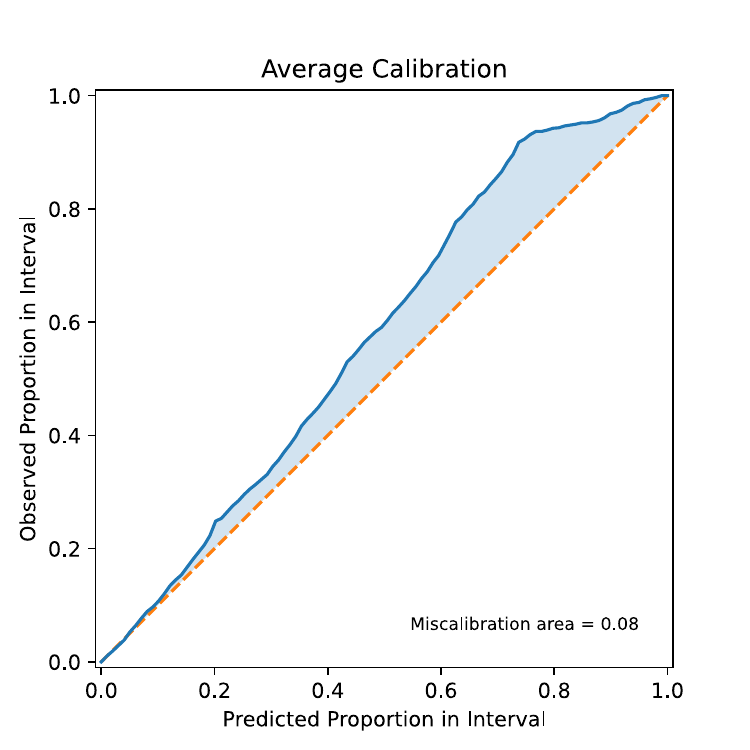}
        \caption{\VS{Light Skin tones}}
        \label{fig:light}
    \end{subfigure}
    \hfill
    \begin{subfigure}[t]{0.24\textwidth}
        \centering
        \includegraphics[width=\textwidth]{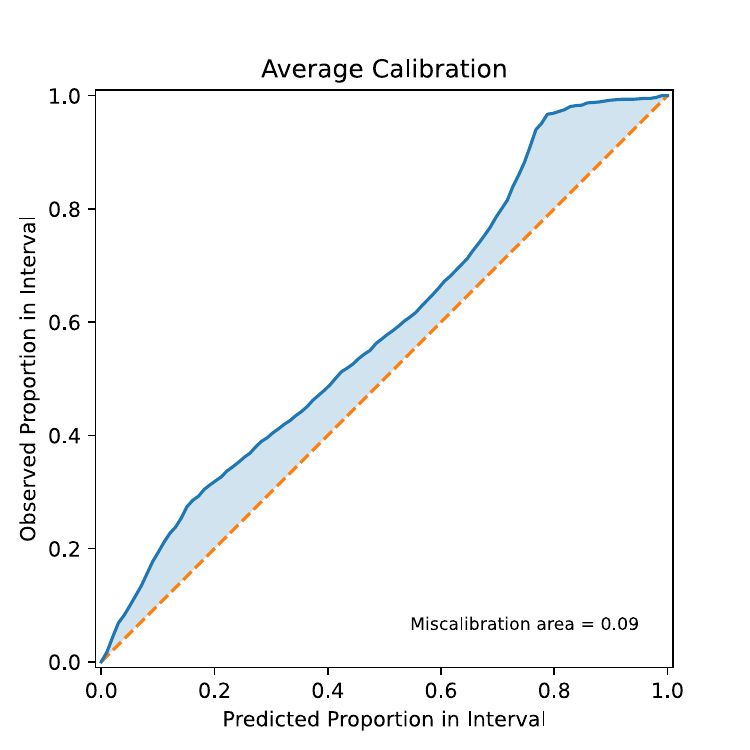}
        \caption{\VS{Men}}
        \label{fig:men}
    \end{subfigure}
    \hfill
    \begin{subfigure}[t]{0.24\textwidth}
        \centering
        \includegraphics[width=\textwidth]{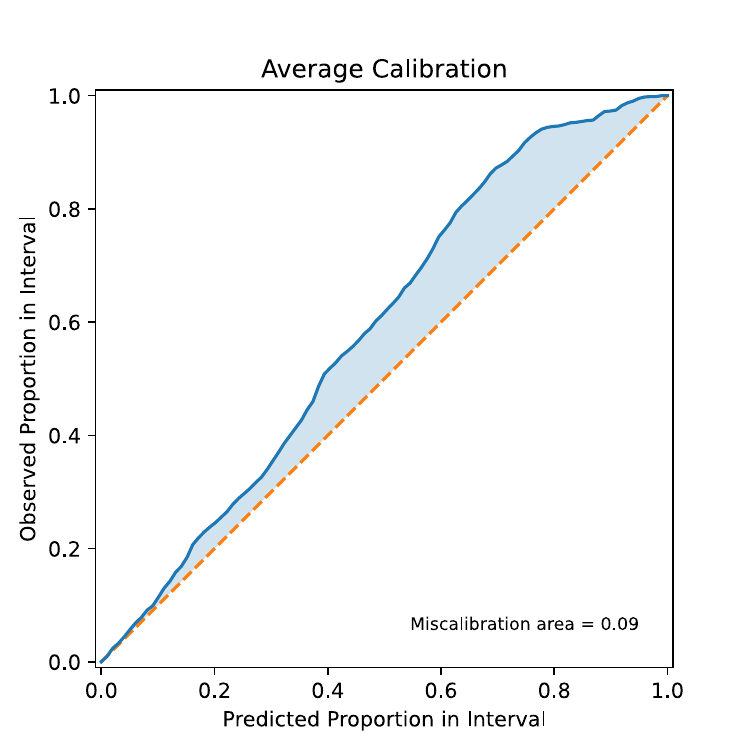}
        \caption{\VS{Women}}
        \label{fig:women}
    \end{subfigure}
    \caption{\VS{Average calibration curves for the four protected attributes on the MMSE-HR dataset. In all four scenarios our method performs well; however, our method is relatively worse on dark skin tones.}}
    \label{fig:four_figures}
\end{figure}

\VS{In addition to the quantitative metrics, we plot the calibration curves in Figure~\ref{fig:four_figures} on protected attributes. On the dark skin tones, our method is over confident when the observed (i.e. ground-truth) proportion is low, but when observing a higher proportion our model is underconfident. For light skin tones, men, and women, our model is consistently underconfident at predicting the spectral magnitude in each frequency bin. In all cases, however, our calibration error is at or lower 0.10, with the largest error at 0.10}.

\subsubsection{Additional Qualitative Results}\label{sec:appendix_qualitative}

\VS{We plot additional qualitative results on all datasets in Figures~\ref{fig:correlation_mmse}, \ref{fig:correlation_ubfc},  \ref{fig:correlation_pure}. The orange signals are the ground-truth in all cases, the green signals are the measurements, and the bolded blue signals are the means from 100 realizations of our algorithm. The light blue shading represents the 95\% confidence interval of the power over each frequency bin of our spectrum. In Figure~\ref{fig:correlation_mmse}, the first and second rows indicate that our algorithm is able to regress a measurement with an incorrect heart rate to the correct heart rate while attenuating the power of unrelated frequencies. Furthermore, we capture two modes of our distribution: in the first row of examples, the RCL loss helps us capture the mode at the measurements as well as the mode at the ground-truth heart rate. The second row shows similar results, with greater uncertainty around the measurements when the RCL loss is included. In the third row, we see that the pulse rate prediction is nearly identical with and without the RCL loss; however, the absence of the RCL loss encourages significant uncertainty at many frequencies above the predicted heart rate, while the RCL loss encourages higher uncertainty only around the harmonic. We see similar results on the UBFC-rPPG and PURE dataset in Figure \ref{fig:correlation_ubfc} and \ref{fig:correlation_pure}.}

\begin{figure}[h!]
    \centering
    \caption{\VS{Inference via solving an SDE. We plot the signal measurements as well as the SDE solution at $\Delta t= 0.1$ time steps in the range $t \in [0, 1]$.}}
    \label{fig:time-layout}

    \begin{subfigure}{\textwidth}
        \centering
        \includegraphics[width=0.5\linewidth, height=4cm]{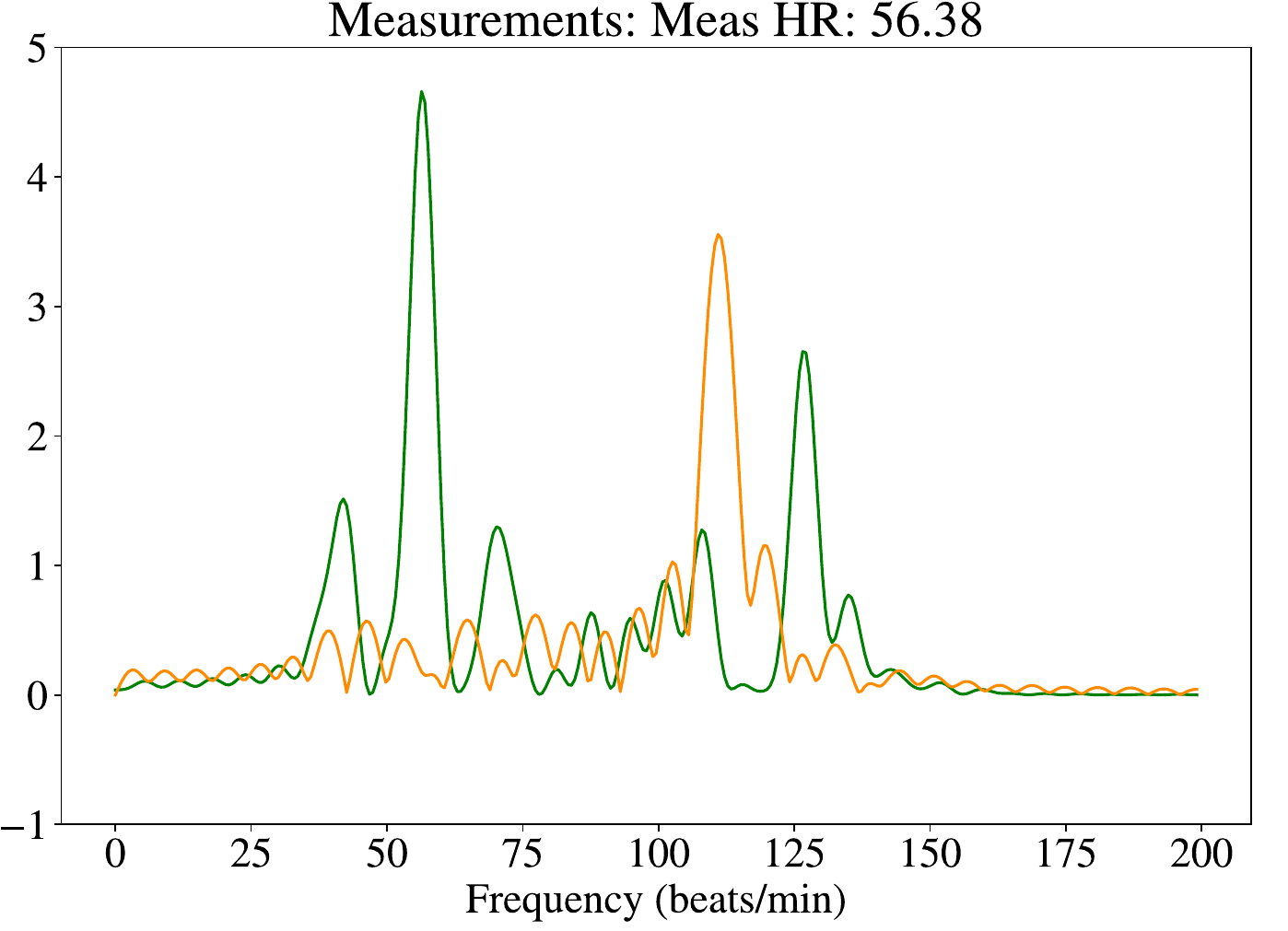}
        \caption{The original signal measurements (green) and ground-truth (orange).}
        \label{fig:top-sub}
    \end{subfigure}

    \vspace{1cm} 

    \begin{subfigure}{\textwidth}
        \centering
        \label{fig:grid}
        \small 

        \begin{subfigure}[b]{0.3\linewidth}
            \centering
            \includegraphics[width=\linewidth, height=2.5cm]{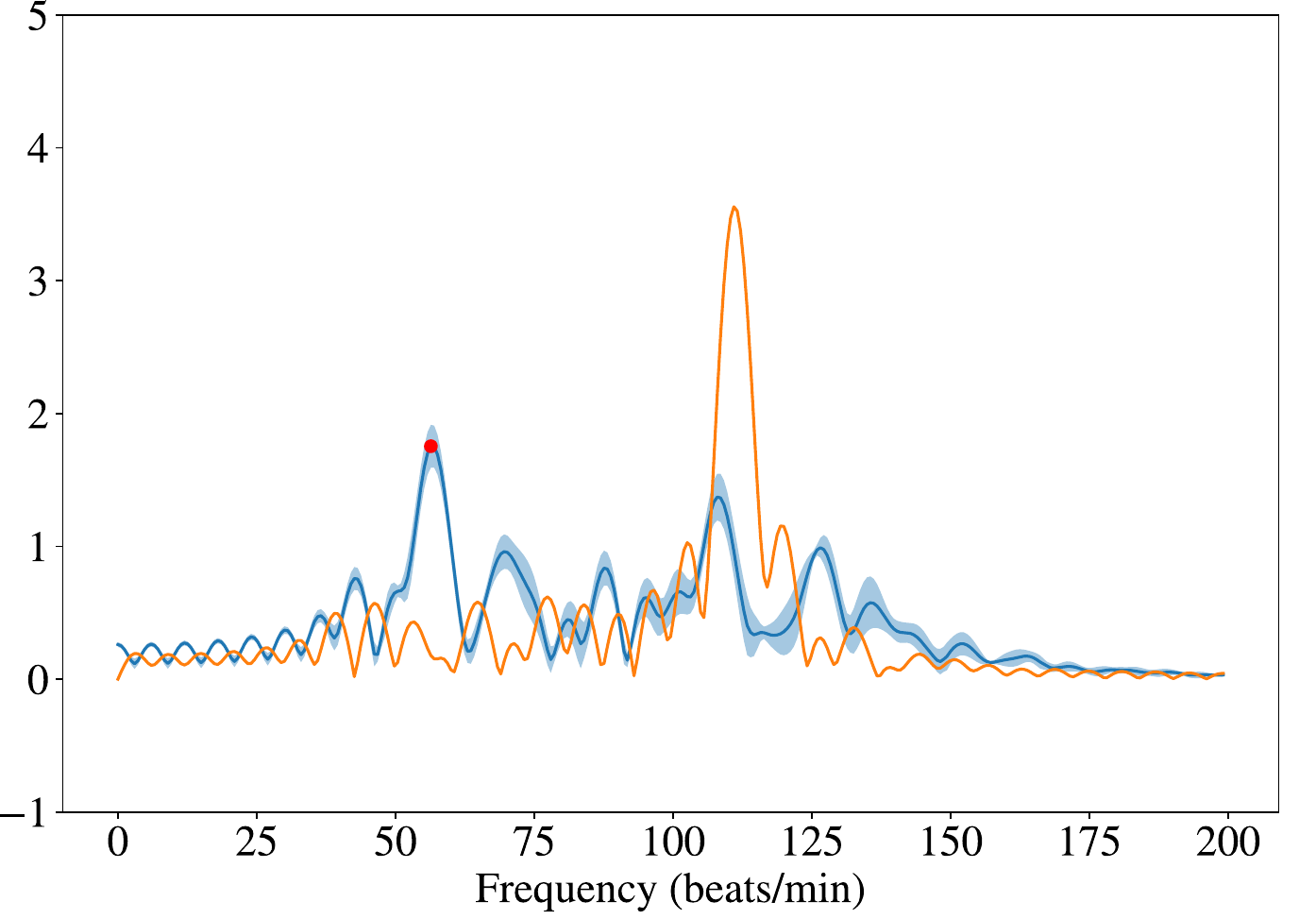}
            \caption{$t=0.1$}
            \label{fig:grid-1}
        \end{subfigure}
        \hfill
        \begin{subfigure}[b]{0.3\linewidth}
            \centering
            \includegraphics[width=\linewidth, height=2.5cm]{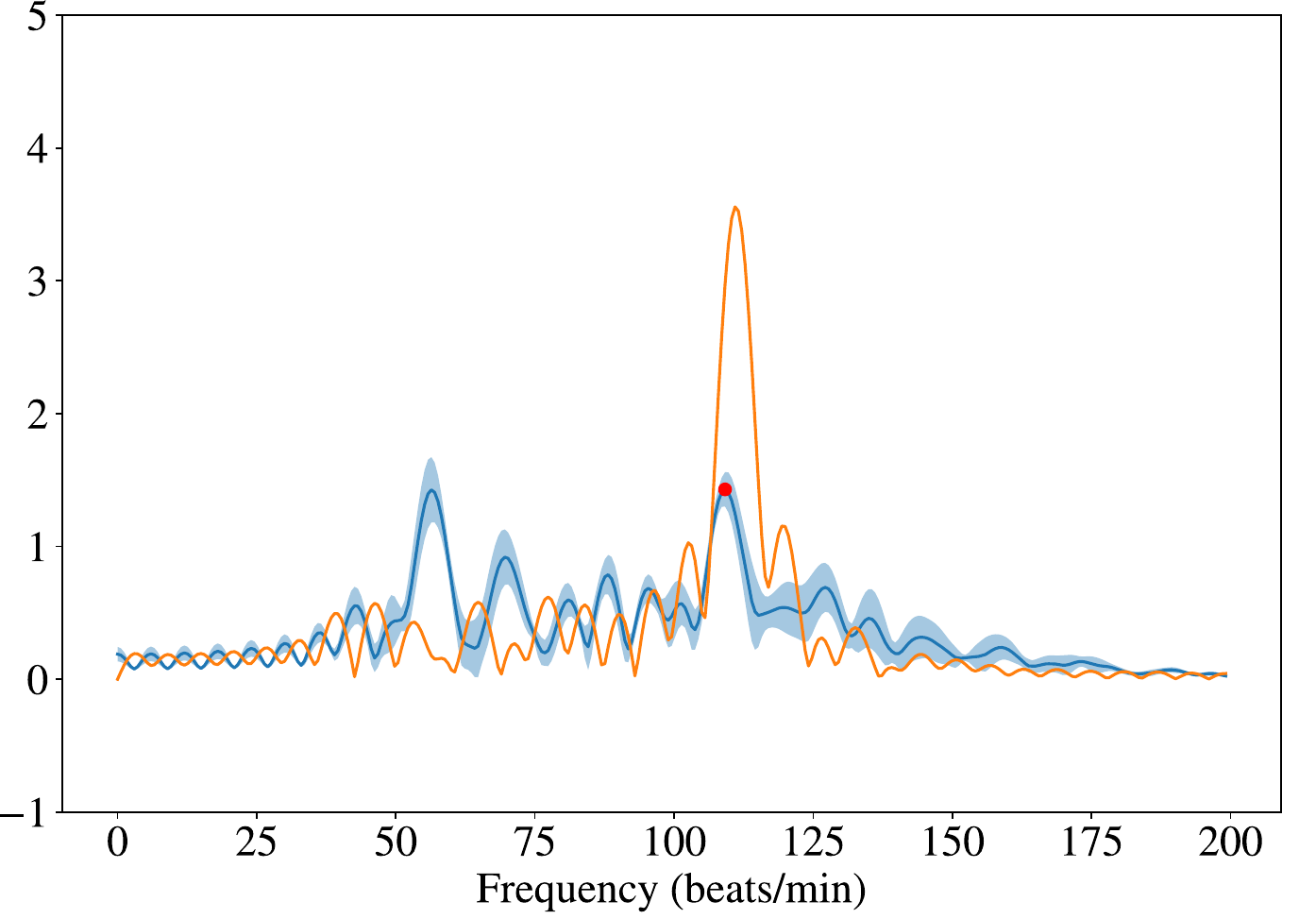}
            \caption{$t=0.2$}
            \label{fig:grid-2}
        \end{subfigure}
        \hfill
        \begin{subfigure}[b]{0.3\linewidth}
            \centering
            \includegraphics[width=\linewidth, height=2.5cm]{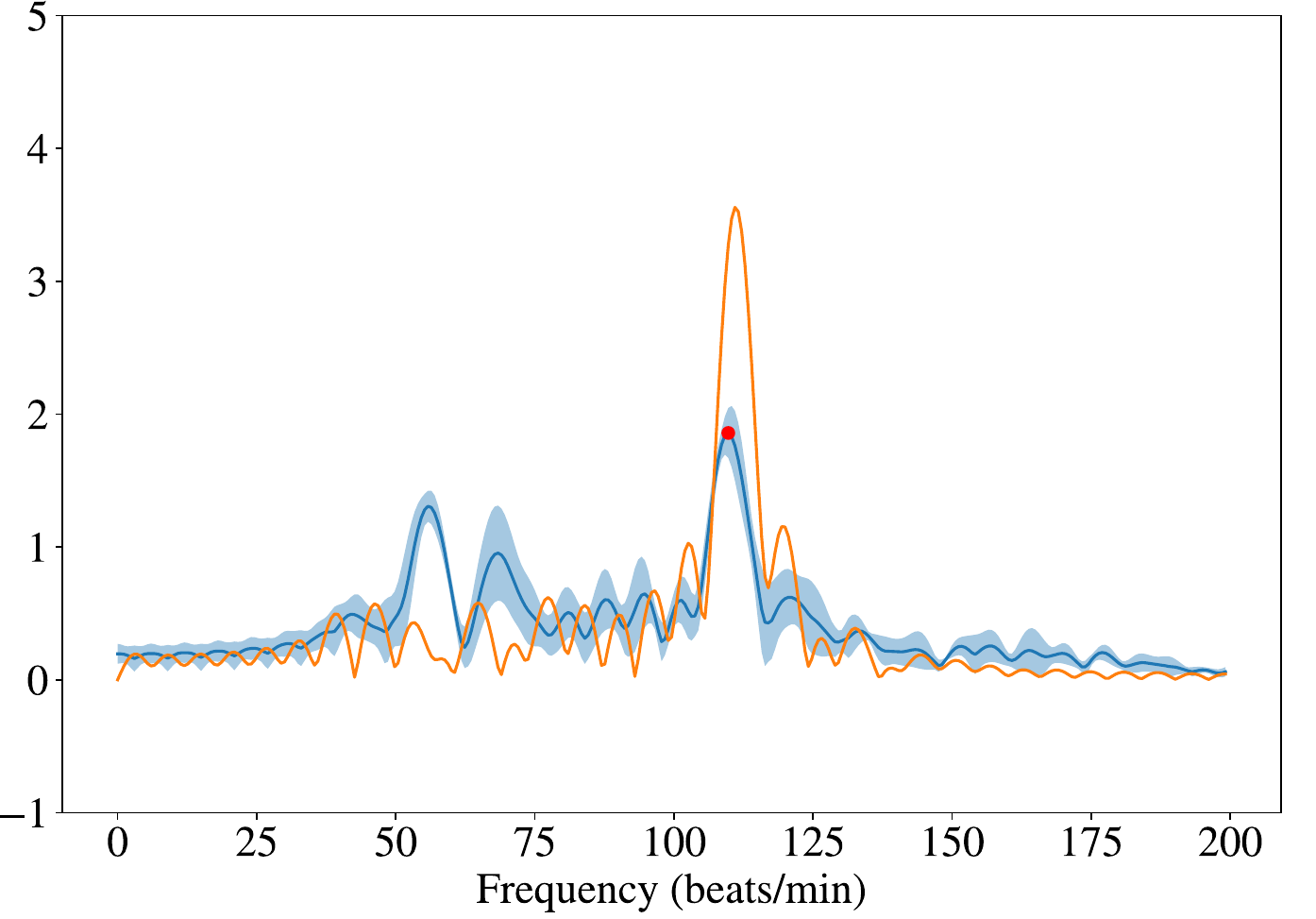}
            \caption{$t=0.3$}
            \label{fig:grid-3}
        \end{subfigure}

        \vspace{0.5cm} 

        \begin{subfigure}[b]{0.3\linewidth}
            \centering
            \includegraphics[width=\linewidth, height=2.5cm]{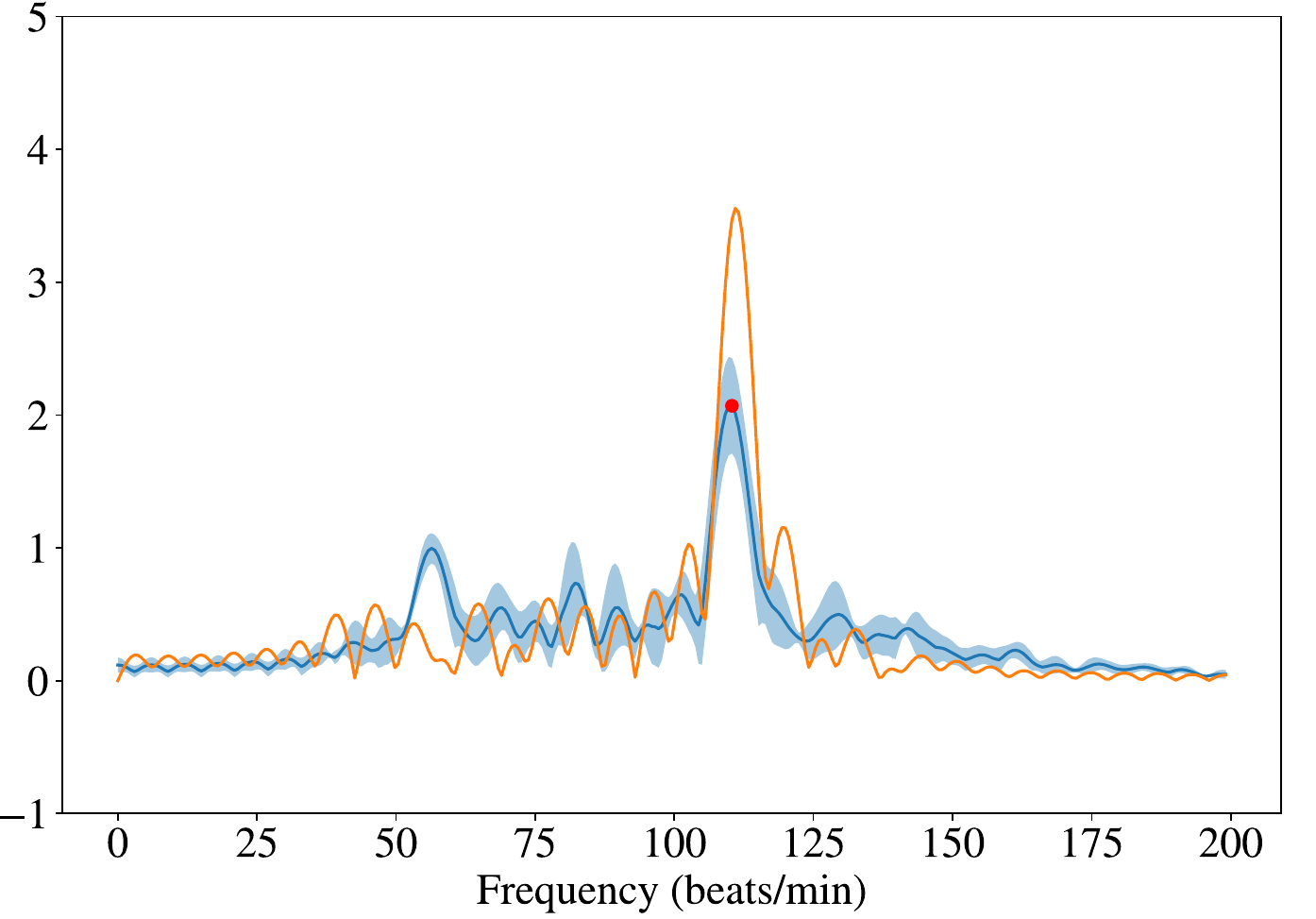}
            \caption{$t=0.4$}
            \label{fig:grid-4}
        \end{subfigure}
        \hfill
        \begin{subfigure}[b]{0.3\linewidth}
            \centering
            \includegraphics[width=\linewidth, height=2.5cm]{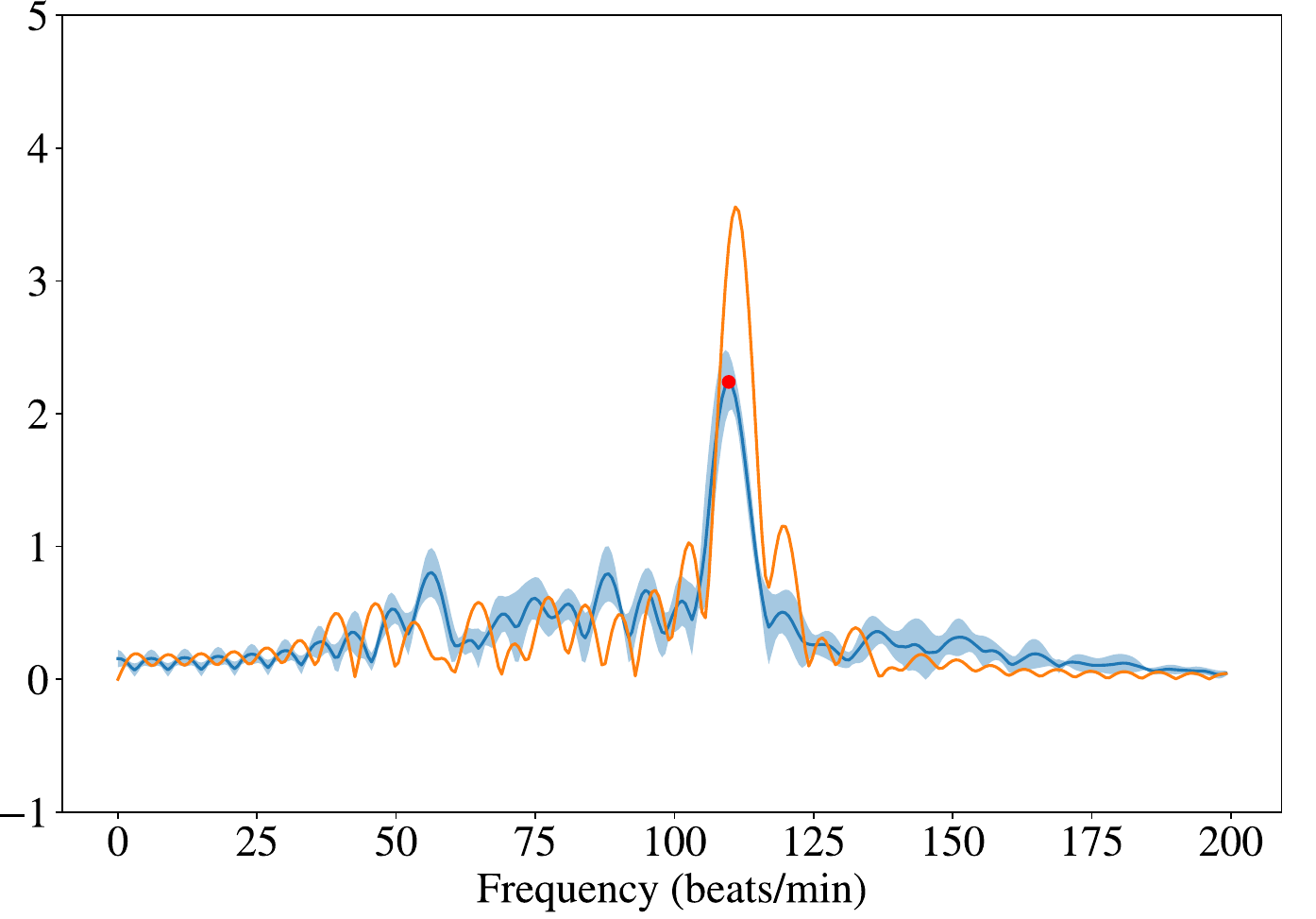}
            \caption{$t=0.5$}
            \label{fig:grid-5}
        \end{subfigure}
        \hfill
        \begin{subfigure}[b]{0.3\linewidth}
            \centering
            \includegraphics[width=\linewidth, height=2.5cm]{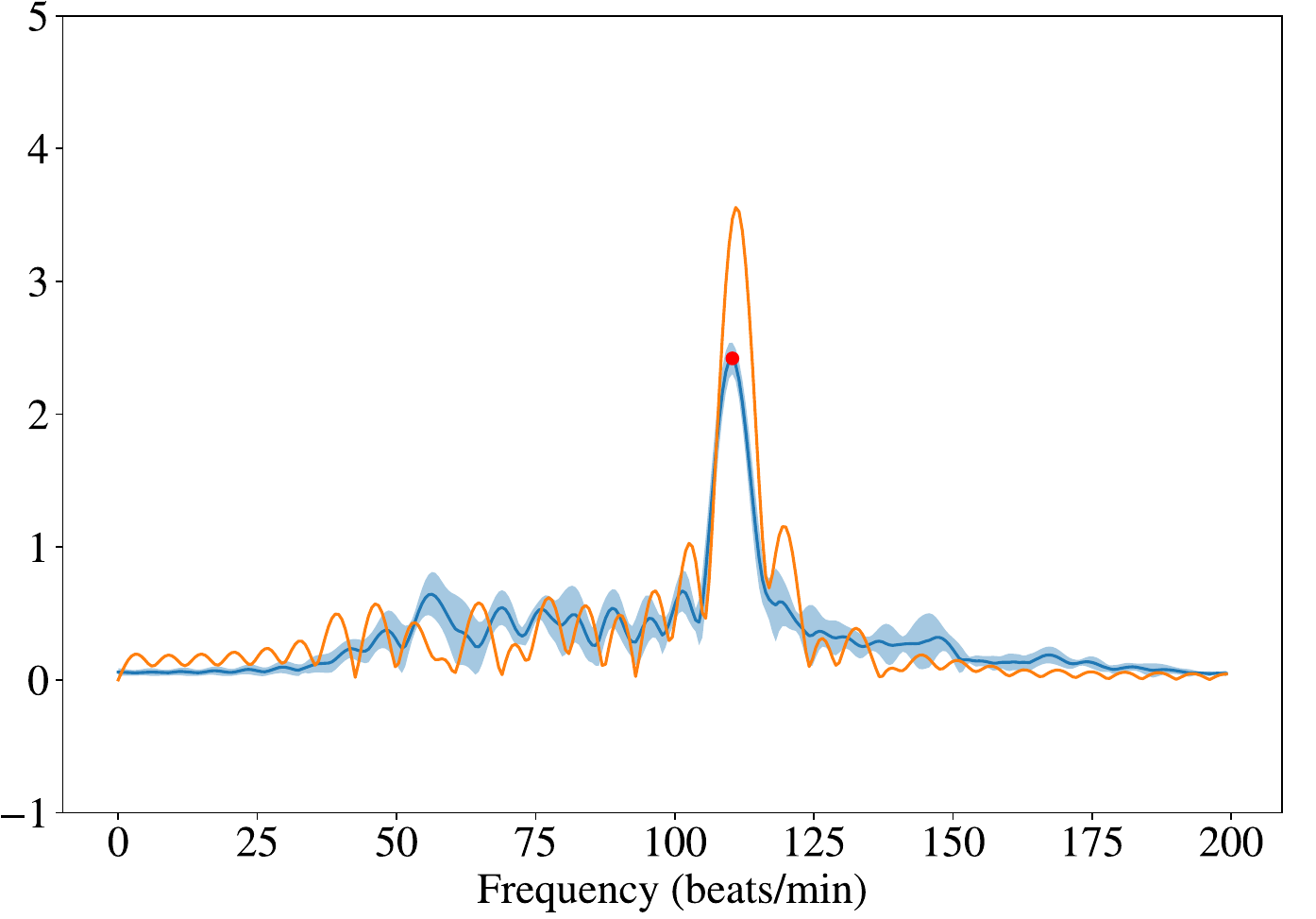}
            \caption{$t=0.6$}
            \label{fig:grid-6}
        \end{subfigure}

        \vspace{0.5cm} 

        \begin{subfigure}[b]{0.3\linewidth}
            \centering
            \includegraphics[width=\linewidth, height=2.5cm]{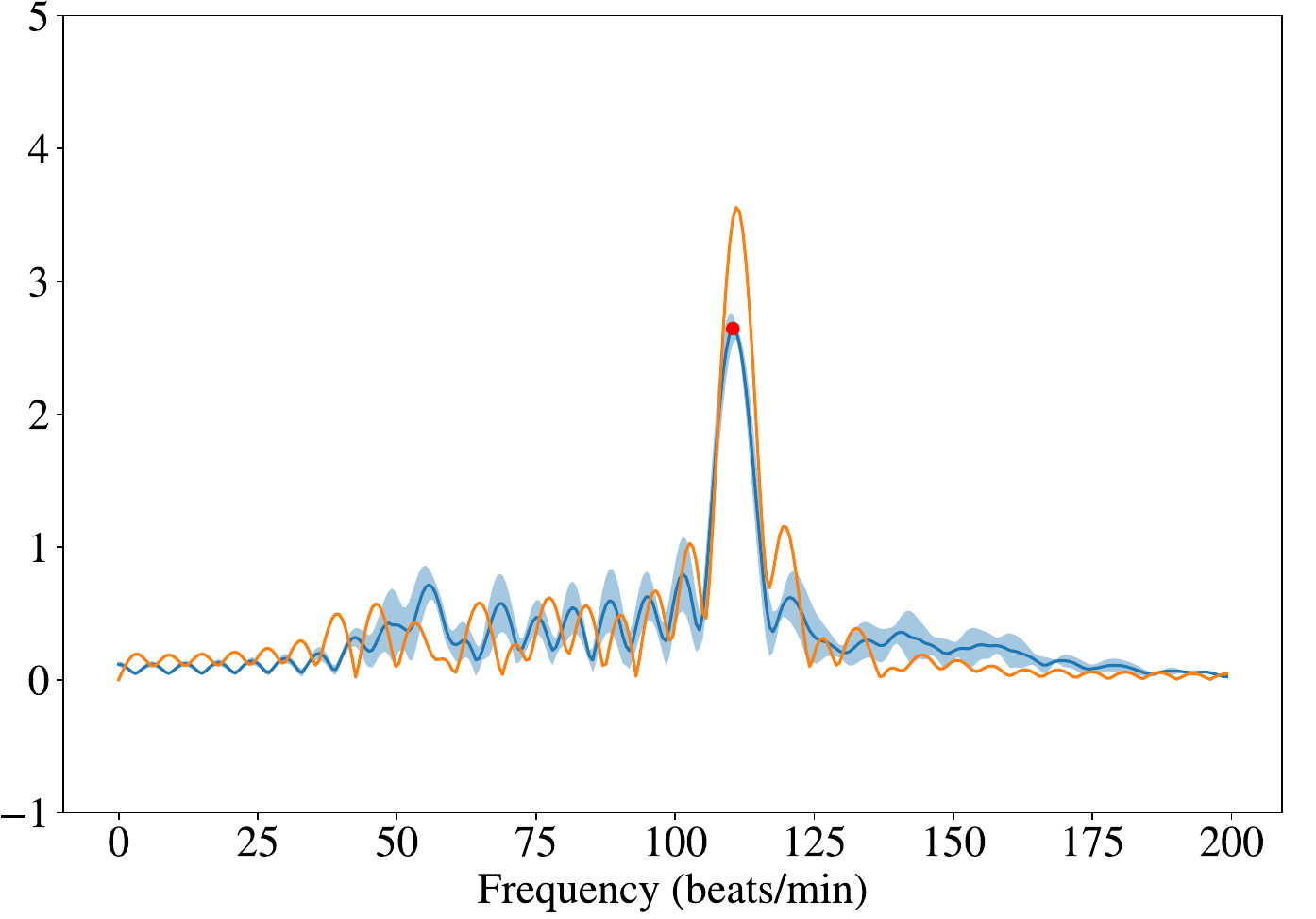}
            \caption{$t=0.7$}
            \label{fig:grid-7}
        \end{subfigure}
        \hfill
        \begin{subfigure}[b]{0.3\linewidth}
            \centering
            \includegraphics[width=\linewidth, height=2.5cm]{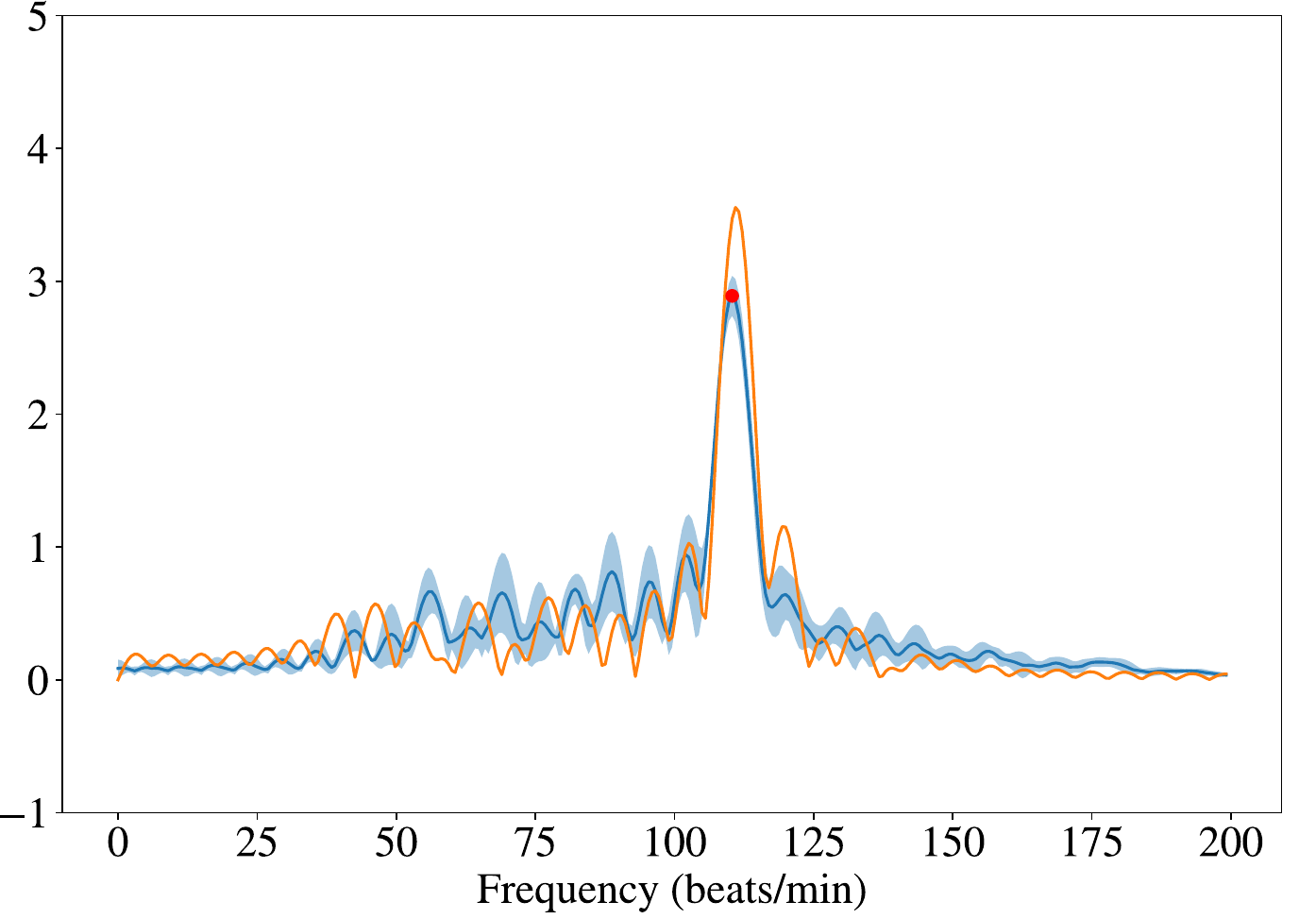}
            \caption{$t=0.8$}
            \label{fig:grid-8}
        \end{subfigure}
        \hfill
        \begin{subfigure}[b]{0.3\linewidth}
            \centering
            \includegraphics[width=\linewidth, height=2.5cm]{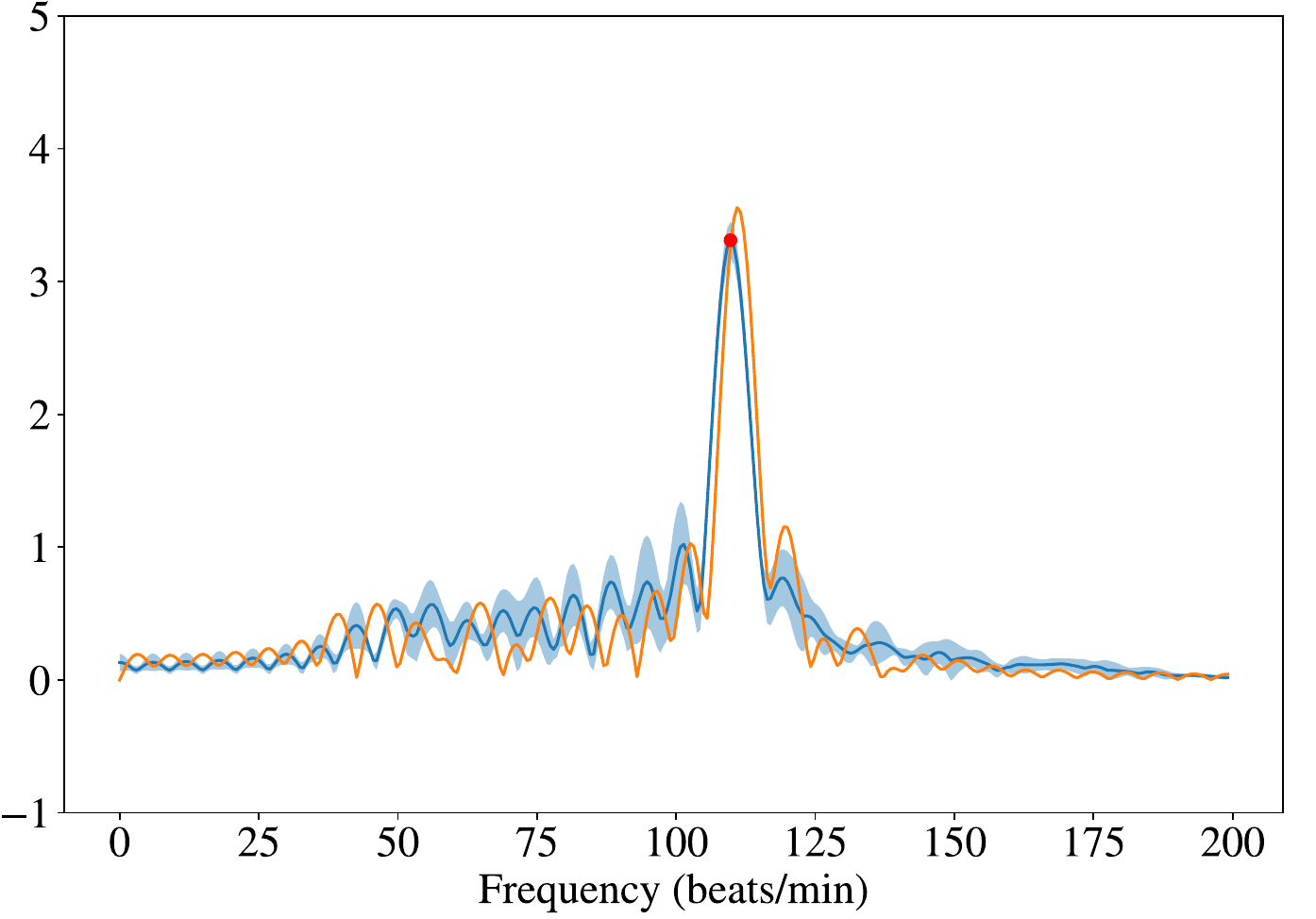}
            \caption{$t=0.9$}
            \label{fig:grid-9}
        \end{subfigure}
    \end{subfigure}

    \vspace{1cm} 

    \begin{subfigure}{\textwidth}
        \centering
        \includegraphics[width=0.6\linewidth, height=5cm]{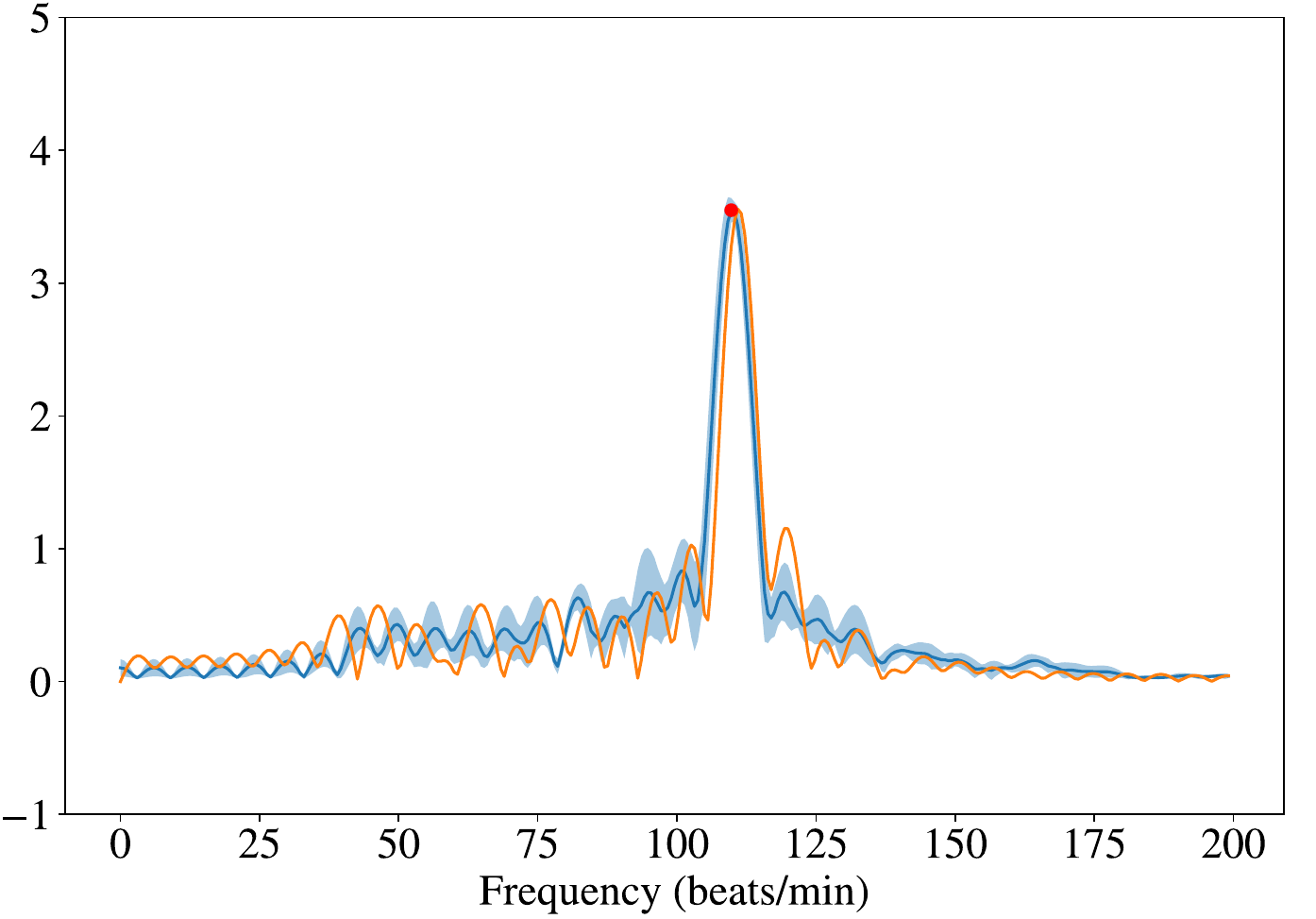}
        \caption{$t=1.0$}
        \label{fig:bottom-sub}
    \end{subfigure}

\end{figure}

\begin{figure}
    \centering
    \includegraphics[width=0.99\textwidth]{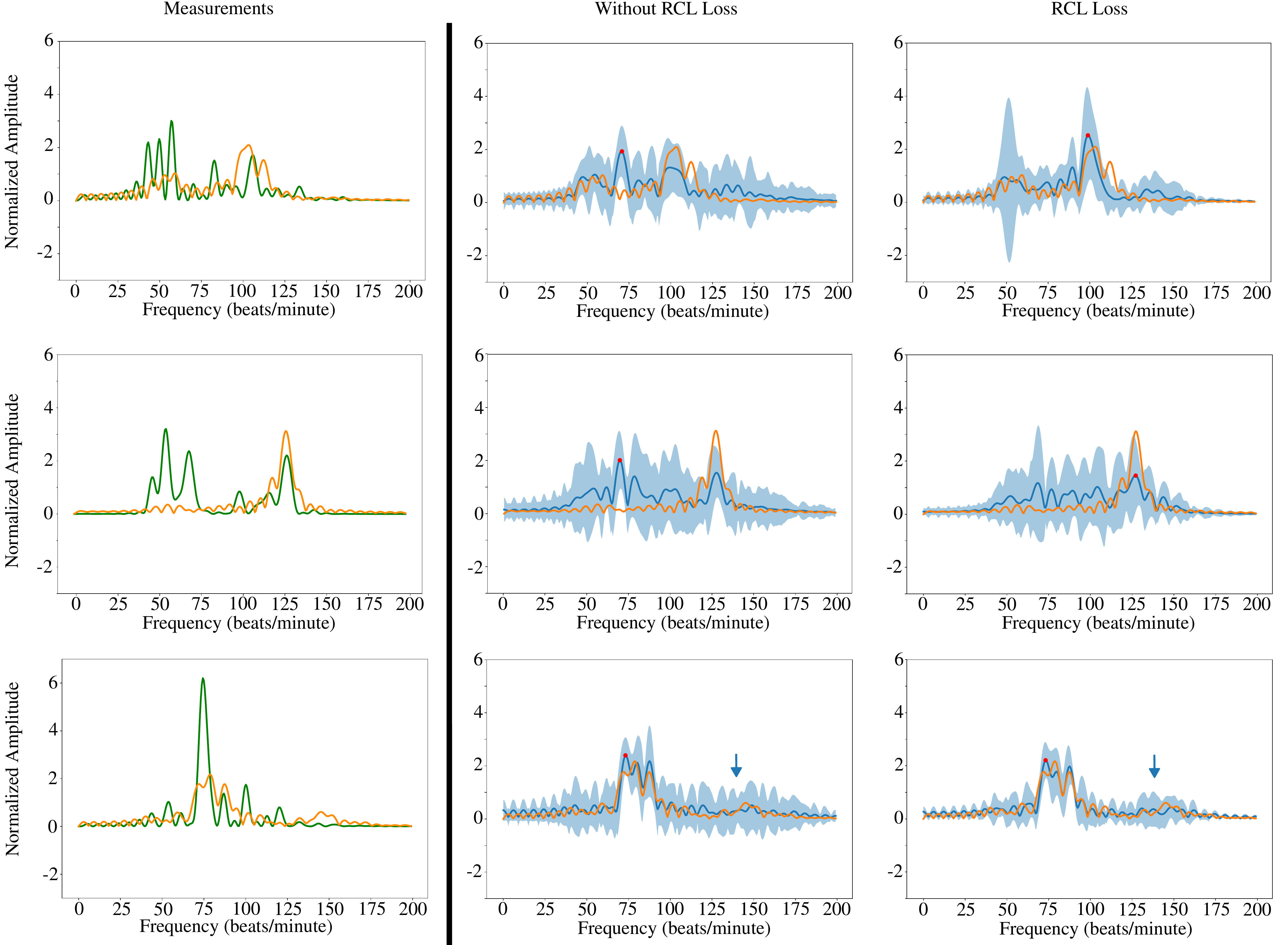}
    \caption{Qualitative results with and without the RCL loss. We plot the camera pixel measurements (green), ground-truth PPG (orange), the mean of 100 realizations of sampling (bolded blue), and  95\% confidence interval of the power in each bin (light blue). Our algorithm with the RCL loss is able to predict the modes of the distribution (first two rows), while also limiting uncertainty except in the frequency bins around the harmonic (light blue arrow, third row).}
    \label{fig:correlation_mmse}
\end{figure}

\begin{figure}
    \centering
    \includegraphics[width=0.99\textwidth]{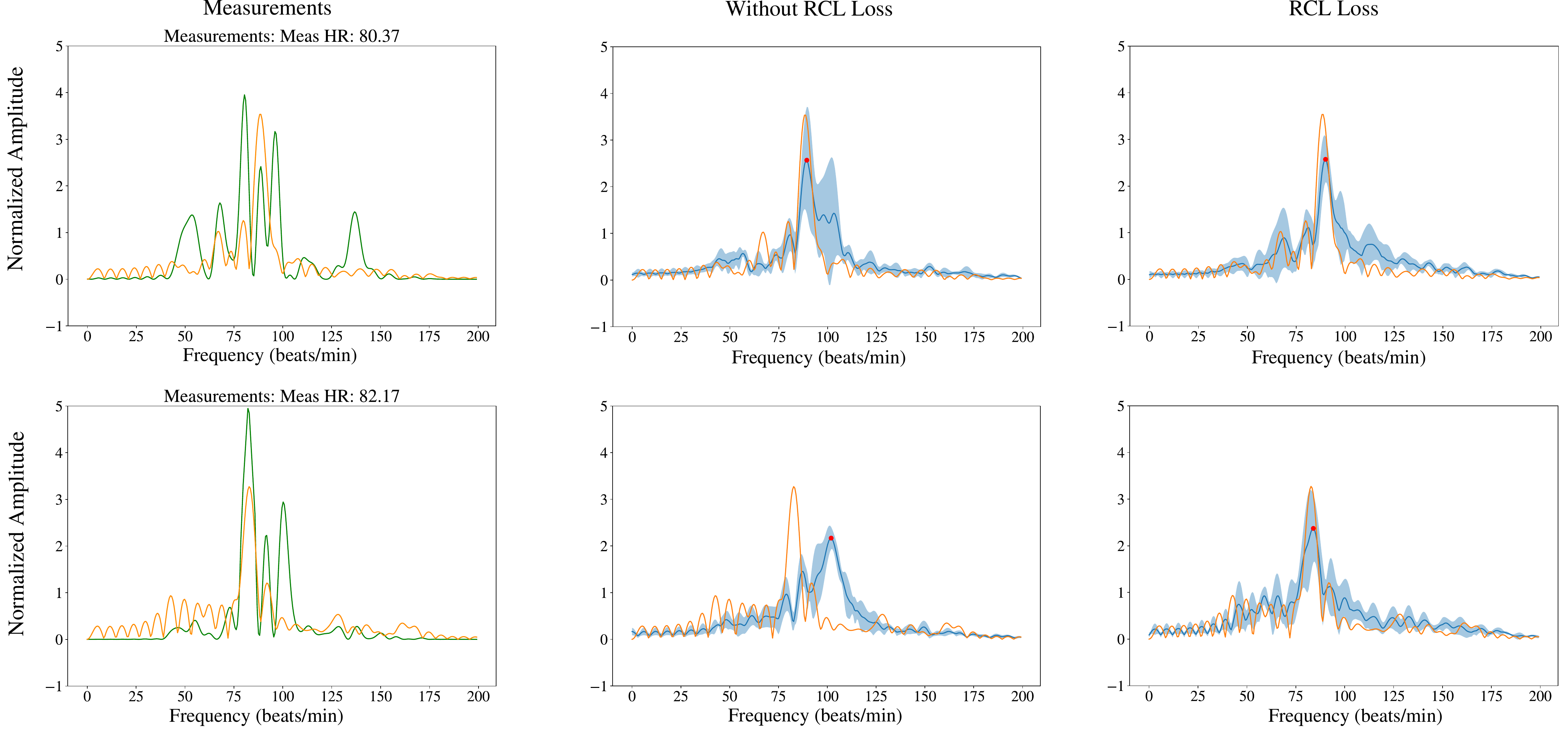}
    \caption{Results on the UBFC-rPPG dataset. The orange signals are the ground-truth, while the green signals are the measurements. The heavy blue signals are the means of 100 measurements. Models with regularization show significantly less variance in prediction.}
    \label{fig:correlation_ubfc}
\end{figure}

\begin{figure}
    \centering
    \includegraphics[width=0.99\textwidth]{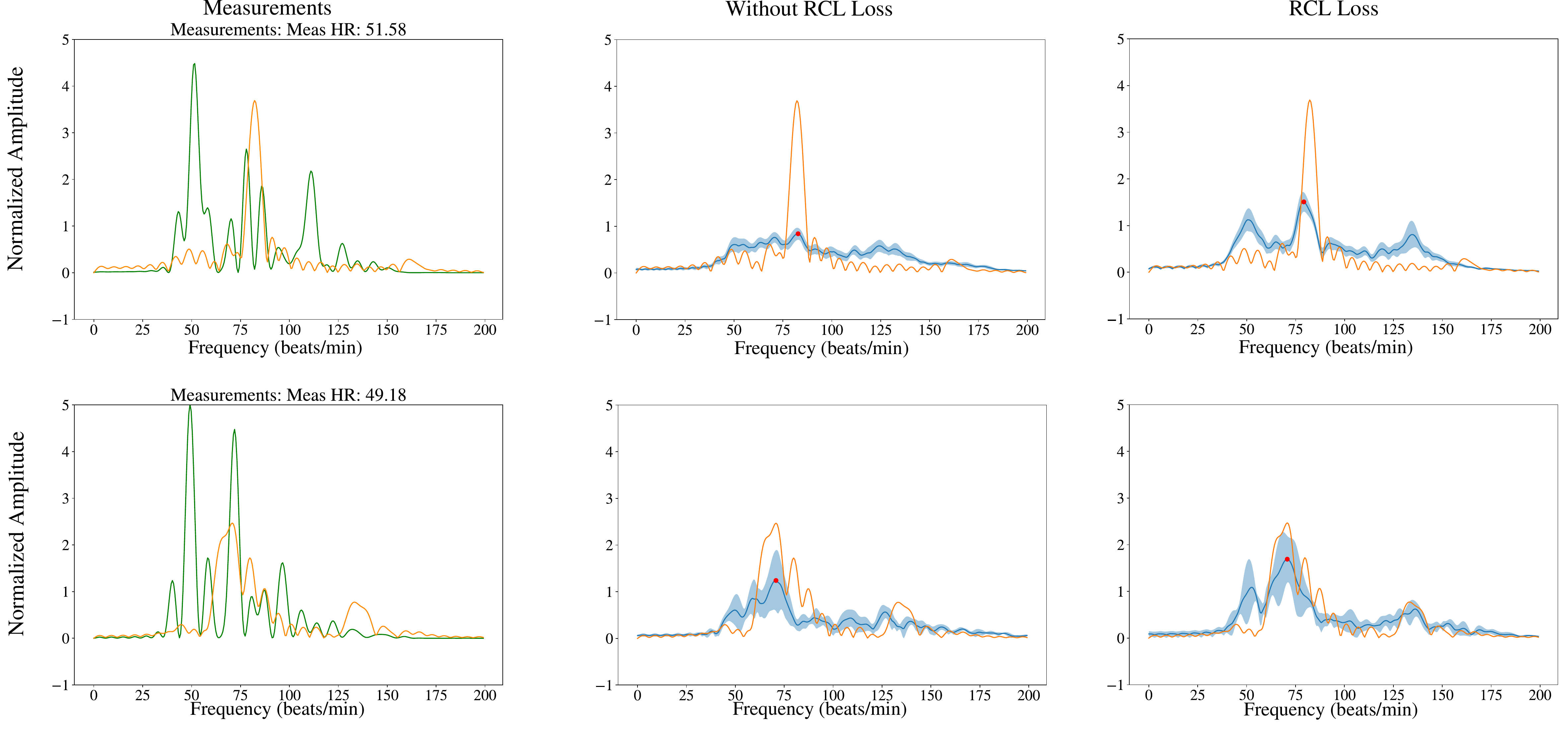}
    \caption{Results on the PURE dataset. The orange signals are the ground-truth, while the green signals are the measurements. The heavy blue signals are the means of 100 measurements. We notice similar performance with and without the RCL loss}
    \label{fig:correlation_pure}
\end{figure}

\subsubsection{Sample-Level Gauge R\&R Results}\label{sec:gauge_analysis}
While the previous metrics capture population level metrics, we revisit the quote from \cite{tonekaboni2019clinicians} regarding the need for tools that are ``repeatedly successful in prognosticating a patient's condition in [the doctor's] personal experience'', and provide tools to answer the question: can we trust the results? To answer this, we borrow techniques from industrial mathematics and perform an ANOVA Gauge repeatability and reproducibility (R\&R) test, which quantifies the amount of variability in the samples due to the measurement system itself, and determines whether the measurement system itself is acceptable. This test quantifies the precision of the system (as compared to accuracy, which is presented Section~\ref{subsec:results}). In traditional industrial mathematics, a gauge (for example, a lathe) drills multiple sized holes (parts) multiple times (samples) by a multiple people (operators), after which the diameter of the hole is measured. The diameters are compared against each other to quantify the variance in the process (lathe) and the measurement system themselves, after which metrics such as repeatability, reproducibility, part-to-part variation, and more are quantified to determine whether a system is acceptable.

\begin{figure}[htbp]
    \centering

    \includegraphics[width=0.8\textwidth]{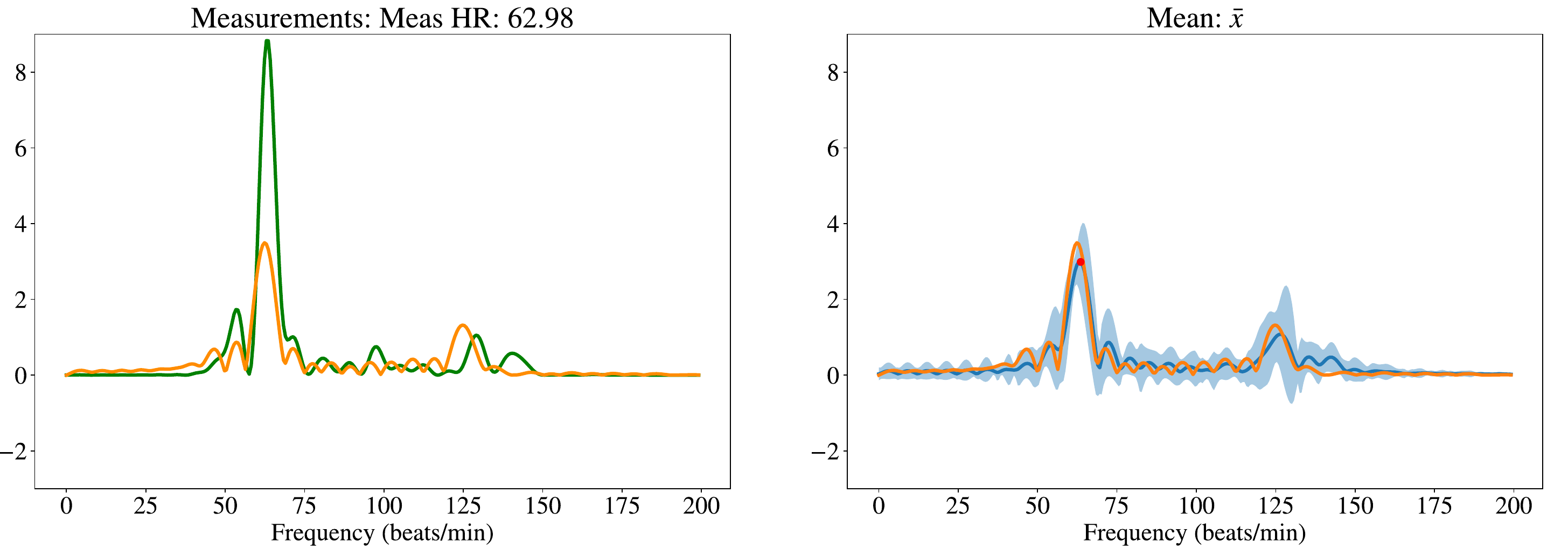} 
    \label{fig:graph}

    \vspace{0.5cm} 

    \begin{tabular}{lcc}
        \toprule
        Source & Variance & \% Variance \\
        \midrule
        Repeatability & 0.0047 & 1.7168 \\
        Reproducibility & 1.1868$\times10^{-4}$ & 0.0434 \\
        Operator & 1.1868$\times10^{-4}$ & 0.0434 \\
        Part & 0.2684 & 98.2398 \\
        \bottomrule
    \end{tabular}
    \label{fig:table}

    \caption{The measurements (green), ground-truth(orange), and mean signal and confidence intervals (heavy blue and light blue). The Table shows the Gauge R\&R test for precision.}
    \label{fig:r&r_analysis}
\end{figure}

We adopt this analysis by considering our ``gauge'' to be the stochastic algorithm, the ``parts'' to be the frequency bins below 200bpm of the magnitude of the Fourier transform of our solution, the ``samples'' to be the number of samples we generate from our SDE, and ``operators'' to be the different facial regions from which we measure signals for a single subject. Since this test operates on a single test example, we chose a sample which we know to be accurate (via heart rate absolute error), compute Gauge R\&R test, and display the results in Figure~\ref{fig:r&r_analysis}. We first state that our estimate is \textit{accurate}; the camera measurements themselves were close to the ground-truth, and the solution denoised this accurate spectrum. Our Gauge R\&R analysis then analyzed the precision of our system, and sources of variation. The table in Figure~\ref{fig:r&r_analysis} shows to which metric we can attribute the majority of our variation; clearly, the largest variation is from the part-to-part variation, while the smallest variation comes from repeatability (i.e. measuring the same frequency bin multiple times) versus the reproducibility (i.e. different facial regions agreeing on the power in the same frequency bin). Given that the largest variation is in the part-to-part model, we conclude that the precision of our system is sufficient.

\subsection{Reconstruction error vs HR error}\label{sec:appendix_nll_comp}

There are some scenarios in which the HR error and the reconstruction error diverge. We show an example in Figure~\ref{fig:nll_comp} in which the reconstruction error is better without the RCL loss, but the heart rate error is lower with the RCL loss. The model with the RCL loss produces higher error and standard deviation, even though on average the heart rate prediction is better.

We show a scenario in Figure~\ref{fig:appendix_error_plots} in which our normalized error is high even though our reconstruction is quite good. We plot our signal and reconstruction in the first row. In the bottom row we plot the absolute error between the mean signal and ground-truth, the standard deviation of the power measured across each frequency bin for all samples, the normalized error across each frequency bin, and the bins which have a normalized error greater and less than as in~\cite{sun2024provable}. We see significant normalized error because our signal is very confident in reconstruction (low standard deviation) relative to the absolute error, which causes the normalized error to explode (which happens similarly in the computation of Equation~\ref{eq:nll}). One of the reasons this happens is because of the ground-truth: the ground-truth signal is captured at the \textit{finger} while we reconstruct the signal from the \textit{face}. While these signals are very similar, they are not the same, resulting in error. In fact, we do not want to reconstruct the finger signal exactly---we want to reconstruct the signal from the face. Ideally we would capture ground-truth PPG signal from the face, but given the data collection constraints, collecting finger PPG is the best option.

\begin{figure}
    \centering
    \includegraphics[width=0.99\textwidth]{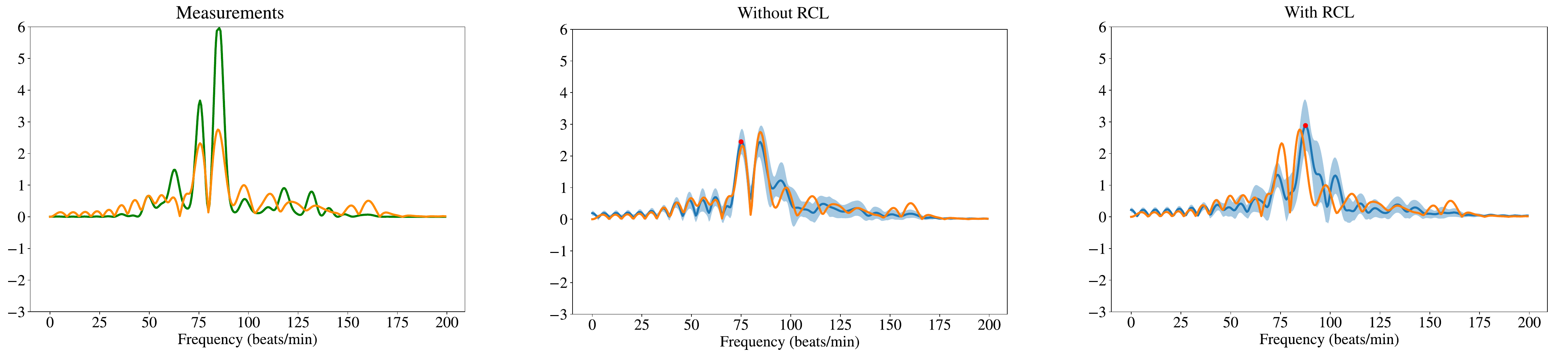}
        \caption{The reconstruction error is better without the RCL loss, but the heart rate estimation error is worse. This scenario is not necessarily uncommon. Without the RCL loss the network is overconfident in a wrong heart rate prediction.}
    \label{fig:nll_comp}
\end{figure}

\begin{figure}
    \centering
    \includegraphics[width=0.99\textwidth]{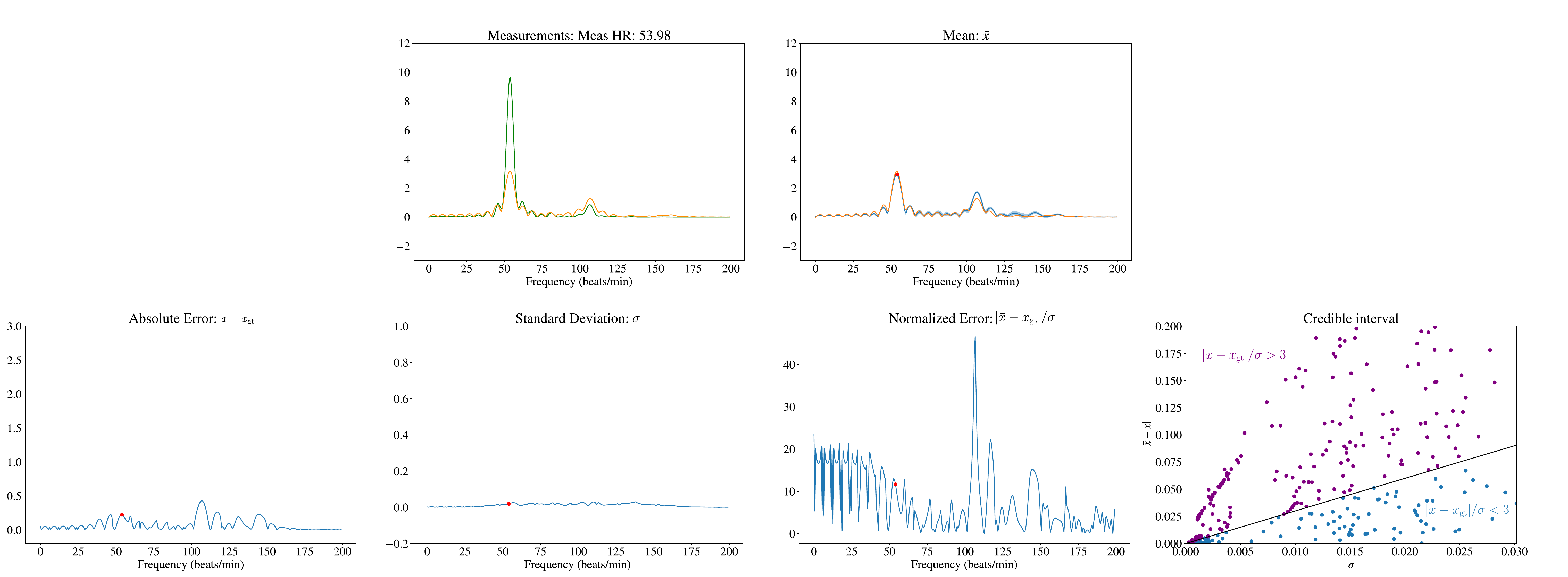}
    \caption{We plot the Measurements and reconstruction in the top row, and also plot the absolute error of the mean signal against the ground-truth and the standard deviation of the power in each frequency bin across all samples. We use these quantities to predict the normalized error, and plot the frequency bins in which the normalized error is greater and less than 3. Given that the ground-truth PPG signal is just an estimate of the pulse from face, we can see large normalized error during inference even though our reconstruction is nearly correct.}
    \label{fig:appendix_error_plots}
\end{figure}

\end{document}